
\documentclass{article}

\usepackage{graphicx}
\usepackage{subcaption}
\usepackage{booktabs}
\usepackage{bm}
\usepackage{multicol,multirow}
\usepackage{natbib}
\usepackage{comment}
\usepackage{booktabs}

\usepackage{enumerate}   
\usepackage{amsmath}
\usepackage{amsfonts}
\usepackage{tikz}
\usetikzlibrary{positioning,arrows}
\usepackage{wrapfig}

\DeclareMathOperator*{\argmin}{arg\,min}

\newcommand{\tabincell}[2]{\begin{tabular}{@{}#1@{}}#2\end{tabular}}

\setlength\textfloatsep{2truemm}

\usepackage{amsmath}
\usepackage{appendix}
\usepackage{amssymb}

\usepackage{amsthm}
\usepackage{hyperref}
\usepackage[capitalise]{cleveref}
\Crefname{assumption}{Assumption}{Assumptions}

\theoremstyle{plain}
\newtheorem{theorem}{Theorem}
\newtheorem{lemma}[theorem]{Lemma}
\newtheorem{corollary}[theorem]{Corollary}
\theoremstyle{definition}
\newtheorem{assumption}{Assumption}

\newtheorem{remark}{Remark}
\newtheorem{example}{Example}

\newcommand{\ch}{\mathcal{H}}

\newcommand{\E}{\mathbb{E}}

\newcommand{\conv}{\mathrm{conv}}
\newcommand{\rE}{\mathrm{E}}
\newcommand{\rvar}{\mathrm{var}}
\newcommand{\rD}{\mathrm{d}}
\newcommand{\rd}{\mathrm{d}}
\newcommand{\bbN}{\mathbb{N}}

\newcommand{\epol}{\pi_e}
\newcommand{\bpol}{\pi_b}
\newcommand{\pre}{d_0}

\newcommand{\vect}[1]{\ensuremath{\mathbf{#1}}}

\newcommand{\Rfrak}{\mathfrak{R}}
\newcommand{\Fcal}{\mathcal{F}}
\newcommand{\Scal}{\mathcal{S}}
\newcommand{\Acal}{\mathcal{A}}
\newcommand{\Xcal}{\mathcal{X}}
\newcommand{\Wcal}{\mathcal{W}}
\newcommand{\RR}{\mathbb{R}}
\newcommand{\Qcal}{\mathcal{Q}}
\newcommand{\Vcal}{\mathcal{V}}
\newcommand{\Gcal}{\mathcal{G}}
\newcommand{\Rcal}{\mathcal{R}}

\newcommand{\bzero}{\mathbf{0}}

\newcommand{\ts}{\textstyle}
\newcommand{\Lw}{L_{\mathrm{w}}}
\newcommand{\Lwn}{L_{\mathrm{w},n}}
\newcommand{\Lq}{L_{\mathrm{q}}}
\newcommand{\Lqn}{L_{\mathrm{q},n}}
\newcommand{\hatw}{\hat{w}}
\newcommand{\hatwn}{\hat{w}_n}
\newcommand{\hatq}{\hat{q}}
\newcommand{\hatqn}{\hat{q}_n}
\newcommand{\Rw}{R_{\mathrm{w}}}
\newcommand{\Rwn}{R_{\mathrm{w},n}}
\newcommand{\Rq}{R_{\mathrm{q}}}

\newcommand{\RN}[1]{%
  \textup{\uppercase\expandafter{\romannumeral#1}}%
}

\usepackage{color,graphicx,lscape,longtable}

\def\boxit#1{\vbox{\hrule\hbox{\vrule\kern6pt\vbox{\kern6pt#1\kern6pt}\kern6pt\vrule}\hrule}}

\newcommand{\para}[1]{\noindent \textbf{#1}~~}

\usepackage{xcolor}
\newcount\Comments  
\Comments=0 
\newcommand{\kibitz}[2]{\ifnum\Comments=1\textcolor{#1}{#2}\fi}
\newcommand{\nj}[1]{\kibitz{blue}{[NJ: #1]}}

\usepackage{microtype}
\usepackage{graphicx}
\usepackage{booktabs} 

\usepackage{hyperref}



\usepackage[accepted]{icml2020}

\icmltitlerunning{Minimax Weight and Q-Function Learning for Off-Policy Evaluation}

\begin{document}

\twocolumn[
\icmltitle{Minimax Weight and Q-Function Learning for Off-Policy Evaluation}



\icmlsetsymbol{equal}{*}

\begin{icmlauthorlist}
\icmlauthor{Masatoshi Uehara}{ha}
\icmlauthor{Jiawei Huang}{uiuc}
\icmlauthor{Nan Jiang}{uiuc}
\end{icmlauthorlist}

\icmlaffiliation{ha}{Harvard University, Massachusetts ,  Boston, USA}
\icmlaffiliation{uiuc}{University of Illinois at Urbana-Champaign, Champaign, Illinois, USA}

\icmlcorrespondingauthor{Masatoshi Uehara}{ueharamasatoshi136@gmail.com}

\icmlkeywords{reinforcement learning, off-policy evaluation}

\vskip 0.3in
]

\printAffiliationsAndNotice{}  

\begin{abstract}
We provide theoretical investigations into off-policy evaluation in reinforcement learning using function approximators for (marginalized) importance weights and value functions. Our contributions include:\\
(1) A new estimator, MWL, that directly estimates importance ratios over the state-action distributions, removing the reliance on knowledge of the behavior policy as in prior work  \citep{liu2018breaking}. \\
(2) Another new estimator, MQL, obtained by swapping the roles of importance weights and value-functions in MWL. MQL has an intuitive interpretation of minimizing \emph{average Bellman errors} and can be combined with MWL in a doubly robust manner. \\
(3) Several additional results that offer further insights, including the sample complexities of MWL and MQL, their asymptotic optimality in the tabular setting, how the learned importance weights depend the choice of the discriminator class, and how our methods provide a unified view of some old and new algorithms in RL.
\end{abstract}

\section{Introduction}
In reinforcement learning (RL), off-policy evaluation (OPE) refers to the problem of estimating the performance of a new policy using historical data collected from a different policy, which is of crucial importance to the real-world applications of RL. The problem is genuinely hard as any unbiased estimator has to suffer a variance exponential in horizon in the worst case \citep{Li2015, jiang2016doubly}, known as the \emph{curse of horizon}. 

Recently, a new family of estimators based on marginalized importance sampling (MIS) receive significant attention from the community \citep{liu2018breaking, XieTengyang2019OOEf}, as they overcome the curse of horizon with relatively mild representation assumptions. The basic idea is to learn the marginalized importance weight that converts the state distribution in the data to that induced by the target policy, which sometimes has much smaller variance than the importance weight on action sequences used by standard sequential IS. Among these works, \citet{liu2018breaking} 
learn the importance weights by solving a minimax optimization problem defined with the help of a discriminator value-function class. 

In this work, we investigate more deeply the space of algorithms that utilize a value-function class and an importance weight class for OPE. Our main contributions are:
\begin{enumerate}[\textbullet]
\item (Section~\ref{sec:mwl}) A new estimator, MWL, that directly estimates importance ratios over the state-action distributions, removing the reliance on knowledge of the behavior policy as in prior work  \citep{liu2018breaking}.
\item (Section~\ref{sec:mql}) By swapping the roles of importance weights and Q-functions in MWL, we obtain a new estimator that learns a Q-function using importance weights as discriminators. 
The procedure and the guarantees of MQL exhibit an interesting symmetry w.r.t.~MWL. We also combine MWL and MQL in a doubly robust manner and provide their sample complexity guarantees (Section~\ref{sec:dr}).
\item (Section~\ref{sec:optimal_main}) We examine the statistical efficiency of MWL and MQL, and show that by modeling state-action functions, MWL and MQL are able to achieve the semiparametric lower bound of OPE in the tabular setting while their state-function variants fail to do so. 
\item Our work provides a unified view of many old and new algorithms in RL. For example, when both importance weights and value functions are modeled using the same linear class, we recover LSTDQ \citep{LagoudakisMichail2004LPI} and off-policy LSTD \citep{BertsekasDimitriP2009Pemf, dann2014policy} as special cases of MWL/MQL and their state-function variants. This gives LSTD algorithms a novel interpretation that is very different from the standard TD intuition. As another example,  (tabular) model-based OPE and step-wise importance sampling---two algorithms that are so different that we seldom connect them to each other---are both special cases of MWL. 
\end{enumerate}

\section{Preliminaries} \label{sec:prelim}
An infinite-horizon discounted MDP is often specified by a tuple $(\Scal, \Acal, P, \Rcal, \gamma)$ where $\Scal$ is the state space, $\Acal$ is the action space, $P: \Scal\times\Acal\to\Delta(\Scal)$ is the transition function, $\Rcal: \Scal\times\Acal\to\Delta([0, R_{\max}])$ is the reward function, and $\gamma \in [0, 1)$ is the discount factor. We also use $\Xcal:=\Scal \times\Acal$ to denote the space of state-action pairs. Given an MDP, a (stochastic) policy $\pi: \Scal\to \Delta(\Acal)$ and a starting state distribution $\pre \in \Delta(\Scal)$ together determine a distribution over trajectories of the form 
$s_0, a_0, r_0, s_1, a_1, r_1, \ldots$, where $s_0 \sim \pre$, $a_t \sim \pi(s_t)$, $r_t \sim \Rcal(s_t, a_t)$, and $s_{t+1} \sim P(s_t, a_t)$ for $t \ge 0$. 
The ultimate measure of the a policy's performance is the (normalized) expected discounted return: 
\begin{align}\label{eq:return}
\textstyle R_{\pi}:=(1-\gamma)\rE_{\pre,\pi}\left[\sum_{t=0}^{\infty}\gamma^{t}r_t\right],
\end{align}
where the expectation is taken over the randomness of the trajectory determined by the initial distribution and the policy on the subscript, and $(1-\gamma)$ is the normalization factor. 

A concept central to this paper is the notion of (normalized) discounted occupancy: 
\begin{align*}\textstyle
d_{\pi, \gamma} := (1-\gamma) \sum_{t=0}^\infty \gamma^{t} d_{\pi, t},
\end{align*}
where $d_{\pi, t} \in \Delta(\Xcal)$ is the distribution of $(s_t, a_t)$ under policy $\pi$. (The dependence on $\pre$ is made implicit.) We will sometimes also write $s \sim d_{\pi, \gamma}$ for sampling from its marginal distribution over states. An important property of discounted occupancy, which we will make heavy use of, is
\begin{align} \label{eq:Rpi}
R_\pi = \rE_{(s,a) \sim d_{\pi, \gamma},\, r \sim \Rcal(s,a)}[r].
\end{align}
It will be useful to define the policy-specific Q-function: 
$$\textstyle
Q^\pi(s,a) := \rE[\sum_{t=0}^\infty \gamma^t r_t | s_0 = s, a_0 = a;\, a_t \sim \pi(s_t) ~ \forall t>0].
$$
The corresponding state-value function is $V^\pi(s) := Q^\pi(s, \pi)$, where for any function $f$, $f(s, \pi)$ is the shorthand for $\rE_{a\sim \pi(s)}[f(s,a)]$. 
\paragraph{Off-Policy Evaluation (OPE)}~
We are concerned with estimating the expected discounted return of an \emph{evaluation policy} $\epol$ under a given initial distribution $\pre$, using data collected from a different \emph{behavior policy} $\bpol$. For our methods, we will consider the following data generation protocol, where we have a dataset consisting of $n$ i.i.d.~tuples $(s,a,r,s')$ generated according to the distribution:
$$
s \sim d_{\bpol}, a \sim \bpol(s), r \sim \Rcal(s,a), s' \sim P(s,a). 
$$
Here $d_{\bpol}$ is some exploratory state distribution that well covers the state space,\footnote{Unlike \citet{liu2018breaking},  we do not need to assume that $d_{\bpol}$ is $\bpol$'s discounted occupancy; see Footnote~\ref{ft:dpib} in Appendix~\ref{sec:mswl} for the reason, where we also simplify \citet{liu2018breaking}'s loss so that it does not rely on this assumption.} and the technical assumptions required on this distribution will be discussed in later sections. With a slight abuse of notation we will also refer to the joint distribution over $(s,a,r,s')$ or its marginal on $(s,a)$ as $d_{\bpol}$, e.g., whenever we write $(s,a,r,s') \sim d_{\bpol}$ or $(s,a) \sim d_{\bpol}$, the variables are always distributed according to the above generative process. We will use $\rE[\cdot]$ to denote the exact expectation, and use $\rE_n[\cdot]$ as its empirical approximation using the $n$ data points.

\paragraph{On the i.i.d.~assumption} Although we assume i.i.d.~data for concreteness and the ease of exposition, the actual requirement on the data is much milder: 
our method works as long as the empirical expectation (over $n$ data points) concentrates around the exact expectation w.r.t.~$(s,a,r,s') \sim d_{\bpol}$ for \emph{some} $d_{\bpol}$.\footnote{We assume $a\sim \pi_b(s)$ throughout the paper since this is required by previous methods which we would like to compare to. However, most of our derivations do not require that the data is generated from a single behavior policy (which is a common characteristic of behavior-agnostic OPE methods).} This holds, for example, when the Markov chain induced by $\bpol$ is ergodic, and our data is a single long trajectory generated by $\bpol$ without resetting. As long as the induced chain mixes nicely, it is well known that the empirical expectation over the single trajectory will concentrate, and in this case $d_{\bpol}(s)$ corresponds to the stationary distribution of the Markov chain.\footnote{We consider precisely this setting in Appendix~\ref{app:noniid} to solidify the claim that we do not really need i.i.d.ness.}

\section{Overview of OPE Methods}
\label{sec:ope}

\begin{table*}[!]
\centering
\hspace*{-1em}\begin{tabular}{c|c|c|c|c}
& $\bpol$ known? & Target object & Func.~approx. &  \tabincell{c}{Tabular \\ optimality}\\ \hline
MSWL \citep{liu2018breaking} & Yes & $w_{\epol/\bpol}^{\Scal}$ (Eq.\eqref{eq:mis_state}) & $w_{\epol/\bpol}^{\Scal} \in \mathcal{W}^{\Scal}$, {\color{blue} $V^{\epol} \in \mathcal{F}^{\Scal}$} (*)  & No \\ \hline
MWL (Sec~\ref{sec:mwl}) & No & $w_{\epol/\bpol}$   (Eq.\eqref{eq:mis_sa}) & $w_{\epol/\bpol} \in \mathcal{W}$, {\color{blue} $Q^{\epol} \in \conv(\mathcal{F})$}  & Yes \\ \hline
MQL (Sec~\ref{sec:mql}) & No &  $Q^{\epol}$ & $Q^{\epol} \in \mathcal{Q}$,~~  {\color{blue}$w_{\epol/\bpol} \in \conv(\mathcal{G})$}  & Yes \\ \hline
Fitted-Q & No & $Q^{\epol}$ & $\mathcal{Q}$ closed under $B^{\epol}$  & Yes 
\end{tabular}

\caption{Summary of some of the OPE Methods. For methods that require knowledge of $\bpol$, the policy can be estimated from data to form a ``plug-in'' estimator \citep[e.g.,][]{HannaJosiahP.2018ISPE}. 
In the function approximation column, we use blue color to mark the conditions for the discriminator classes for minimax-style methods. For \citet{liu2018breaking}, we use $\Wcal^\Scal$ and $\Fcal^\Scal$ for the function classes to emphasize that their functions are over the state space (ours are over the state-action space). Although they assumed $V^{\epol} \in \Fcal^\Scal$ (*), this assumption can also be relaxed to $V^{\epol} \in \conv(\Fcal^\Scal)$ as in our analyses. Also note that the assumption for the main function classes ($\Wcal^\Scal$, $\Wcal$, and $\Qcal$) can be relaxed as discussed in Examples~\ref{exm:oracle_mwl} and \ref{exm:oracle_mql}, and we put realizability conditions here only for simplicity. \label{tab:main}}
\end{table*}


\paragraph{Direct Methods} A straightforward approach to OPE is to estimate an MDP model from data, and then compute the quantity of interest from the estimated model. An alternative but closely related approach is to fit $Q^{\epol}$ directly from data using standard approximate dynamic programming (ADP) techniques, e.g., the policy evaluation analog of Fitted Q-Iteration \citep{ernst2005tree, le2019batch}. While these methods overcome the curse of dimensionality and are agnostic to the knowledge of $\bpol$, they often require very strong representation assumptions to succeed: for example, in the case of fitting a Q-value function from data, not only one needs to assume  realizability, that the Q-function class (approximately) captures $Q^{\epol}$, but the class also needs to be closed under Bellman update $B^{\epol}$ \citep{antos2008learning}, otherwise ADP can diverge in discounted problems \citep{tsitsiklis1997analysis} or suffer exponential sample complexity in finite-horizon problems \citep[Theorem 45]{Dan2018}; we refer the readers to \citet{ChenJinglin2019ICiB} for further discussions on this condition. When the function approximator fails to satisfy these strong assumptions, the estimator can potentially incur a high bias. 

\paragraph{Importance Sampling (IS)} IS forms an unbiased estimate of the expected return by collecting full-trajectory behavioral data and reweighting each trajectory according to its likelihood under $\epol$ over $\bpol$ \citep{precup2000eligibility}. Such a ratio can be computed as the cumulative product of the importance weight over action ($\frac{\epol(a|s)}{\bpol(a|s)}$) for each time step, which is the cause of high variance in IS: even if $\epol$ and $\bpol$ only has constant divergence per step, the divergence will be amplified over the horizon, causing the cumulative importance weight to have exponential variance, thus the ``curse of horizon''. Although techniques that combine IS and direct methods can partially reduce the variance, the exponential variance of IS simply cannot be improved when the MDP has significant stochasticity \citep{jiang2016doubly}. 

\paragraph{Marginalized Importance Sampling (MIS)} MIS improves over IS by observing that, if $\bpol$ and $\epol$ induces marginal distributions over states that have substantial overlap---which is often the case in many practical scenarios---then reweighting the reward $r$ in each data point $(s,a,r,s')$ with the following ratio
\begin{align} \label{eq:mis_state}
w^{\Scal}_{\epol/\bpol}(s) \cdot \frac{\epol(a|s)}{\bpol(a|s)}, ~\text{where}~~ w^{\Scal}_{\epol/\bpol}(s):=\frac{d_{\epol, \gamma}(s)}{d_{\bpol}(s)}
\end{align}
can potentially have much lower variance than reweighting the entire trajectory \citep{liu2018breaking}. 
The difference between IS and MIS is essentially performing importance sampling using Eq.\eqref{eq:return} vs.~Eq.\eqref{eq:Rpi}. However, the weight $w^{\Scal}_{\epol/\bpol}$ is not directly available and has to be estimated from data. \citet{liu2018breaking} proposes an estimation procedure that requires two function approximators, one for modeling the weighting function $w^{\Scal}_{\epol/\bpol}(s)$, and the other for modeling $V^{\bpol}$ which is used as a discriminator class for distribution learning. Compared to the direct methods, MIS only requires standard realizability conditions for the two function classes, though it also needs the knowledge of $\bpol$. A related method for finite horizon problems has been developed by \citet{XieTengyang2019OOEf}. 


\section{Minimax Weight Learning (MWL)} \label{sec:mwl}

In this section we propose a simple extension to \citet{liu2018breaking} that is agnostic to the knowledge of $\bpol$. The estimator in the prior work uses a discriminator class that contains $V^{\epol}$ to learn the marginalized importance weight on state distributions (see Eq.\eqref{eq:mis_state}). We show that as long as the discriminator class is slightly more powerful---in particular, it is a Q-function class that realizes $Q^{\epol}$---then we are able to learn the importance weight over state-action pairs directly:
\begin{align} \label{eq:mis_sa}
w_{\epol/\bpol}(s,a) :=\frac{d_{\epol, \gamma}(s, a)}{d_{\bpol}(s, a)}.
\end{align}
We can use it to directly re-weight the rewards without having to know $\bpol$, as $R_{\epol} = \Rw[w_{\epol/\bpol}] := \rE_{\bpol}[w_{\epol/\bpol}(s,a) \cdot r].$ 
It will be also useful to define $\Rwn[w] := \rE_{n}[w(s,a) \cdot r]$ as the empirical approximation of $\Rw[\cdot]$ based on $n$ data points.

Before giving the estimator and its theoretical properties, we start with two assumptions that we will use throughout the paper, most notably that the state-action distribution in data well covers the discounted occupancy induced by $\bpol$.

\begin{assumption}
Assume $\mathcal{X}=\mathcal{S}\times\mathcal{A}$ is a compact space. Let $\nu$ be its Lebesgue measure. \footnote{When $\nu$ is the counting measure for finite $\Xcal$, all the results hold with minor modifications.}
\end{assumption}

\begin{assumption} \label{asm:Cw}
There exists $C_w < +\infty$ such that $w_{\epol/\bpol}(s,a) \le C_{w}$ $\forall (s,a) \in \Xcal$. 
\end{assumption}

In the rest of this section, we derive the new estimator and provide its theoretical guarantee. Our derivation (Eqs.\eqref{eq:mwl_drv}--\eqref{eq:mwl_loss}) provides the  high-level intuitions for the method while only invoking basic and familiar concepts in MDPs (essentially, just Bellman equations). The estimator of \citet{liu2018breaking} can be also derived in a similar manner. 

\paragraph{Derivation}
Recall that it suffices to learn $w: \Xcal\to\RR$ such that $\Rw[w] = R_{\epol}$. This is equivalent to
\begin{align} \label{eq:mwl_drv}
\E_{\bpol}[w(s,a) \cdot r] = (1-\gamma)\E_{s\sim \pre}[Q^{\epol}(s, \epol)].
\end{align}
By Bellman equation, we have $\E[r|s,a] = \E[Q^{\epol}(s,a) - \gamma Q^{\epol}(s', \epol) |s,a]$. We use the RHS to replace $r$ in Eq.\eqref{eq:mwl_drv}, 
\begin{align} \label{eq:mwl_drv2}
\E_{\bpol}[w(s,a) \cdot (Q^{\epol}(s,a) - \gamma Q^{\epol}(s', \epol))] \nonumber\\ =  (1-\gamma)\E_{s\sim \pre}[Q^{\epol}(s, \epol)].
\end{align}
To recap, it suffices to find any $w$ that satisfies the above equation. Since we do not know $Q^{\epol}$, we will use a function class $\Fcal$ that (hopefully) captures $Q^{\epol}$, and find $w$ that minimizes (the absolute value of) the following objective function that measures the violation of Eq.\eqref{eq:mwl_drv2} over all $f\in\Fcal$: 
\begin{align} \label{eq:mwl_loss}
\Lw(w, f) := \rE_{(s,a,r,s') \sim d_{\bpol}}[\{\gamma w(s,a)  \cdot f(s', \epol) \\-w(s,a)f(s,a)\}]+ 
(1-\gamma)\rE_{s \sim \pre}[f(s,\epol)].
\end{align}
The loss is always zero when $w = w_{\epol/\bpol}$, so adding functions to $\Fcal$ does not hurt its validity. Such a solution is also unique if we require $\Lw(w,f) = 0$ for a rich set of functions and $d_{\bpol}$ is supported on the entire $\Xcal$, formalized as the following lemma:\footnote{All proofs of this paper can be found in the appendices.} 
\begin{lemma}\label{lem:ratio-estimation-informal}
$\Lw(w_{\epol/\bpol},f)=0$ $\forall f \in L^2(\mathcal{X},\nu):=\{f: \int f(s,a)^2\mathrm{d}\mathrm{\nu}<\infty \}$. Moreover, under additional technical assumptions,\footnote{As we will see, the identifiability of $w_{\epol/\bpol}$ is not crucial to the OPE goal, so we defer the technical assumptions to a formal version of the lemma in Appendix~\ref{app:mwl}, 
where we also show that the same statement holds when $\Fcal$ is an ISPD kernel; see Theorem~\ref{thm:key2}.} $w_{\epol/\bpol}$ is the only function that satisfies this.
\end{lemma}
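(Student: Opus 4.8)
The plan is to establish the two halves of the lemma separately: first the easy ``zero-loss'' direction, then the identifiability.

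\textbf{Step 1: $\Lw(w_{\epol/\bpol}, f) = 0$ for all $f \in L^2(\Xcal, \nu)$.} I would verify this by a direct change-of-measure computation. Substituting $w = w_{\epol/\bpol}$ and expanding the definition of $d_{\bpol}$-expectations, the first two terms of $\Lw$ become $\rE_{(s,a)\sim d_{\epol,\gamma}}[\gamma f(s',\epol) - f(s,a)]$ after the weight cancels the Radon--Nikodym factor (using $s' \sim P(s,a)$). Then I would use the defining flow/Bellman-flow identity for the discounted occupancy --- concretely, that for any bounded $g$, $\rE_{(s,a)\sim d_{\epol,\gamma}}[\gamma\, \rE_{s'\sim P(s,a)}[g(s',\epol)]] = \rE_{(s,a)\sim d_{\epol,\gamma}}[g(s,a)] - (1-\gamma)\rE_{s\sim \pre}[g(s,\epol)]$, which is just the recursion $d_{\epol,\gamma} = (1-\gamma)\pre\pi_e + \gamma P^{\pi_e\top} d_{\epol,\gamma}$ tested against $g$. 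Applying this with $g=f$ exactly cancels the remaining term $(1-\gamma)\rE_{s\sim\pre}[f(s,\epol)]$, giving zero. A small technical point is that $f \in L^2(\Xcal,\nu)$ together with Assumption~\ref{asm:Cw} (boundedness of $w_{\epol/\bpol}$) and compactness of $\Xcal$ ensures all the integrals are finite and Fubini applies, so the manipulations are legitimate.

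\textbf{Step 2: Uniqueness.} Suppose $w$ also satisfies $\Lw(w,f)=0$ for all $f \in L^2(\Xcal,\nu)$. Subtracting from the $w_{\epol/\bpol}$ equation, the inhomogeneous term $(1-\gamma)\rE_{s\sim\pre}[f(s,\epol)]$ drops out, so $\Delta := w - w_{\epol/\bpol}$ satisfies $\rE_{(s,a)\sim d_{\bpol}}[\Delta(s,a)(\gamma f(s',\epol) - f(s,a))] = 0$ for all $f$. Rewriting the $s'$-term by pushing the conditional expectation inside, this says $\int \Delta(s,a)\,d_{\bpol}(s,a)\,\big(\gamma (P^{\pi_e} f)(s,a) - f(s,a)\big)\,\rD\nu = 0$, i.e.\ the function $(s,a)\mapsto \Delta(s,a) d_{\bpol}(s,a)$ is $\nu$-a.e.\ orthogonal to the range of the operator $(\gamma P^{\pi_e} - I)$ acting on $L^2$. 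Since $\gamma < 1$ and $P^{\pi_e}$ is a contraction (norm $\le 1$), $(I - \gamma P^{\pi_e})$ is boundedly invertible on $L^2(\Xcal,\nu)$ (Neumann series), hence surjective; the only function orthogonal to all of $L^2$ is $0$. Therefore $\Delta(s,a)\,d_{\bpol}(s,a) = 0$ $\nu$-a.e., and since $d_{\bpol}$ is assumed supported on all of $\Xcal$ (the ``$d_{\bpol}$ supported on the entire $\Xcal$'' hypothesis), $\Delta = 0$ $\nu$-a.e., giving $w = w_{\epol/\bpol}$.

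\textbf{Main obstacle.} The substantive difficulty is making Step 2 rigorous in the continuous-space setting: one must ensure $P^{\pi_e}$ genuinely maps $L^2(\Xcal,\nu)$ into itself (this is where an ``additional technical assumption'' on the transition kernel --- e.g.\ that $P$ has a density w.r.t.\ $\nu$ that is bounded, or some absolute-continuity/smoothing condition --- is needed, which is presumably why the formal version is deferred to the appendix), and that $d_{\bpol}$ has a density bounded away from issues so the a.e.\ conclusion transfers to $\Delta$. The alternative route mentioned in the footnote --- replacing ``rich $L^2$ class'' with an ISPD (integrally strictly positive definite) kernel class --- would instead use the fact that for such kernels the only signed measure with zero mean embedding is the zero measure, applied to the signed measure $\Delta \, d_{\bpol}$; I expect the authors prove that variant (Theorem~\ref{thm:key2}) rather than the bare $L^2$ statement. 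The zero-loss direction (Step 1), by contrast, is routine once the discounted-occupancy recursion is invoked.
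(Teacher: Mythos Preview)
Your Step~1 is correct and is exactly what the paper does: it proves the Bellman-flow recursion for $d_{\epol,\gamma}$ (their Lemma~\ref{lem:characterization}) and then substitutes.

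Your Step~2 is correct in spirit but takes the \emph{dual} route to the paper's, and the difference matters for where the ``additional technical assumption'' sits. The paper first moves the operator off of $f$: it rewrites
\[
\Lw(w,f)=\int \delta\big(w\,d_{\bpol};\,s',a'\big)\,f(s',a')\,\rD\nu(s',a'),
\]
where $\delta(g;\cdot)=\gamma (P^{\pi_e})^{*}g-g+(1-\gamma)\pre\,\epol$ is the flow operator acting on $g$. Then ``$\Lw(w,f)=0$ for all $f\in L^2$'' is literally ``$\delta(w\,d_{\bpol})$ is orthogonal to all of $L^2$,'' which is trivially $\delta(w\,d_{\bpol})=0$ with no operator-theoretic hypothesis at all. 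The only technical assumption the paper then needs is (b): that $\delta(g)=0$ has the unique solution $g=d_{\epol,\gamma}$, i.e.\ injectivity of the linear part $I-\gamma (P^{\pi_e})^{*}$. You instead keep the operator on the $f$-side and need surjectivity (or dense range) of $I-\gamma P^{\pi_e}$ on $L^2(\Xcal,\nu)$, which you argue via Neumann series. The two conditions are dual, but your route asks for strictly more: you need $P^{\pi_e}$ to be bounded on $L^2(\Xcal,\nu)$ with norm $\le 1$ (or at least $<1/\gamma$), and that is \emph{not} automatic when $\nu$ is Lebesgue rather than an invariant measure --- Markov kernels are contractions on $L^\infty$ and on $L^2$ of their stationary distribution, but not on $L^2$ of an arbitrary reference measure. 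Your ``Main obstacle'' paragraph flags that $P^{\pi_e}$ must map $L^2$ to itself, but boundedness alone does not give the contraction your Neumann-series line asserts. The paper's adjoint trick sidesteps this entirely and isolates the minimal hypothesis.

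One minor correction on the ISPD remark: the paper's Theorem~\ref{thm:key2} does not apply the characteristic-kernel property to the ``signed measure $\Delta\cdot d_{\bpol}$.'' It uses Mercer's theorem to extract an $L^2$-orthonormal basis $\{\phi_j\}$ from the ISPD kernel, tests against each $\phi_j$ to conclude $\Lw(w,f)=0$ for all $f\in L^2$, and then invokes the $L^2$ uniqueness lemma already proved. So the ISPD step is a richness-of-discriminator argument, not a mean-embedding-of-$\Delta$ argument.
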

This motivates the following estimator, which uses two function classes: a class $\Wcal: \Xcal\to\RR$ to model the $w_{\epol/\bpol}$ function, and another class $\Fcal: \Xcal\to\RR$ to serve as the discriminators:
\begin{align}\label{eq:minimax}
\hatw(s,a)=\argmin_{w \in \mathcal{W}}\max_{f\in \mathcal{F}}\Lw(w,f)^{2}. 
\end{align}
Note that this is the ideal estimator that assumes exact expectations (or equivalently, infinite amount of data). In reality, we will only have access to a finite sample, and the real estimator replaces $\Lw(w, f)$ with its sample-based estimation, defined as
\begin{align} \label{eq:minimax_n}
\Lwn(w,f) & := \rE_{n}[\{\gamma w(s,a)  f(s',\epol)-w(s,a)f(s,a)\}] \nonumber \\
& \qquad +(1-\gamma)\rE_{\pre}[f(s,\epol)].
\end{align}
 So the sample-based estimator is
$\hatwn(s,a) := \argmin_{w \in \mathcal{W}}\max_{f\in \mathcal{F}}\Lwn(w,f)^{2}.$ 
We call this estimation procedure MWL (minimax weight learning), and provide its main theorem below. 


\begin{theorem}\label{thm:ipw}
	For any given $w: \Xcal \to \RR$, define $\Rw[w]=\rE_{d_{\bpol}}[w(s,a)\cdot r]$. If $Q^{\epol} \in \conv(\Fcal)$, where $\conv(\cdot)$ denotes the convex hull of a function class, 
	\begin{align*}
&	|R_{\epol}-\Rw[w]|  \leq \max_{f\in \mathcal{F}} |\Lw(w,f)|, \\
&	|R_{\epol}-\Rw[\hatw]|  \leq \min_{w\in\mathcal{W}} \max_{f\in \mathcal{F}} |\Lw(w,f)|. 
	\end{align*}
\end{theorem}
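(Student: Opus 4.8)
The plan is to establish the first inequality directly by exhibiting $R_{\epol} - \Rw[w]$ as a value of the bilinear functional $\Lw(w,\cdot)$ (up to the convex-hull caveat), and then deduce the second inequality by plugging in $\hatw$ and using the definition of the minimax estimator. The key observation is that the derivation leading to Eq.~\eqref{eq:mwl_drv2} is reversible: for the \emph{specific} discriminator $f = Q^{\epol}$, the quantity $\Lw(w, Q^{\epol})$ is \emph{exactly} $(1-\gamma)\E_{s\sim\pre}[Q^{\epol}(s,\epol)] - \E_{\bpol}[w(s,a)\cdot r] = R_{\epol} - \Rw[w]$. To see this, start from $\Lw(w, Q^{\epol}) = \E_{\bpol}[\gamma w(s,a) Q^{\epol}(s',\epol) - w(s,a) Q^{\epol}(s,a)] + (1-\gamma)\E_{\pre}[Q^{\epol}(s,\epol)]$, then apply the tower rule together with the Bellman equation $\E[Q^{\epol}(s,a) - \gamma Q^{\epol}(s',\epol) \mid s,a] = \E[r\mid s,a]$ to collapse the first two terms into $-\E_{\bpol}[w(s,a)\cdot r] = -\Rw[w]$. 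Recalling that $R_{\epol} = (1-\gamma)\E_{s\sim\pre}[Q^{\epol}(s,\epol)]$ (this is just Eq.~\eqref{eq:Rpi} rewritten via the Q-function) gives $\Lw(w,Q^{\epol}) = R_{\epol} - \Rw[w]$, hence $|R_{\epol} - \Rw[w]| = |\Lw(w,Q^{\epol})|$.

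The remaining subtlety is that we only assume $Q^{\epol} \in \conv(\Fcal)$, not $Q^{\epol} \in \Fcal$, so $Q^{\epol}$ itself may not be an admissible discriminator and we cannot bound $|\Lw(w,Q^{\epol})|$ by $\max_{f\in\Fcal}|\Lw(w,f)|$ immediately. Here I would use that $f \mapsto \Lw(w,f)$ is an affine (in fact linear, since the three terms are each linear in $f$) functional of $f$. Writing $Q^{\epol} = \sum_i \alpha_i f_i$ with $\alpha_i \ge 0$, $\sum_i \alpha_i = 1$, $f_i \in \Fcal$ (or the integral analog for a general convex hull), linearity gives $\Lw(w, Q^{\epol}) = \sum_i \alpha_i \Lw(w, f_i)$, so by the triangle inequality and convexity, $|\Lw(w,Q^{\epol})| \le \sum_i \alpha_i |\Lw(w,f_i)| \le \max_i |\Lw(w,f_i)| \le \sup_{f\in\Fcal}|\Lw(w,f)|$. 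Combining with the previous paragraph yields $|R_{\epol} - \Rw[w]| \le \max_{f\in\Fcal}|\Lw(w,f)|$, the first claim.

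For the second claim, apply the first inequality with $w = \hatw$: $|R_{\epol} - \Rw[\hatw]| \le \max_{f\in\Fcal}|\Lw(\hatw,f)|$. Since $\hatw = \argmin_{w\in\Wcal}\max_{f\in\Fcal}\Lw(w,f)^2$ and $x\mapsto x^2$ is monotone on $[0,\infty)$, minimizing $\max_f \Lw(w,f)^2$ is the same as minimizing $\max_f |\Lw(w,f)|$, so $\max_{f\in\Fcal}|\Lw(\hatw,f)| = \min_{w\in\Wcal}\max_{f\in\Fcal}|\Lw(w,f)|$, giving the stated bound. I expect the only real point requiring care is the convex-hull argument — making sure linearity of $\Lw(w,\cdot)$ is invoked cleanly and that the argument extends from finite convex combinations to the closure/integral form of $\conv(\Fcal)$ if that is the intended meaning (a limiting argument, using continuity of $\Lw(w,\cdot)$, handles this); everything else is a direct substitution using the Bellman equation and the definitions.
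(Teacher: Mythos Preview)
Your proposal is correct and follows essentially the same route as the paper: both establish the key identity $\Lw(w,Q^{\epol}) = R_{\epol} - \Rw[w]$ via the Bellman equation, then use linearity of $\Lw(w,\cdot)$ in $f$ to pass from $\conv(\Fcal)$ to $\Fcal$, and finally invoke the definition of $\hatw$. The only cosmetic difference is that the paper first rewrites $\Lw(w,f)$ as $\rE_{d_{\bpol}}[(w_{\epol/\bpol}-w)\,\Pi f]$ (their Lemmas~\ref{lem:loss_1}--\ref{lem:loss_2}) before specializing to $f=Q^{\epol}$, whereas you plug in $Q^{\epol}$ directly; your shortcut is perfectly fine here.
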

A few comments are in order:
\begin{enumerate}[1.]
\item To guarantee that the estimation is accurate, all we need is $Q^{\epol} \in \conv(\Fcal)$, and $\min_{w}\max_{f}|\Lw(w, f)|$ is small. While the latter can be guaranteed by realizability of $\Wcal$, i.e., $w_{\epol/\bpol} \in \Wcal$, we show in an example below that realizability is sufficient but not always necessary: in the extreme case where $\Fcal$ only contains $Q^{\epol}$, even a constant $w$ function can satisfy $\max_{f}|\Lw(w,f)| = 0$ and hence provide accurate OPE estimation.

\begin{example}[Realizability of $\Wcal$ can be relaxed] \label{exm:oracle_mwl}
When $\mathcal{F}= \{Q^{\epol}\}$, as long as $w_0 \in \Wcal$ where $w_0$ is a constant function that always evaluates to $R_{\epol}/R_{\bpol}$, we have $\Rw[\hatw] = R_{\epol}$. See Appendix~\ref{app:oracle_mwl} for a detailed proof.
\end{example}
\item For the condition that $Q^{\epol} \in \conv(\Fcal)$, we can further relax the convex hull to the linear span, though we will need to pay the $\ell_1$ norm of the combination coefficients in the later sample complexity analysis. It is also straightforward to incorporate approximation errors (see Remark~\ref{rem:approx} in Appendix~\ref{app:mwl}) and we do not further consider these relaxations for simplicity. 

\item Although Eq.\eqref{eq:minimax} uses $\Lw(w,f)^2$ in the objective function, the square is mostly for optimization convenience and is not vital in determining the statistical properties of the estimator. In later sample complexity analysis, it will be much more convenient to work with the equivalent objective function that uses $|\Lw(w,f)|$ instead. 
\item When the behavior policy $\bpol$ is known, we can incorporate this knowledge by setting $\Wcal=\{s \mapsto w(s)\frac{\epol(a|s)}{\bpol(a|s)}: w \in \Wcal^{\Scal} \}$, where $\Wcal^{\Scal}$ is some function class over the state space. The resulting estimator is still different from \cite{liu2018breaking} since our discriminator class is still over the state-action space. 
\end{enumerate}



\subsection{Case Studies}
The estimator in Eq.\eqref{eq:minimax_n} requires solving a minimax optimization problem, which can be computationally challenging. Following \citet{liu2018breaking} we show that the inner maximization has a closed form solution when we choose $\Fcal$ to correspond to a reproducing kernel Hilbert space (RKHS) $\ch_K$ be a RKHS associated with kernel $K(\cdot,\cdot)$. We include an informal statement below and defer the detailed expression to Appendix~\ref{app:mwl_rkhs} due to space limit.

\begin{lemma}[Informal] \label{lem:mwl_rkhs}
When $\mathcal{F}=\{f \in (\Xcal \to \RR): \langle f,f\rangle_{\ch_K} \leq 1\}$, the term $\max_{f\in \mathcal{F}}\Lw(w,f)^{2}$ has a closed form expression. 
\end{lemma}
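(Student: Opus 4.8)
The plan is to exploit the fact that $\Lw(w,\cdot)$ is a \emph{linear} functional of $f$, so that maximizing $\Lw(w,f)^{2}$ over the unit ball $\Fcal=\{f:\langle f,f\rangle_{\ch_K}\le 1\}$ reduces, via Cauchy--Schwarz, to computing the squared RKHS norm of the Riesz representer of this functional.

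First I would rewrite $\Lw(w,f)$ using the reproducing property $f(x)=\langle f,K(x,\cdot)\rangle_{\ch_K}$. Writing $K((s,\epol),\cdot):=\rE_{a\sim\epol(s)}[K((s,a),\cdot)]$ for the policy-averaged feature map (which matches the $f(s,\epol)$ shorthand), one obtains $\Lw(w,f)=\langle f,v_w\rangle_{\ch_K}$ with
\[
v_w:=\rE_{(s,a,s')\sim d_{\bpol}}\!\big[\gamma\,w(s,a)\,K((s',\epol),\cdot)-w(s,a)\,K((s,a),\cdot)\big]+(1-\gamma)\,\rE_{s\sim\pre}\big[K((s,\epol),\cdot)\big],
\]
interpreted as a Bochner integral in $\ch_K$. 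Compactness of $\Xcal$ and continuity (hence boundedness) of $K$, together with boundedness of $w$ on $\Xcal$ and the fact that $d_{\bpol}$ and $\pre$ are probability measures, guarantee that the integrand is Bochner-integrable, that $v_w\in\ch_K$, and that the exchange $\langle f,\rE[\cdot]\rangle_{\ch_K}=\rE[\langle f,\cdot\rangle_{\ch_K}]$ is valid.

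Next, by Cauchy--Schwarz in $\ch_K$, $\max_{f\in\Fcal}\Lw(w,f)^{2}=\max_{\|f\|_{\ch_K}\le 1}\langle f,v_w\rangle_{\ch_K}^{2}=\|v_w\|_{\ch_K}^{2}$, with maximizer $f^{\star}=v_w/\|v_w\|_{\ch_K}$ when $v_w\neq 0$. Finally I would expand $\|v_w\|_{\ch_K}^{2}=\langle v_w,v_w\rangle_{\ch_K}$ by bilinearity, using $\langle K(x,\cdot),K(y,\cdot)\rangle_{\ch_K}=K(x,y)$ and its policy-averaged version $\langle K((s,\epol),\cdot),K((\tilde s,\epol),\cdot)\rangle_{\ch_K}=\rE_{a\sim\epol(s),\,\tilde a\sim\epol(\tilde s)}[K((s,a),(\tilde s,\tilde a))]$. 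This yields the advertised closed form: a double expectation over two i.i.d.\ copies $(s,a,r,s')$ and $(\tilde s,\tilde a,\tilde r,\tilde s')$ of the data (plus independent draws from $\pre$) of a weighted sum of kernel evaluations, with coefficients assembled from $\gamma$, $(1-\gamma)$, $w(s,a)$, and $w(\tilde s,\tilde a)$ --- the explicit expression to be recorded in Appendix~\ref{app:mwl_rkhs}.

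The main obstacle is purely technical: verifying that $v_w$ is genuinely an element of $\ch_K$ and that the reproducing identity commutes with the vector-valued expectation, i.e.\ the Bochner-integrability bookkeeping under Assumption~1; everything after that is a mechanical bilinear expansion. A secondary point worth handling carefully is the treatment of the $f(s,\epol)$ notation as an inner expectation over $a\sim\epol(s)$, which should be folded into the representer $v_w$ before taking the outer expectation over the data, so that no interchange of expectation and $\max_f$ is ever needed.
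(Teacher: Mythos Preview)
Your proposal is correct and follows essentially the same route as the paper: represent $\Lw(w,f)$ as $\langle f,v_w\rangle_{\ch_K}$ via the reproducing property (the paper calls your $v_w$ by $f^{*}$), apply Cauchy--Schwarz to obtain $\max_{f\in\Fcal}\Lw(w,f)^{2}=\langle f^{*},f^{*}\rangle_{\ch_K}$, and expand using $\langle K(x,\cdot),K(y,\cdot)\rangle_{\ch_K}=K(x,y)$. If anything, you are more careful than the paper about the Bochner-integrability bookkeeping and the commutation of expectation with the inner product, which the paper simply takes for granted.
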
 

As a further special case when both $\Wcal$ and $\Fcal$ are linear classes under the same state-action features $\phi: \Xcal\to\RR^d$. The resulting algorithm has a close connection to LSTDQ \citep{LagoudakisMichail2004LPI}, which we will discuss more explicitly later in Section~\ref{sec:mql}, Example~\ref{exm:linear_mql}. 

\begin{example}\label{exm:linear_mwl}
Let $w(s,a;\alpha)=\phi(s,a)^\top \alpha$ 
where $\phi(s,a)\in \RR^d$ is some basis function and $\alpha$ is the parameters. If we use the same linear function space as $\Fcal$, i.e., $\Fcal = \{(s,a) \mapsto \phi(s,a)^\top \beta: \beta \in \RR^d\}$, then the estimation of $\alpha$ given by MWL is (assuming the matrix being inverted is full-rank) 
\begin{align}\label{eq:linear}
& \hat{\alpha}=\rE_{n}[-\gamma  \phi(s',\epol)\phi(s,a)^{\top} + \phi(s,a)\phi(s,a)^{\top}]^{-1} \nonumber\\
& \qquad (1-\gamma)\rE_{s \sim \pre}[\phi(s,\epol)].
\end{align}
The sample-based estimator for the OPE problem is therefore $\Rwn[\hatwn] = \rE_n[r\phi(s,a)^{\top}] \hat{\alpha}$; see Appendix~\ref{app:mwl_linear} for a full derivation.   
\end{example}
Just as our method corresponds to LSTDQ in the linear setting, it is worth pointing out that the method of \citet{liu2018breaking}---which we will call MSWL (minimax state weight learning) for distinction and easy reference---corresponds to off-policy LSTD \citep{BertsekasDimitriP2009Pemf, dann2014policy}; see Appendix~\ref{sec:mswl} for details.

\subsection{Connections to related work}
\citet{ChowYinlam2019DBEo} has recently proposed a version of MIS with a similar goal of being agnostic to the knowledge of $\bpol$. In fact, their estimator and ours have an interesting connection, as our Lemma~\ref{lem:ratio-estimation-informal} can be obtained by taking the functional derivative of their loss function; we refer interested readers to \cref{app:dualdice} for details. That said, there are also important differences between our methods. First, our loss function can be reduced to single-stage optimization when using an RKHS discriminator, just as in \citet{liu2018breaking}. In comparison, the estimator of \citet{ChowYinlam2019DBEo} cannot avoid two-stage optimization. Second, they do not directly estimate $w_{\epol/\bpol}(s,a)$, and instead estimate $\nu^{*}(s,a)$ such that $\nu^{*}(s,a)-\gamma \rE_{s'\sim P(s,a)\,a'\sim \epol(s')}[\nu^{*}(s',a')]=w_{\epol/\bpol}(s,a)$, which is more indirect. 

In the special case of $\gamma = 0$, i.e., when the problem is a contextual bandit, our method essentially becomes  kernel mean matching when using an RKHS discriminator \citep{gretton2012kernel}, so MWL can be viewed as a natural extension of kernel mean matching in MDPs.

\section{Minimax Q-Function Learning (MQL)} \label{sec:mql}
In Section~\ref{sec:mwl}, we show how to use value-function class as discriminators to learn the importance weight function. In this section, by swapping the roles of $w$ and $f$, we derive a new estimator that learns $Q^{\epol}$ from data using importance weights as discriminators. The resulting objective function has an intuitive interpretation of \emph{average Bellman errors}, which has many nice properties and interesting connections to prior works in other areas of RL.

\paragraph{Setup} We assume that we have a class of state-action importance weighting functions $\Gcal \subset (\Xcal\to\RR)$ and a class of state-action value functions $\Qcal \subset (\Xcal\to\RR)$. To avoid confusion we do not reuse the symbols $\Wcal$ and $\Fcal$ in Section~\ref{sec:mwl}, but when we apply both estimators on the same dataset (and possibly combine them via doubly robust), it can be reasonable to choose $\Qcal = \Fcal$ and $\Gcal = \Wcal$. For now it will be instructive to assume that $Q^{\epol}$ is captured by $\Qcal$ (we will relax this assumption later), and the goal is to find $q \in \Qcal$ such that 
\begin{align}\label{eq:Rq}
\Rq[q] := (1-\gamma)\rE_{s\sim d_0}[q(s,\epol)]
\end{align} 
(i.e., the estimation of $R_{\epol}$ as if $q$ were $Q^{\epol}$) is an accurate estimate of $R_{\epol}$.\footnote{Note that $\Rq[\cdot]$ only requires knowledge of $\pre$ and can be computed directly. This is different from the situation in MWL, where $\Rw[\cdot]$ still requires knowledge of $d_{\bpol}$ even if the importance weights are known, and the actual estimator needs to use the empirical approximation $\Rwn[\cdot]$.}

\paragraph{Loss Function} The loss function of MQL is
\begin{align*}
\Lq(q,g)=\rE_{d_{\bpol}}[g(s,a)(r+\gamma q(s', \epol)-q(s,a))].
\end{align*}
As we alluded to earlier, if $g$ is the importance weight that converts the data distribution (over $(s,a)$) $d_{\bpol}$ to some other distribution $\mu$, then the loss becomes
$
\rE_{\mu}[r+ \gamma q(s', \epol) - q(s, a)],
$ 
which is essentially the average Bellman error defined by \citet{jiang2017contextual}. An important property of this quantity is that, if $\mu = d_{\epol, \gamma}$, 
then by (a variant of) Lemma 1 of \citet{jiang2017contextual}, we immediately have $R_{\epol} - \Rq[q]=$
$$\ts
\rE_{d_{\epol, \gamma}}[r+ \gamma q(s', \epol) - q(s, a)] (= \Lq(q, w_{\epol/\bpol})).
$$
Similar to the situation of MWL, we can use a rich function class $\Gcal$ to model $w_{\epol/\bpol}$, and find $q$ that minimizes the RHS of the above equation for all $g\in\Gcal$, which gives rise to the following estimator:
\begin{align*}
\hatq=\argmin_{q\in \mathcal{Q}}\max_{g\in \mathcal{G}}\Lq(q,g)^{2}. 
\end{align*}
We call this method MQL (minimax Q-function learning). Similar to Section~\ref{sec:mwl}, we use $\hatqn$ to denote the estimator based on a finite sample of size $n$ (which replaces $\Lq(q,g)$ with its empirical approximation $\Lqn(q,g)$), and develop the formal results that parallel those in Section~\ref{sec:mwl} for MWL. All proofs and additional results can be found in Appendix~\ref{app:mql}.

\begin{lemma}\label{lem:mql}
$\Lq(Q^{\epol},g)=0$ for $\forall g \in L^2(\mathcal{X},\nu)$.  Moreover, if we further assume that $d_{\bpol}(s,a) > 0~\forall (s,a)$, then $Q^{\epol}$ is the only function that satisfies such a property.
\end{lemma}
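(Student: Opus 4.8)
Both claims hinge on rewriting $\Lq(q,g)$ as an inner product between $g$ and the \emph{Bellman residual} of $q$. For a candidate $q:\Xcal\to\RR$, define $\Delta_q(s,a):=\rE[r\mid s,a]+\gamma\,\rE_{s'\sim P(s,a)}[q(s',\epol)]-q(s,a)$, so that, by the tower property of conditional expectation,
\begin{align*}
\Lq(q,g)=\rE_{d_{\bpol}}\big[g(s,a)\,\Delta_q(s,a)\big]=\int_{\Xcal} g(s,a)\,\Delta_q(s,a)\,d_{\bpol}(s,a)\,\rd\nu(s,a),
\end{align*}
where I also write $d_{\bpol}$ for the density of the data distribution w.r.t.\ $\nu$. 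For the first claim, the Bellman equation for $Q^{\epol}$ gives $\Delta_{Q^{\epol}}\equiv 0$ pointwise, so the integrand vanishes and $\Lq(Q^{\epol},g)=0$ for every $g$, in particular for all $g\in L^2(\Xcal,\nu)$; there is no integrability subtlety here since the integrand is identically $0$.

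For the uniqueness claim, suppose $q$ satisfies $\Lq(q,g)=0$ for all $g\in L^2(\Xcal,\nu)$. I would first argue that $\Delta_q\,d_{\bpol}=0$ $\nu$-a.e.: since $Q^{\epol}$ is bounded by $R_{\max}/(1-\gamma)$ and $\Xcal$ is compact (hence of finite $\nu$-measure), it suffices to treat $q$ restricted so that $\Delta_q d_{\bpol}\in L^1(\Xcal,\nu)$ — then plugging the truncations $g_n:=\Delta_q d_{\bpol}\cdot\mathbf{1}\{|\Delta_q d_{\bpol}|\le n\}\in L^2(\Xcal,\nu)$ into the displayed identity forces $\int (\Delta_q d_{\bpol})^2\,\mathbf{1}\{|\Delta_q d_{\bpol}|\le n\}\,\rd\nu=0$ for all $n$, i.e.\ $\Delta_q d_{\bpol}=0$ a.e. Because $d_{\bpol}(s,a)>0$ everywhere by assumption, this yields $\Delta_q=0$ $\nu$-a.e., i.e.\ $q$ satisfies the Bellman equation $q=B^{\epol}q$ (a.e.).

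Finally, I would deduce $q=Q^{\epol}$ from the fixed-point property of $B^{\epol}$. Iterating $q=\rE[r\mid\cdot]+\gamma P^{\epol}q$ (a.e.) $T$ times gives $q=\sum_{t=0}^{T-1}\gamma^t (P^{\epol})^t \bar r+\gamma^T (P^{\epol})^T q$, whose first term converges to $Q^{\epol}$ and whose remainder is $\bigO(\gamma^T)\to 0$ since $q$ is bounded (a mild regularity assumption on the hypothesis class, matching the treatment of $\Wcal$ in Lemma~\ref{lem:ratio-estimation-informal}); alternatively, apply the a.e.\ identity directly to $q-Q^{\epol}$ to obtain $\|q-Q^{\epol}\|_\infty\le\gamma\|q-Q^{\epol}\|_\infty$, which forces $q=Q^{\epol}$.

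I expect the main obstacle to be the measure-theoretic bookkeeping in the middle step — passing cleanly from ``$\langle g,\Delta_q d_{\bpol}\rangle_{\nu}=0$ for all $g\in L^2$'' to ``$\Delta_q=0$ a.e.'' without a priori integrability of $q$, and checking that the a.e.\ Bellman identity is preserved under applying $P^{\epol}$ (which is automatic once attention is restricted to bounded, or $L^2$, candidate $q$, and is presumably why the appendix version carries a boundedness-type side condition). The Bellman-equation step for the first part and the contraction argument for the last part are routine.
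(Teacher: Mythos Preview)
Your proposal is correct and follows essentially the same route as the paper: rewrite $\Lq(q,g)$ as the $L^2$ pairing $\langle g,\Delta_q\,d_{\bpol}\rangle_\nu$, use the Bellman equation for the first part, and for uniqueness argue that vanishing against all $g\in L^2$ forces $\Delta_q\,d_{\bpol}=0$, hence $\Delta_q=0$ by positivity of $d_{\bpol}$, hence $q=Q^{\epol}$. The only cosmetic differences are that the paper phrases the ``vanishing against all $g$ implies zero'' step as an appeal to the Riesz representation theorem (rather than your truncation argument) and cites \citet{BertsekasDimitriP2012Dpao} for the uniqueness of the Bellman fixed point (rather than writing out the contraction), so your version is if anything more explicit about the measure-theoretic and boundedness caveats you flag at the end.
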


Similar to the case of MWL, we show that under certain representation conditions, the estimator will provide accurate estimation to $R_{\epol}$.
\begin{theorem}\label{thm:minimaxq}
The following holds if $w_{\epol/\bpol} \in \conv(\mathcal{G})$:
\begin{align*}
&|R_{\epol}-\Rq[q] | \leq \max_{g \in \mathcal{G}}|\Lq(q,g)|, \\
& |R_{\epol}-\Rq[\hatq] |  \leq \min_{q \in \mathcal{Q}}\max_{g \in \mathcal{G}}|\Lq(q,g)|.  
\end{align*}
\end{theorem}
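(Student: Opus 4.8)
The plan is to mirror the proof of \cref{thm:ipw}, the only substantive ingredient being the ``average Bellman error'' identity already displayed in the text just before the statement. The first step is to record the exact error decomposition, valid for \emph{every} $q:\Xcal\to\RR$,
\[
R_{\epol}-\Rq[q] \;=\; \Lq(q,\, w_{\epol/\bpol}).
\]
To see this, expand $\Lq(q,w_{\epol/\bpol})=\rE_{d_{\bpol}}[w_{\epol/\bpol}(s,a)\,(r+\gamma q(s',\epol)-q(s,a))]$ and change measure on $(s,a)$ from $d_{\bpol}$ to $d_{\epol,\gamma}$ --- legitimate because $r$ and $s'$ are drawn from the same conditionals $\Rcal(s,a)$ and $P(s,a)$ under either distribution, and $w_{\epol/\bpol}\le C_w<\infty$ by \cref{asm:Cw}. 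This gives $\Lq(q,w_{\epol/\bpol})=\rE_{d_{\epol,\gamma}}[r]+\gamma\,\rE_{(s,a)\sim d_{\epol,\gamma},\,s'\sim P(s,a)}[q(s',\epol)]-\rE_{(s,a)\sim d_{\epol,\gamma}}[q(s,a)]$. The first term is $R_{\epol}$ by \eqref{eq:Rpi}, and the occupancy balance equation applied with test function $q$, namely $\rE_{(s,a)\sim d_{\epol,\gamma}}[q(s,a)]=(1-\gamma)\rE_{s\sim\pre}[q(s,\epol)]+\gamma\,\rE_{(s,a)\sim d_{\epol,\gamma},\,s'\sim P(s,a)}[q(s',\epol)]$, shows that the remaining two terms collapse to $-(1-\gamma)\rE_{s\sim\pre}[q(s,\epol)]=-\Rq[q]$.

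For the first displayed inequality, I would use linearity of $g\mapsto\Lq(q,g)$. Writing $w_{\epol/\bpol}=\sum_i\alpha_i g_i$ with $g_i\in\mathcal{G}$, $\alpha_i\ge 0$, $\sum_i\alpha_i=1$ --- the defining property of $\conv(\mathcal{G})$ --- linearity and the triangle inequality give
\[
|R_{\epol}-\Rq[q]| = \Big|\sum_i \alpha_i \Lq(q,g_i)\Big| \le \sum_i \alpha_i\,|\Lq(q,g_i)| \le \max_{g\in\mathcal{G}}|\Lq(q,g)|.
\]
Equivalently, $\Lq(q,\cdot)$ maps $\conv(\mathcal{G})$ onto the convex hull in $\RR$ of the image of $\mathcal{G}$, an interval whose endpoints are attained (or approached) within $\mathcal{G}$, so passing to absolute values does not enlarge the supremum.

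The second inequality then follows immediately. Since $\max_{g\in\mathcal{G}}\Lq(q,g)^2=\big(\max_{g\in\mathcal{G}}|\Lq(q,g)|\big)^2$ and $t\mapsto t^2$ is increasing on $[0,\infty)$, the minimizer $\hatq=\argmin_{q\in\mathcal{Q}}\max_{g\in\mathcal{G}}\Lq(q,g)^2$ also minimizes $q\mapsto\max_{g\in\mathcal{G}}|\Lq(q,g)|$ over $\mathcal{Q}$; applying the first inequality at $q=\hatq$ yields $|R_{\epol}-\Rq[\hatq]|\le\max_{g\in\mathcal{G}}|\Lq(\hatq,g)|=\min_{q\in\mathcal{Q}}\max_{g\in\mathcal{G}}|\Lq(q,g)|$.

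I do not expect a real obstacle here: essentially everything reduces to linearity of $\Lq(q,\cdot)$ plus the triangle inequality once the error decomposition is in hand. The only place demanding care is that decomposition --- one must justify the change of measure (via \cref{asm:Cw} and the shared conditional laws of $r$ and $s'$) and invoke the occupancy balance equation correctly; and if $\conv(\mathcal{G})$ is read to include countably infinite convex combinations, the interchange in the first inequality should be backed by dominated convergence, using boundedness of $w_{\epol/\bpol}$, $q$, and $r$.
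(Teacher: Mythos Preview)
Your proof is correct and follows essentially the same route as the paper's. Both arguments hinge on the identity $R_{\epol}-\Rq[q]=\Lq(q,w_{\epol/\bpol})$, then bound via $w_{\epol/\bpol}\in\conv(\mathcal{G})$ and linearity of $\Lq(q,\cdot)$; the only cosmetic difference is that the paper obtains the identity by invoking \cref{lem:ratio-estimation} (i.e., $\Lw(w_{\epol/\bpol},q)=0$) to rewrite $(1-\gamma)\rE_{\pre}[q(s,\epol)]$, whereas you derive the same occupancy balance equation directly.
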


\subsection{Case Studies}
We proceed to give several special cases of this estimator corresponding to different choices of $\Gcal$ to illustrate its properties. In the first example, we show the analogy of Example~\ref{exm:oracle_mwl} for MWL, which demonstrates that requiring $\min_{q}\max_g \Lq(q,g) = 0$ is weaker than realizability $Q^{\epol} \in \Qcal$: 

\begin{example}[Realizability of $\Qcal$ can be relaxed] \label{exm:oracle_mql}
When $\mathcal{G}= \{w_{\epol/\bpol}\}$, as long as $q_0 \in \Qcal$, where $q_0$ is a constant function that always evaluates to $R_{\epol}/(1-\gamma)$, we have $\Rq[\hatq] = R_{\epol}$. See Appendix~\ref{app:oracle_mql} for details.
\end{example}

Next, we show a simple and intuitive  example where $ w_{\epol/\bpol} \notin \Gcal$ but $ w_{\epol/\bpol} \in \conv(\Gcal)$, i.e., there are cases where relaxing $\Gcal$ to its convex hull yields stronger representation power and the corresponding theoretical results provide a better description of the algorithm's behavior.

\begin{example}
Suppose $\Xcal$ is finite. Let $\Qcal$ be the tabular function class, and $\Gcal$ is the set of state-action indicator functions.\footnote{Strictly speaking we need to multiply these indicator functions by $C_w$ to guarantee $ w_{\epol/\bpol} \in \conv(\Gcal)$; see the comment on linear span after Theorem~\ref{thm:ipw}.} Then $w_{\epol/\bpol} \notin \Gcal$ but $w_{\epol/\bpol} \in \conv(\Gcal)$, and $\Rq[\hatq] = 0$. Furthermore, the sample-based estimator $\hatqn$ coincides with the model-based solution, as $\Lqn(q, g) = 0$ for each $g$ is essentially the Bellman equation on the corresponding state-action pair in the estimated MDP model. (In fact, the solution remains the same if we replace $\Gcal$ with the tabular function class.)
\end{example}

In the next example, we choose $\mathcal{G}$ to be a rich $L^{2}$-class with bounded norm, and recover the usual (squared) Bellman error as a special case. A similar example has been given by \citet{FengYihao2019AKLf}. \begin{example}[$L^{2}$-class]
When $\mathcal{G}=\{g: \rE_{d_{\bpol}}[g^{2}]\leq 1 \}$, 
\begin{align*}
\max_{g\in\mathcal{G}}\Lq(q,g)^{2}=\rE_{d_{\bpol}}[\left((B^{\epol}q)(s,a)-q(s,a)\right)^{2}], 
\end{align*}
where $B^\pi$ is the Bellman update operator $(B^\pi q)(s,a) := \rE_{r\sim \Rcal(s,a), s' \sim P(s,a)}[r + \gamma q(s',\pi)]$.
\end{example}

Note that the standard Bellman error cannot be directly estimated from data when the state space is large, even if the $\Qcal$ class is realizable \citep{SzepesvariCsaba2005Ftbf,sutton2018reinforcement,ChenJinglin2019ICiB}. From our perspective, this difficulty can be explained by the fact that squared Bellman error corresponds to an overly rich discriminator class that demands an unaffordable sample complexity.

The next example is RKHS class which yields a closed-form solution to the inner maximization as usual. 
\begin{example}[RKHS class]
When $\mathcal{G}=\{g(s,a); \langle g,g \rangle_{\ch_K}\leq 1 \}$, we have the following: 
\begin{lemma}\label{lem:kernel2}
Let $\mathcal{G}=\{g(s,a); \langle g,g \rangle_{\ch_K}\leq 1 \}$. Then, 
$\max_{g\in \mathcal{G}}\Lq(q,g)^{2} = \rE_{d_{\bpol}}[\Delta^{q}(q;s,a,r,s')\Delta^{q}(q;\tilde s,\tilde a,\tilde r,\tilde s')K((s,a),(\tilde s, \tilde a))],$ 
where $\Delta^{q}(q;s,a,r,s')=g(s,a)(r+\gamma q(s', \epol)-q(s,a)).$
\end{lemma}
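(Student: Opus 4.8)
The plan is to exploit the fact that $\Lq(q,g)$ is \emph{linear} in $g$ and to apply the reproducing property of $\ch_K$, reducing the inner maximization to computing the squared norm of a single ``mean embedding'' element of the RKHS — exactly the argument underlying maximum mean discrepancy and kernel mean matching, and entirely parallel to the RKHS computation behind Lemma~\ref{lem:mwl_rkhs} for MWL.

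First I would fix $q$ and abbreviate the Bellman residual of a data tuple as $\delta(s,a,r,s') := r + \gamma q(s',\epol) - q(s,a)$ (the quantity denoted $\Delta^{q}$ in the statement, read without the extra $g$ factor, which is maximized over), so that $\Lq(q,g) = \rE_{d_{\bpol}}[g(s,a)\,\delta(s,a,r,s')]$. By the reproducing property, $g(s,a) = \langle g, K((s,a),\cdot)\rangle_{\ch_K}$ for every $g \in \ch_K$; substituting this and exchanging the expectation with the inner product gives $\Lq(q,g) = \langle g, \mu_q\rangle_{\ch_K}$, where $\mu_q := \rE_{d_{\bpol}}[\delta(s,a,r,s')\,K((s,a),\cdot)] \in \ch_K$.

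Since $\mathcal{G}$ is the unit ball of $\ch_K$, Cauchy--Schwarz then yields $\max_{g \in \mathcal{G}}\Lq(q,g)^{2} = \magd{\mu_q}_{\ch_K}^{2}$, with the maximum attained at $g = \mu_q/\magd{\mu_q}_{\ch_K}$ when $\mu_q \ne 0$ (and trivially otherwise). Finally I would expand $\magd{\mu_q}_{\ch_K}^{2} = \langle \mu_q, \mu_q\rangle_{\ch_K}$, write it as a double expectation over the data tuple $(s,a,r,s')$ and an independent copy $(\tilde s,\tilde a,\tilde r,\tilde s')$, and use the reproducing property a second time via $\langle K((s,a),\cdot), K((\tilde s,\tilde a),\cdot)\rangle_{\ch_K} = K((s,a),(\tilde s,\tilde a))$, obtaining the claimed formula $\rE_{d_{\bpol}}[\delta(s,a,r,s')\,\delta(\tilde s,\tilde a,\tilde r,\tilde s')\,K((s,a),(\tilde s,\tilde a))]$.

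The main point requiring care is the exchange of expectation and inner product, i.e., that $\mu_q$ is a well-defined (Bochner-integrable) element of $\ch_K$ and that $\rE[\langle g, \delta\,K((s,a),\cdot)\rangle_{\ch_K}] = \langle g, \rE[\delta\,K((s,a),\cdot)]\rangle_{\ch_K}$. This follows from boundedness: on the compact domain $\Xcal$ the kernel is bounded, so $\magd{K((s,a),\cdot)}_{\ch_K} = \sqrt{K((s,a),(s,a))}$ is bounded; and $\delta$ is integrable under $d_{\bpol}$ because $r$ is bounded by $R_{\max}$, $\gamma < 1$, and $q$ is bounded on $\Xcal$. Hence $\rE_{d_{\bpol}}[\,|\delta|\,\magd{K((s,a),\cdot)}_{\ch_K}\,] < \infty$, which is precisely the condition licensing the Bochner integral defining $\mu_q$ and both exchanges above; everything else is a routine application of the reproducing property.
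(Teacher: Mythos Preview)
Your proposal is correct and follows essentially the same route as the paper: apply the reproducing property to write $\Lq(q,g)=\langle g,g^*\rangle_{\ch_K}$ with $g^*=\rE_{d_{\bpol}}[\delta\,K((s,a),\cdot)]$ (your $\mu_q$), then use Cauchy--Schwarz over the unit ball and expand $\langle g^*,g^*\rangle_{\ch_K}$ via the kernel identity. You also correctly identified that the extra $g(s,a)$ factor in the statement's definition of $\Delta^q$ is a typo (the paper's own proof defines $\Delta^q(q;s,a,r,s')=r+\gamma q(s',\epol)-q(s,a)$), and your Bochner-integrability remark adds rigor the paper leaves implicit.
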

\end{example}

Finally, the linear case.
\begin{example} \label{exm:linear_mql}
Let $q(s,a;\alpha)=\phi(s,a)^\top \alpha$ 
where $\phi(s,a)\in \RR^d$ is some basis function and $\alpha$ is the parameters. If we use the same linear function space as $\Gcal$, i.e., $\Gcal = \{(s,a) \mapsto \phi(s,a)^\top \beta: \beta\in\RR^d\}$, then MQL yields $\hat{\alpha}$:
\begin{align*}\ts
\rE_{n}[-\gamma \phi(s,a) \phi(s',\epol)^{\top} + \phi(s,a)\phi(s,a)^{\top}]^{-1} \rE_{n}[r\phi(s,a)].
\end{align*}
The derivation is similar to that of Example~\ref{exm:linear_mwl} (Appendix~\ref{app:mwl_linear}) and omitted. )The resulting $q(s,a;\hat \alpha)$ as an estimation of $Q^{\epol}$ is precisely LSTDQ \citep{LagoudakisMichail2004LPI}.
In addition, the final OPE estimator $\Rq[\hatqn] = (1-\gamma) \rE_{s\sim d_0}[q(s, \epol; \hat \alpha)]$ 
is the same as $\Rwn[\hatwn]$ when $\Wcal$ and $\Fcal$ are the same linear class (Example~\ref{exm:linear_mwl}). 
\end{example}

\subsection{Connection to Kernel Loss \citep{FengYihao2019AKLf}}\label{sec:mvl}

\citet{FengYihao2019AKLf} has recently proposed a method for value-based RL. By some transformations, we may rewrite their loss over state-value function $v: \Scal\to\RR$ as
\begin{align} \label{eq:onpolicy_mvl}
\max_{g \in \mathcal{G}^\Scal} \left(\rE_{\epol}[\{r + \gamma v(s')-v(s)\}g(s)]\right)^{2},
\end{align}
where $\Gcal^\Scal$ is an RKHS over the state space. 
While their method is very similar to MQL when written as the above expression, they focus on learning a state-value function and need to be on-policy for policy evaluation. In contrast, our goal is OPE (i.e., estimating the expected return instead of the value function), and we learn a Q-function as an intermediate object and hence are able to learn from off-policy data. More importantly, the importance weight interpretation of $g$ has eluded their paper and they interpret this loss purely from a kernel perspective. In contrast, by leveraging the importance weight interpretation, we are able to establish approximation error bounds based on representation assumptions that are fully expressed in quantities directly defined in the MDP. We also note that their loss for policy optimization can be similarly interpreted as minimizing average Bellman errors under a set of distributions.

Furthermore, it is easy to extend their estimator to the OPE task using knowledge of $\bpol$, which we call MVL; see Appendix~\ref{app:mvl} for details. 	
Again, just as we discussed in Appendix~\ref{sec:mswl} on MSWL, when we use linear classes for both value functions and importance weights, these two estimators become two variants of off-policy LSTD \citep{dann2014policy, BertsekasDimitriP2009Pemf} and coincide with MSWL and its variant.  

\section{Doubly Robust Extension and Sample Complexity of MWL \& MQL} \label{sec:dr}
In the previous sections we have seen two different ways of using a value-function class and an importance-weight class for OPE. 
Which one should we choose?

In this section we show that there is no need to make a choice. In fact, we can combine the two estimates naturally through the doubly robust trick \citep{KallusNathan2019EBtC} (see also \citep{tang2019harnessing}), whose population version is: 
\begin{align} \label{eq:dr}
&R[w,q]=(1-\gamma)\rE_{\pre}[q(s, \epol)] \nonumber\\
& \quad +\rE_{d_{\bpol}}[w(s,a)\{r+\gamma q(s', \epol)-q(s,a)\}]. 
\end{align}
As before, we write $R_n[w,q]$ as the empirical analogue of $R[w,q]$. While $w$ and $q$ are supposed to be the MWL and MQL estimators in practice, in this section we will sometimes treat $w$ and $q$ as arbitrary functions from the $\Wcal$ and $\Qcal$ classes to keep our results general. 
By combining the two estimators, we obtain the usual doubly robust property, that when either $w=w_{\epol/\bpol}$ or $q=Q^{\epol}$, we have $R[w,q] = R_{\epol}$, that is, 
as long as either one of the models works well, the final estimator behaves well.\footnote{See \citet[Theorem 11,12]{KallusNathan2019EBtC} for formal statements.} 

Besides being useful as an estimator, Eq.\eqref{eq:dr} also provides a unified framework to analyze the previous estimators, which are all its special cases: Note that $R[w, \bzero] = \Rw[w]$ and $R[\bzero, q] = \Rq[q]$, where $\bzero$ means a constant function that always evaluates to $0$.
Below we first prove a set of results that unify and generalize the results in Sections~\ref{sec:mwl} and \ref{sec:mql}, and then state the sample complexity guarantees for the proposed estimators.

\begin{lemma}\label{lem:db}
$R[w,q]-R_{\epol}=\rE_{d_{\bpol}}[\{w(s,a)-w_{\epol/\bpol}(s,a)\}\{\gamma V^{\epol}(s')-\gamma v(s')+q(s,a)-Q^{\epol}(s,a)\}].$
\end{lemma}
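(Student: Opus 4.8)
The plan is to prove the identity in Lemma~\ref{lem:db} by direct algebraic manipulation of the definition of $R[w,q]$ in Eq.\eqref{eq:dr}, subtracting $R_{\epol}$ and regrouping terms so that the factor $\{w(s,a) - w_{\epol/\bpol}(s,a)\}$ appears. Note first that the statement as written contains $v(s')$, which should be read as $q(s',\epol)$ (the state-value function induced by the candidate $q$); I will use that reading throughout. The starting point is to expand
\begin{align*}
R[w,q] - R_{\epol} = (1-\gamma)\rE_{\pre}[q(s,\epol)] + \rE_{d_{\bpol}}[w(s,a)\{r + \gamma q(s',\epol) - q(s,a)\}] - R_{\epol}.
\end{align*}

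First I would handle the ``$w_{\epol/\bpol}$ part'' of the target expression. By the average-Bellman-error identity already established in Section~\ref{sec:mql} (the variant of Lemma 1 of \citet{jiang2017contextual}, equivalently $\Lq(q, w_{\epol/\bpol}) = R_{\epol} - \Rq[q]$), we have
\begin{align*}
\rE_{d_{\bpol}}[w_{\epol/\bpol}(s,a)\{r + \gamma q(s',\epol) - q(s,a)\}] = R_{\epol} - (1-\gamma)\rE_{\pre}[q(s,\epol)].
\end{align*}
Adding and subtracting $\rE_{d_{\bpol}}[w_{\epol/\bpol}(s,a)\{r + \gamma q(s',\epol) - q(s,a)\}]$ inside $R[w,q] - R_{\epol}$ and using this identity, the terms $(1-\gamma)\rE_{\pre}[q(s,\epol)]$ and $R_{\epol}$ cancel, leaving exactly
\begin{align*}
R[w,q] - R_{\epol} = \rE_{d_{\bpol}}[\{w(s,a) - w_{\epol/\bpol}(s,a)\}\{r + \gamma q(s',\epol) - q(s,a)\}].
\end{align*}

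The remaining step is to replace $r + \gamma q(s',\epol) - q(s,a)$ by $\gamma V^{\epol}(s') - \gamma q(s',\epol) + q(s,a) - Q^{\epol}(s,a)$ inside this last expectation. These two bracketed quantities are \emph{not} equal pointwise, but they agree in conditional expectation given $(s,a)$: using the Bellman equation $\rE[r \mid s,a] = \rE[Q^{\epol}(s,a) - \gamma Q^{\epol}(s',\epol) \mid s,a]$ (i.e., $\rE[r \mid s,a] = Q^\epol(s,a) - \gamma\rE[V^\epol(s')\mid s,a]$, exactly as used in Eq.\eqref{eq:mwl_drv2}), we get
\begin{align*}
\rE[r + \gamma q(s',\epol) - q(s,a) \mid s,a] = Q^{\epol}(s,a) - \gamma \rE[V^{\epol}(s') \mid s,a] + \gamma \rE[q(s',\epol)\mid s,a] - q(s,a),
\end{align*}
which is the negative of $\rE[\gamma V^{\epol}(s') - \gamma q(s',\epol) + q(s,a) - Q^{\epol}(s,a) \mid s,a]$ --- so there is a sign to track. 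Since $w(s,a) - w_{\epol/\bpol}(s,a)$ is measurable with respect to $(s,a)$, the tower property lets me swap in the conditional expectation, and the stated identity follows (after matching the sign convention of the bracket in the lemma statement, which I will verify carefully). The main obstacle is precisely this bookkeeping: making sure the reading $v = q(\cdot,\epol)$ is right, that the sign in front of the $\{\gamma V^{\epol}(s') - \gamma v(s') + q(s,a) - Q^{\epol}(s,a)\}$ bracket is consistent with applying the Bellman equation (rather than off by $-1$), and confirming that the two average-Bellman-error facts I invoke ($\Lq(q,w_{\epol/\bpol})$ identity and the conditional Bellman identity) are used in mutually consistent directions. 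Everything else is elementary: linearity of expectation, adding and subtracting one term, and the tower property; no measure-theoretic subtleties arise beyond boundedness, which is guaranteed by Assumption~\ref{asm:Cw} and $r \in [0, R_{\max}]$.
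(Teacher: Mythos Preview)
Your approach is correct and is essentially the same as the paper's: the paper writes $R[w,q]-R_{\epol}=R[w,q]-R[w_{\epol/\bpol},Q^{\epol}]$, expands into three pieces, and kills two of them using $\Lw(w_{\epol/\bpol},\cdot)=0$ and $\Lq(Q^{\epol},\cdot)=0$; your add--subtract of $\rE_{d_{\bpol}}[w_{\epol/\bpol}\{r+\gamma q(s',\epol)-q(s,a)\}]$ followed by the Bellman/tower replacement of $r$ is the same computation with slightly leaner bookkeeping.

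On the sign you were worried about: you are not making an error---the lemma \emph{statement} carries a sign typo. Your derivation (and the paper's own proof, see Eq.~\eqref{eq:comp3}) both land on
\[
R[w,q]-R_{\epol}=\rE_{d_{\bpol}}\bigl[\{w(s,a)-w_{\epol/\bpol}(s,a)\}\{Q^{\epol}(s,a)-q(s,a)+\gamma q(s',\epol)-\gamma V^{\epol}(s')\}\bigr],
\]
which is the negative of the bracket printed in the lemma. A quick sanity check with $q\equiv 0$ confirms this: then $R[w,0]-R_{\epol}=\Rw[w]-R_{\epol}=\rE_{d_{\bpol}}[(w-w_{\epol/\bpol})r]$, and since $\rE[r\mid s,a]=Q^{\epol}(s,a)-\gamma\rE[V^{\epol}(s')\mid s,a]$, the correct bracket must be $Q^{\epol}(s,a)-\gamma V^{\epol}(s')$, not $\gamma V^{\epol}(s')-Q^{\epol}(s,a)$. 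So stop trying to ``match the sign convention''---just record the expression with the bracket as in Eq.~\eqref{eq:comp3}.
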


\begin{theorem}\label{thm:db2}
Fixing any $q'\in\mathcal{Q}$, if $[Q^{\epol}-q']\in \conv(\mathcal{F})$,
\begin{align*}
&|R[w,q']-R_{\epol}|\leq \max_{f \in \mathcal{F}}|\Lw(w,f)|, \\ 	
&|R[\hatw,q']-R_{\epol}|\leq \min_{w \in \mathcal{W}}\max_{f \in \mathcal{F}}|\Lw(w,f)|. 
\end{align*}
Similarly, fixing any $w'\in \mathcal{W}$, if $ [w_{\epol/\bpol}-w'] \in \conv(\mathcal{G})$, 
\begin{align*}
& |R[w',q]-R_{\epol}|\leq \max_{g \in \mathcal{G}}|\Lq(q,g)|, \\
&|R[w',\hatq]-R_{\epol}|\leq \min_{q \in  \mathcal{Q}}\max_{g \in \mathcal{G}}|\Lq(q,g)|. 
\end{align*}
\end{theorem}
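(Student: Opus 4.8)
The plan is to reduce everything to Lemma~\ref{lem:db}, which already gives the exact error decomposition
\[
R[w,q]-R_{\epol}=\rE_{d_{\bpol}}\bigl[\{w(s,a)-w_{\epol/\bpol}(s,a)\}\{\gamma V^{\epol}(s')-\gamma v(s')+q(s,a)-Q^{\epol}(s,a)\}\bigr],
\]
where (reading off the notation) $v(\cdot)$ denotes $q(\cdot,\epol)$, so the bracketed factor is exactly $-\bigl((Q^{\epol}-q)(s,a)-\gamma(Q^{\epol}-q)(s',\epol)\bigr)$ evaluated along a transition. First I would fix $q'$ and set $f^{\star}:=Q^{\epol}-q'$. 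Plugging into Lemma~\ref{lem:db} and comparing with the definition of $\Lw(w,f)$ in Eq.\eqref{eq:mwl_loss}, the right-hand side is, up to sign, precisely $\Lw(w,f^{\star})$ \emph{minus} $\Lw(w_{\epol/\bpol},f^{\star})$; but $\Lw(w_{\epol/\bpol},f^{\star})=0$ by Lemma~\ref{lem:ratio-estimation-informal} since $f^{\star}\in L^2(\Xcal,\nu)$. Hence $R[w,q']-R_{\epol}=\pm\,\Lw(w,f^{\star})$ for this single discriminator $f^{\star}$.

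Next I would upgrade from the single function $f^{\star}$ to the convex hull: since $f^{\star}=Q^{\epol}-q'\in\conv(\Fcal)$, write $f^{\star}=\sum_i \lambda_i f_i$ with $f_i\in\Fcal$, $\lambda_i\ge 0$, $\sum_i\lambda_i=1$. Because $w\mapsto \Lw(w,f)$ is \emph{linear} in $f$ (inspect Eq.\eqref{eq:mwl_loss}: every occurrence of $f$ is linear), we get $\Lw(w,f^{\star})=\sum_i\lambda_i\Lw(w,f_i)$, so $|\Lw(w,f^{\star})|\le \sum_i\lambda_i|\Lw(w,f_i)|\le \max_{f\in\Fcal}|\Lw(w,f)|$ by convexity of absolute value and the fact that the $\lambda_i$ form a distribution. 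This yields the first displayed inequality $|R[w,q']-R_{\epol}|\le \max_{f\in\Fcal}|\Lw(w,f)|$. The second inequality follows by instantiating $w=\hatw=\argmin_{w\in\Wcal}\max_{f\in\Fcal}\Lw(w,f)^2$ and noting that $\argmin$ of $\Lw(w,f)^2$ over $f$ agrees with $\argmin$ of $|\Lw(w,f)|$, so $\max_{f\in\Fcal}|\Lw(\hatw,f)|=\min_{w\in\Wcal}\max_{f\in\Fcal}|\Lw(w,f)|$.

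For the symmetric half, fix $w'$ and set $g^{\star}:=w_{\epol/\bpol}-w'\in\conv(\Gcal)$. Re-deriving Lemma~\ref{lem:db} with the roles of the two arguments exchanged — or more directly, using the identity $R[w,q]-R_{\epol}=-\,\rE_{d_{\bpol}}[(w-w_{\epol/\bpol})(s,a)\{(q-Q^{\epol})(s,a)-\gamma(q-Q^{\epol})(s',\epol)\}]$ together with $\Lq(Q^{\epol},g)=0$ from Lemma~\ref{lem:mql} — one obtains $R[w',q]-R_{\epol}=\pm\,\Lq(q,g^{\star})$, and then the same convex-combination-plus-triangle-inequality argument (now using that $q\mapsto\Lq(q,g)$ is linear in $g$) gives $|R[w',q]-R_{\epol}|\le\max_{g\in\Gcal}|\Lq(q,g)|$, with the $\hatq$ version following by the same $\argmin$ substitution.

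The main obstacle I anticipate is purely bookkeeping rather than conceptual: getting the signs and the $v(s')=q(s',\epol)$ substitution exactly right when matching the Lemma~\ref{lem:db} decomposition against the loss $\Lw$, and verifying that the ``cross'' identity for the $w'$-fixed case really does collapse to $\Lq(q,g^{\star})$ with the correct normalization — in particular that the $(1-\gamma)\rE_{\pre}[q(s,\epol)]$ term in $R[w,q]$ is consumed correctly by the $\Lq(Q^{\epol},\cdot)=0$ cancellation (equivalently by the average-Bellman-error identity $R_{\epol}-\Rq[q]=\Lq(q,w_{\epol/\bpol})$ already noted in Section~\ref{sec:mql}). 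Once that algebra is pinned down, linearity of the losses in the discriminator argument and the two ``zero at the truth'' lemmas do all the real work.
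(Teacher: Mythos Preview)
Your proposal is correct and follows essentially the same route as the paper's proof: start from Lemma~\ref{lem:db}, recognize the error as $\Lw(w,Q^{\epol}-q')$ (respectively $\Lq(q,w_{\epol/\bpol}-w')$) after subtracting the vanishing term $\Lw(w_{\epol/\bpol},\cdot)=0$ from Lemma~\ref{lem:ratio-estimation-informal} (respectively $\Lq(Q^{\epol},\cdot)=0$ from Lemma~\ref{lem:mql}), then use linearity in the discriminator plus the convex-combination/triangle-inequality step to pass from $\conv(\Fcal)$ to $\max_{f\in\Fcal}$, and finally plug in $\hatw,\hatq$. The only cosmetic slip is the phrase ``$\argmin$ of $\Lw(w,f)^2$ over $f$'': you mean that $\argmin_{w}\max_{f}\Lw(w,f)^2=\argmin_{w}\max_{f}|\Lw(w,f)|$ by monotonicity of $x\mapsto x^2$ on $[0,\infty)$, which is exactly what you need.
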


\begin{remark}
	When $q'= \bzero$, the first statement is reduced to Theorem \ref{thm:ipw}. When $w' = \bzero$, the second statement is reduced to Theorem~\ref{thm:minimaxq}.
\end{remark}

\begin{theorem}[Double robust inequality for discriminators (i.i.d case)]\label{thm:radema}
	Recall that 
	\begin{align*}\ts
	\hatwn &= \argmin_{w \in \mathcal{W}}\max_{f \in \mathcal{F}}\Lwn(w,f)^2,\\
	\hatqn &=\argmin_{q \in  \mathcal{Q}}\max_{g \in \mathcal{G}}\Lqn(q,g)^2,
	\end{align*}
	where $\Lwn$ and $\Lqn$ are the empirical losses based on  a set of $n$ i.i.d samples. We have the following two statements. 
	
	(1) Assume $ [Q^{\epol}-q'] \in \conv(\mathcal{F})$ for some $q'$, and $\forall f\in \mathcal{F},\|f\|_\infty<C_{f}$.
	Then, with probability at least $1-\delta$, 
	\begin{align*}
	|R[\hatwn,q']-R_{\epol}| \lnsim \min_{w \in \mathcal{W}}\max_{f \in \mathcal{F}}|\Lw(w,f)| \\
	+ \mathfrak{R}_{n}(\mathcal{F},\mathcal{W})+C_f C_w\sqrt{\frac{\log(1/\delta)}{n}}
	\end{align*}
	where $\mathfrak{R}_{n}(\mathcal{W},\mathcal{F})$ is the Rademacher complexity \footnote{See \citet{bartlett2003rademacher} for the definition.} of the function class $\{(s,a,s') \mapsto w(s,a)(\gamma f(s',\epol)-f(s,a))~:~~ w\in \mathcal{W}, f\in \mathcal{F}\}.$
	 
	(2) Assume $ [w_{\epol/\bpol}-w']  \in \conv(\mathcal{G})$ for some $w'$, and  $\forall g\in \mathcal{G},\|g\|_\infty<C_{g}$. Then, with probability at least $1-\delta$, 
	\begin{align*}
	&|R[w', \hatqn]-R_{\epol}| \lnsim \min_{q \in \mathcal{Q}}\max_{g \in \mathcal{G}}|\Lq(q,g)| \\
	&+\mathfrak{R}_{n}(\mathcal{Q},\mathcal{G})+C_g\frac{R_{\mathrm{max}}}{(1-\gamma)}\sqrt{\frac{\log(1/\delta)}{n}}, 
	\end{align*}
	where 
	$\mathfrak{R}_{n}(\mathcal{Q},\mathcal{G})$ is the Rademacher complexity   of the function class 
	$\{(s,a, r,s') \mapsto g(s,a)\{r+\gamma q(s',\epol)-q(s,a)\}~:~~ q\in \mathcal{Q}, g\in \mathcal{G}\}.$ 
\end{theorem}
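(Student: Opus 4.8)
The plan is to reduce the finite-sample statement to the population-level bound of Theorem~\ref{thm:db2} plus a uniform deviation term. Consider statement (1); statement (2) is entirely symmetric. First I would observe that $\Lwn(w,f) - \Lw(w,f)$ is the difference between an empirical average and its expectation of the function $(s,a,s') \mapsto w(s,a)(\gamma f(s',\epol) - f(s,a))$, plus the analogous term for the $\pre$-expectation of $f(s,\epol)$. Since $q'$ is fixed, the doubly robust objective $R[w,q']$ and $\Lw(w,f)$ differ only by terms not involving $w$, so controlling $R[\hatwn,q'] - R_\epol$ reduces, via Theorem~\ref{thm:db2} applied with the class $\Fcal$ realizing $[Q^\epol - q']$, to controlling $\max_{f\in\Fcal}|\Lw(\hatwn,f)|$.

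Next I would run the standard ``estimation error'' decomposition: for the empirical minimizer $\hatwn$,
\begin{align*}
\max_{f\in\Fcal}|\Lw(\hatwn,f)| &\le \max_{f\in\Fcal}|\Lwn(\hatwn,f)| + \sup_{w\in\Wcal,\,f\in\Fcal}|\Lw(w,f)-\Lwn(w,f)| \\
&\le \min_{w\in\Wcal}\max_{f\in\Fcal}|\Lwn(w,f)| + \varepsilon_n \le \min_{w\in\Wcal}\max_{f\in\Fcal}|\Lw(w,f)| + 2\varepsilon_n,
\end{align*}
where $\varepsilon_n := \sup_{w\in\Wcal,\,f\in\Fcal}|\Lw(w,f)-\Lwn(w,f)|$, and the middle inequality uses that $\hatwn$ minimizes the empirical max (here I should be slightly careful that MWL minimizes the squared loss, but since squaring is monotone on $[0,\infty)$ the argmin is unchanged, as noted in comment 3 after Theorem~\ref{thm:ipw}). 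It remains to bound $\varepsilon_n$ with high probability.

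For $\varepsilon_n$ I would invoke the standard symmetrization/Rademacher bound: with probability at least $1-\delta$, $\varepsilon_n \lesssim \mathfrak{R}_n(\Fcal,\Wcal) + (\text{range})\sqrt{\log(1/\delta)/n}$, where the relevant function class is exactly $\{(s,a,s')\mapsto w(s,a)(\gamma f(s',\epol)-f(s,a)): w\in\Wcal, f\in\Fcal\}$ as stated, and the $\pre$-term is handled similarly (or folded in; its contribution is lower order and involves only $\Fcal$). The range of the summands is bounded by $O(C_w C_f)$ using $\|w\|_\infty \le C_w$ (Assumption~\ref{asm:Cw}, noting $\hatwn$ and the competitors live in $\Wcal$ whose functions are bounded) and $\|f\|_\infty < C_f$, which produces the $C_f C_w\sqrt{\log(1/\delta)/n}$ term. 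Chaining these together and absorbing constants into $\lnsim$ gives the claim.

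The main obstacle I anticipate is bookkeeping rather than a deep difficulty: making sure the boundedness constants are threaded through correctly (in particular that $\|f(\cdot,\epol)\|_\infty \le C_f$ follows from $\|f\|_\infty < C_f$ since $\epol(s)$ is a probability distribution, and that $w$ ranging over $\Wcal$ is uniformly bounded so the product class has bounded range), and cleanly separating the $d_{\bpol}$-expectation part (which carries the two-argument Rademacher complexity) from the $\pre$-expectation part (a one-argument term over $\Fcal$ only, dominated by the stated bound). A secondary subtlety is that Theorem~\ref{thm:db2} requires $[Q^\epol - q'] \in \conv(\Fcal)$ and is stated for the population quantities; I would apply it to $R[\hatwn,q']$ treating $\hatwn$ as a fixed function after conditioning, so no uniformity over $\Wcal$ is needed in that step. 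For statement (2), the identical argument applies with $\Lq$, $\Qcal$, $\Gcal$ in place of $\Lw$, $\Wcal$, $\Fcal$; the range bound there is $O(C_g R_{\max}/(1-\gamma))$ because $|r+\gamma q(s',\epol)-q(s,a)|$ is controlled by $R_{\max}/(1-\gamma)$ (the natural scale of $Q^\epol$, and one may restrict $\Qcal$ to this scale without loss), yielding the stated $C_g \frac{R_{\max}}{1-\gamma}\sqrt{\log(1/\delta)/n}$ term.
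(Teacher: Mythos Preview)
Your proposal is correct and follows essentially the same route as the paper: apply Theorem~\ref{thm:db2} to bound $|R[\hatwn,q']-R_{\epol}|$ by $\max_{f\in\Fcal}|\Lw(\hatwn,f)|$, then use the standard empirical-vs-population decomposition to pick up the $\min_w\max_f|\Lw(w,f)|$ term plus twice the uniform deviation, and finally control the deviation via Rademacher complexity and a bounded-differences concentration term. One small simplification you can make: since $\Lwn$ uses the exact $\pre$-expectation (see Eq.~\eqref{eq:minimax_n}), the $(1-\gamma)\rE_{\pre}[f(s,\epol)]$ term cancels in $\Lw-\Lwn$, so there is no separate ``$\pre$-part'' to track; the paper handles the absolute value via the contraction inequality, but your direct bound on $\sup_{w,f}|\Lw(w,f)-\Lwn(w,f)|$ works just as well.
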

Here $A \lnsim B$ means inequality without an (absolute) constant. 
Note that we can immediately extract the sample complexity guarantees for the MWL and the MQL estimators as the corollaries of this general guarantee by letting $q' = \bzero$ and $w' = \bzero$.\footnote{Strictly speaking, when $q' = \bzero$, $R[\hatwn, q'] = \Rw[\hatwn]$ is very close to but slightly different from the sample-based MWL estimator $\Rwn[\hatwn]$, but their difference can be bounded by a uniform deviation bound over the $\Wcal$ class in a straightforward manner. The MQL analysis does not have this issue as $\Rq[\cdot]$ does not require empirical approximation.} In Appendix~\ref{app:noniid} we also extend the analysis to the non-i.i.d.~case and show that similar results can be established for $\beta$-mixing data.


\section{Statistical Efficiency in the Tabular Setting} \label{sec:optimal_main}
As we have discussed earlier, both MWL and MQL are equivalent to LSTDQ when we use the same linear class for all function approximators. Here we show that in the tabular setting, which is a special case of the linear setting, 
MWL and MQL can achieve the semiparametric lower bound of OPE \citep{NathanUehara2019}, because they coincide with the model-based solution. This is a desired property that many OPE estimators fail to obtain, including MSWL and MVL. 

\begin{theorem}\label{thm:optimal_main}
	Assume the whole data set $\{(s,a,r,s')\}$ is geometrically ergodic \footnote{Regarding the definition, refer to \cite{MeynS.P.SeanP.2009Mcas}}. Then, in the tabular setting, $\sqrt{n}(\Rwn[\hatwn]-R_{\epol})$ and $\sqrt{n}(\Rq[\hatqn]-R_{\epol})$ weakly converge to the normal distribution with mean $0$ and variance
	$$
	\rE_{d_{\bpol}}[w^2_{\epol/\bpol}(s,a)(r+\gamma V^{\epol}(s')-Q^{\epol}(s,a))^{2}].
	$$
This variance matches the semiparametric lower bound for OPE given by \citet[Theorem 5]{NathanUehara2019}.
\end{theorem}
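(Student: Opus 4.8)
The plan is to exploit the fact that in the tabular setting both estimators coincide exactly (in finite samples) with the certainty-equivalence / model-based estimator, and then invoke a known asymptotic-normality result for the plug-in OPE estimator together with the semiparametric efficiency bound of \citet{NathanUehara2019}. So the proof naturally splits into three parts: (i) an algebraic identification step showing $\Rwn[\hatwn] = \Rq[\hatqn] = \hat R_{\mathrm{MB}}$, the return computed from the empirical MDP $(\hat P, \hat{\Rcal})$ built from the data; (ii) a delta-method / M-estimation argument giving the limiting normal distribution of $\sqrt{n}(\hat R_{\mathrm{MB}} - R_{\epol})$; and (iii) matching the asymptotic variance to the Cramér–Rao-type lower bound of \citet[Theorem 5]{NathanUehara2019}.

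For step (i), I would specialize Example~\ref{exm:linear_mwl} (equivalently Example~\ref{exm:linear_mql}) to the indicator features $\phi(s,a) = e_{(s,a)}$, so that all the matrices in Eq.\eqref{eq:linear} become diagonal or block-diagonal in the tabular index. The matrix $\rE_n[-\gamma \phi(s',\epol)\phi(s,a)^\top + \phi(s,a)\phi(s,a)^\top]$ is, up to the normalization by empirical state-action visitation counts, exactly $\hat n_{(s,a)}(I - \gamma \hat P^{\epol})$ where $\hat P^{\epol}$ is the empirical transition matrix composed with $\epol$; inverting it and multiplying by $\rE_n[r\phi(s,a)]$ recovers $\hat Q^{\epol} = (I - \gamma \hat P^{\epol})^{-1}\hat r$, the empirical-MDP Q-function. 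Plugging into $\Rq[\hatqn] = (1-\gamma)\rE_{s\sim d_0}[\hat Q^{\epol}(s,\epol)]$ gives the model-based return; the MWL side matches by the last line of Example~\ref{exm:linear_mql}. The one subtlety is ensuring the inverted matrix is a.s.\ nonsingular for large $n$, which follows because geometric ergodicity forces every $(s,a)$ with $d_{\bpol}(s,a)>0$ to be visited infinitely often and $\|\gamma \hat P^{\epol}\| \le \gamma < 1$.

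For step (ii), I would write $\hat R_{\mathrm{MB}} = \Phi(\hat\theta_n)$ where $\hat\theta_n$ collects the empirical transition probabilities and mean rewards (a smooth function of empirical averages of the data), and $\Phi$ is the map sending an MDP model to $(1-\gamma)\rE_{d_0}[(I-\gamma P^{\epol})^{-1}r](s,\epol)$, which is smooth near the true parameter. Geometric ergodicity gives a CLT for $\sqrt{n}(\hat\theta_n - \theta_0)$ (a Markov-chain CLT, e.g. via \citet{MeynS.P.SeanP.2009Mcas}), so by the delta method $\sqrt{n}(\hat R_{\mathrm{MB}} - R_{\epol})$ is asymptotically normal with variance $\nabla\Phi(\theta_0)^\top \Sigma \nabla\Phi(\theta_0)$. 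Computing $\nabla\Phi$ explicitly—differentiating $(I-\gamma P^{\epol})^{-1}$ via $\partial (A^{-1}) = -A^{-1}(\partial A)A^{-1}$ and recognizing $(I-\gamma P^{\epol})^{-\top} (1-\gamma) d_0$ as $d_{\epol,\gamma}/d_{\bpol}$ weighted appropriately—collapses the quadratic form into $\rE_{d_{\bpol}}[w_{\epol/\bpol}^2 (r + \gamma V^{\epol}(s') - Q^{\epol}(s,a))^2]$, the Bellman-residual variance. I expect this variance computation to be the main obstacle: it requires carefully tracking how the score/influence-function contributions from the reward estimates and from each transition estimate combine, and showing the cross terms assemble into the single clean Bellman-error expression. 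This is essentially a rederivation of the efficient influence function for tabular OPE.

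Finally, step (iii) is immediate: \citet[Theorem 5]{NathanUehara2019} states that the semiparametric efficiency bound for $R_{\epol}$ is exactly $\rE_{d_{\bpol}}[w_{\epol/\bpol}^2(s,a)(r+\gamma V^{\epol}(s')-Q^{\epol}(s,a))^2]$, so the variance obtained in step (ii) matches it verbatim, and therefore both MWL and MQL are asymptotically efficient in the tabular setting. (An alternative to the explicit delta-method computation in step (ii) is to cite a prior result establishing that the tabular model-based OPE estimator attains the efficiency bound; if such a reference is available it would let us skip the influence-function bookkeeping entirely, but I would still include the reduction in step (i) since that is the genuinely new observation here.)
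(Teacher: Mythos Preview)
Your plan is correct and will go through. The paper's proof, however, takes a slightly different route: rather than first reducing to the model-based estimator $\hat R_{\mathrm{MB}} = \Phi(\hat P,\hat r)$ and then applying the delta method in the $(P,r)$-parameterization, it works directly with the LSTDQ form $D_{q1}^\top D_{q2}^{-1}D_{q3}$ and treats $\hat\beta = D_{q2}^{-1}D_{q3}$ as a Z-estimator solving the empirical Bellman moment condition $\rE_n[\{r+\gamma q(s',\epol;\beta)-q(s,a;\beta)\}\phi(s,a)]=0$. Standard sandwich asymptotics give $\sqrt{n}(\hat\beta-\beta_0)\Rightarrow\mathcal N(0,D_1^{-1}D_2 D_1^{-\top})$, and then a (linear) delta-method step with $D_{q1}^\top$ yields the variance of the return estimate; the tabular simplification uses $D_1^{-1}=(I-\gamma P^{\epol})^{-\top}\rE_{d_{\bpol}}[\phi\phi^\top]^{-1}$ and $D_3^\top I_{\tilde s,\tilde a}=d_{\epol,\gamma}(\tilde s,\tilde a)$ to collapse everything to $\rE_{d_{\bpol}}[w_{\epol/\bpol}^2(r+\gamma V^{\epol}(s')-Q^{\epol}(s,a))^2]$.

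The two routes are equivalent in content---both hinge on the identity $(I-\gamma P^{\epol})^{-\top}(1-\gamma)d_0 = d_{\epol,\gamma}$ and on decomposing the middle variance via $\rE_{d_{\bpol}}[r+\gamma V^{\epol}(s')-Q^{\epol}(s,a)\mid s,a]=0$---but the bookkeeping differs. The paper's Z-estimator framing avoids your step (i) entirely (it never writes down $\hat P$ or $\hat r$) and gets the Bellman-residual structure of $D_2$ for free from the estimating equation, so the cross-term cancellation you anticipate as the ``main obstacle'' is automatic there. Your approach, on the other hand, makes the model-based equivalence explicit up front, which is conceptually appealing and would let you invoke any existing efficiency result for the plug-in estimator to skip the variance computation altogether, as you note in your parenthetical.
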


The details of this theorem and further discussions can be found in Appendix~\ref{sec:efficiency}, where we also show that MSWL and MVL have an asymptotic variance greater than this lower bound. To back up this theoretical finding, we also conduct experiments in the Taxi environment \citep{DietterichT.G.2000HRLw} following \citet[Section  5]{liu2018breaking}, and show that MWL performs significantly better than MSWL in the tabular setting; see 
Appendix~\ref{sec:numerical} for details. It should be noted, however, that our optimal claim is asymptotic, whereas explicit importance weighting of MSWL and MVL may provide strong regularization effects and hence preferred in the regime of insufficient data; we leave the investigation to future work. 



\section{Experiments}
We empirically demonstrate the effectiveness of our methods and compare them to baseline algorithms  in CartPole with function approximation. 
We compare MWL \& MQL to MSWL \citep[with estimated behavior policy]{liu2018breaking} and DualDICE \citep{ChowYinlam2019DBEo}. We use neural networks with 2 hidden layers as function approximators for the main function classes for all methods, and use an RBF kernel for the discriminator classes (except for DualDICE); due to space limit we defer the detailed settings to Appendix~\ref{app:exp_fa}. Figure~\ref{fig:exp_fa} shows the log MSE of relative errors of different methods, where MQL appears to the best among all methods. 
Despite that these methods require different function approximation capabilities and it is difficult to compare them apple-to-apple, the results still show that MWL/MQL can achieve similar performance to related algorithms and sometimes outperform them significantly. 

\begin{figure}[ht]
	\centering
		\includegraphics[scale=0.26]{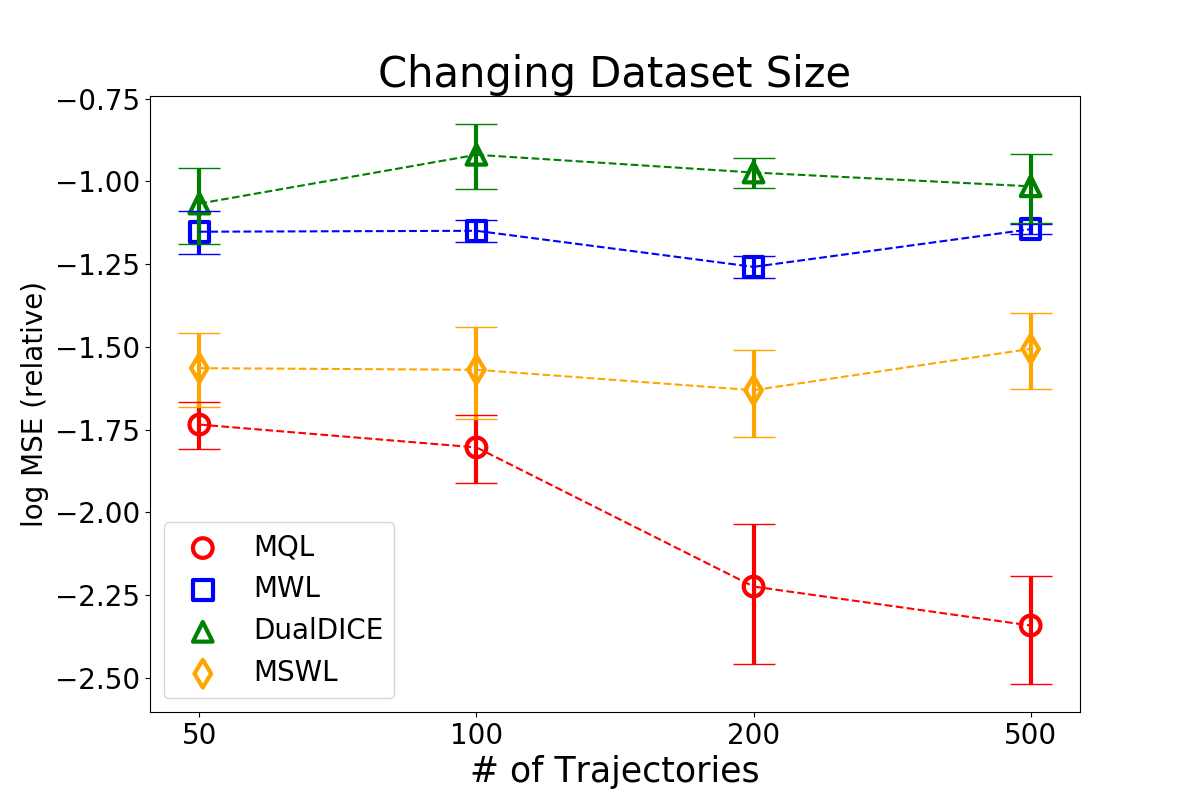}
	\caption{Accuracy of OPE methods as a function of sample size. Error bars show $95\%$ confidence intervals. \label{fig:exp_fa}}
\end{figure}

\section{Discussions} \label{sec:further}
We conclude the paper with further discussions. 

\para{On the dependence of $\hatw$ on $\Fcal$}
In Example~\ref{exm:oracle_mwl}, we have shown that with some special choice of the discriminator class, MWL can pick up very simple weighting functions---such as constant functions---that are very different from the ``true'' $w_{\epol/\bpol}$ and nevertheless produce accurate OPE estimates with very low variances. 
Therefore, the function $w$ that satisfies $\Lw(w,f)=0~ \forall f\in\Fcal$ may not be unique, and the set of feasible $w$  highly depends on the choice of $\Fcal$. This leads to several further questions, 
such as how to choose $\Fcal$ to allow for these simple solutions, and how regularization can yield simple functions for better bias-variance trade-off. \footnote{Such a trade-off is relatively well understood in contextual bandits \citep{KallusNathan2017GOMM,HirshbergDavidA.2017AMLE}, though extension to sequential decision-making is not obvious.} 
In case it is not clear  that the set of feasible $w$ generally depends on $\Fcal$, we provide two additional examples which are also of independent interest themselves. The first example shows that standard step-wise IS can be viewed as a special case of MWL, when we choose a very rich discriminator class of history-dependent functions. 

\begin{example}[Step-wise IS as a Special Case of MWL] \label{exm:is}
Every episodic MDP can be viewed as an equivalent MDP whose state is the \emph{history} of the original MDP. 
The marginal density ratio in this history-based MDP is essentially the cumulative product of importance weight used in step-wise IS, and from Lemma~\ref{lem:characterization} we know that such a function is the unique minimizer of MWL's population loss if $\Fcal$ is chosen to be a sufficiently rich class of functions over histories. See Appendix~\ref{app:stepIS} for more details on this example.
\end{example}

\begin{example}[Bisimulation]
Let $\phi$ be a bisimulation state abstraction (see \citet{li2006towards} for definition). If $\epol$ and $\bpol$ only depend on $s$ through $\phi(s)$, and $\Fcal$ only contains functions that are piece-wise constant under $\phi$, then $w(s,a) = \frac{d_{\epol,\gamma}(\phi(s),a)}{d_{\bpol}(\phi(s),a)}$ also satisfies $\Lw(w, f) = 0,~ \forall f\in\Fcal$. 
\end{example}

\para{Duality between MWL and MQL} From Sections~\ref{sec:mwl} and \ref{sec:mql}, one can observe an obvious symmetry between MWL and MQL from the estimation procedures to the guarantees, which reminds us a lot about the duality between value functions and distributions in linear programming for MDPs. Formalizing this intuition is an interesting direction.\footnote{See the parallel work by \citet{nachum2019algaedice} and the follow-up work of \citet{jiang2020minimax} for some intriguing discussions on this matter.} 


\section*{Acknowledgements}
We would like to thank the anonymous reviewers for
their insightful comments and suggestions. 

Masatoshi Uehara was supported in part by MASASON Foundation. 

\bibliographystyle{chicago}
\bibliography{ic,pfi3}

\appendix
\onecolumn
\newpage 
\begin{table}[!h]
	\centering
	\caption{Table of Notations
	\label{tab:my_label}} 
	\begin{tabular}{l|l}
		$\epol,\bpol$ & Evaluation policy, Behavior policy \\
		$\{(s_i,a_i,r_i,s'_i)\}_{i=1}^{n}$ & Finite sample of data  \\
		$(\mathcal{S},\mathcal{A},P, \mathcal{R},\gamma, \pre)$,\,$\mathcal{X}$ & MDP, $\mathcal{X}= \mathcal{S} \times\mathcal{A}$\\
		$d_{\bpol}$  &  Data distribution over $(s,a,r,s')$ or its marginals \\
		$d_{\epol,\gamma}$  & Normalized discounted occupancy induced by $\epol$ \\
		$(s,a) \sim d_0 \times \epol$ & $s \sim d_0$, $a \sim \epol(s)$\\
		$\beta_{\epol/\bpol}(a,s)$ & Importance weight on action:  $\epol(a|s)/\bpol(a|s)$\\
		$w_{\epol/\bpol}(s,a)$ & $d_{\epol, \gamma}(s,a)/d_{\bpol}(s,a)$   \\
		$C_w$ & Bound on $\|w_{\epol/\bpol}\|_\infty$ \\
		$V^{\epol}$ & Value function  \\
		$Q^{\epol}$ & Q-value function \\
		$R_{\epol}$ & Expected discounted return of $\epol$ \\
		$\Rw[\cdot]$ & OPE estimator using $(\cdot)$ as the weighting function (population version) \\
		$\Rwn[\cdot]$ & OPE estimator using $(\cdot)$ as the weighting function (sample-based version) \\
		$\Rq[\cdot]$ & OPE estimator using $(\cdot)$ as the approximate Q-function \\
		$\rE_{n}$ & Empirical approximation \\
		$\mathcal{W}$, $\mathcal{F}$ & Function classes for MWL \\
		$\mathcal{Q}$, $\mathcal{G}$ & Function classes for MQL \\
		$ \langle \cdot,\cdot\rangle_{\ch_K}$   &  Inner product of RKHS with a kernel $K$\\
		$\conv(\cdot)$ & convex hull \\
		$\nu$ & Uniform measure over the compact space $\mathcal{X}$ \\
		$L^{2}(\mathcal{X},\nu)$ & $L^2$-space on $X$ with respect to measure $\nu$ \\
		$\mathfrak{R}_n(\cdot) $ & Rademacher complexity  \\
		$\lnsim$  & Inequality without constant \\
	\end{tabular}
\end{table}

\section{Proofs and Additional Results of Section~\ref{sec:mwl} (MWL)}
\label{app:mwl}

We first give the formal version of Lemma~\ref{lem:ratio-estimation-informal}, which is Lemmas~\ref{lem:characterization} and \ref{lem:ratio-estimation} below, and then provide their proofs.

\begin{lemma}\label{lem:characterization}
For any function $g(s,a)$, define the map; $g\to \delta(g,s',a')$;
	\begin{align*}
	\delta(g,s',a')=\gamma \int P(s'|s,a)\epol(a'|s')g(s,a)\mathrm{d}\nu(s,a)-g(s',a')+(1-\gamma)\pre(s')\epol(a'|s'). 
	\end{align*}   
Then, $	\delta(d_{\epol, \gamma},s',a')=0\,\forall(s',a')$. 
\end{lemma}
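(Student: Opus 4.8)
\textbf{Proof plan for Lemma~\ref{lem:characterization}.}

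The plan is to verify directly that plugging $g = d_{\epol,\gamma}$ into the defining formula for $\delta$ yields zero, by recognizing the expression as (a rearrangement of) the flow/balance equation satisfied by the discounted occupancy measure. First I would recall the two defining properties of $d_{\epol,\gamma}$: it is the normalized mixture $d_{\epol,\gamma} = (1-\gamma)\sum_{t\ge 0}\gamma^t d_{\epol,t}$, where $d_{\epol,0}(s',a') = \pre(s')\epol(a'|s')$, and the one-step push-forward recursion $d_{\epol,t+1}(s',a') = \epol(a'|s')\int P(s'|s,a)\,d_{\epol,t}(s,a)\,\mathrm{d}\nu(s,a)$ for $t\ge 0$. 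These are just the statements that $(s_0,a_0)\sim \pre\times\epol$ and that $(s_{t+1},a_{t+1})$ is obtained from $(s_t,a_t)$ by one transition step followed by an action draw from $\epol$.

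The key computation is then: substitute $g = d_{\epol,\gamma}$ into $\delta$, pull the $\gamma$ inside and use linearity of the integral over the mixture, so that $\gamma\int P(s'|s,a)\epol(a'|s')\,d_{\epol,\gamma}(s,a)\,\mathrm{d}\nu(s,a) = (1-\gamma)\sum_{t\ge0}\gamma^{t+1} d_{\epol,t+1}(s',a')= (1-\gamma)\sum_{t\ge1}\gamma^{t} d_{\epol,t}(s',a')$. Meanwhile $d_{\epol,\gamma}(s',a') = (1-\gamma)\sum_{t\ge0}\gamma^t d_{\epol,t}(s',a')$, so the difference $\gamma(\cdots) - d_{\epol,\gamma}(s',a')$ collapses the telescoping sum down to $-(1-\gamma)d_{\epol,0}(s',a') = -(1-\gamma)\pre(s')\epol(a'|s')$, which exactly cancels the last additive term $(1-\gamma)\pre(s')\epol(a'|s')$. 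Hence $\delta(d_{\epol,\gamma},s',a')=0$ for all $(s',a')$.

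There is no serious obstacle here; the only points requiring a modicum of care are (i) justifying the interchange of the infinite sum with the integral, which follows from nonnegativity (Tonelli) or from the fact that each $d_{\epol,t}$ is a probability measure so the series converges absolutely for $\gamma\in[0,1)$, and (ii) making sure the index shift $t\mapsto t+1$ is handled correctly so the $t=0$ term is the one left over. The essence of the lemma is simply that the discounted occupancy is the stationary point of the discounted Bellman-flow operator, and the formula for $\delta$ is precisely (minus) the residual of that fixed-point equation evaluated at an arbitrary test density $g$.
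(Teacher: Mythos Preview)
Your proposal is correct and is essentially the same argument as the paper's proof: both use the mixture definition $d_{\epol,\gamma}=(1-\gamma)\sum_{t\ge0}\gamma^t d_{\epol,t}$, the identification $d_{\epol,0}=\pre\times\epol$, and the one-step push-forward recursion for $d_{\epol,t+1}$, then perform the index shift $t\mapsto t+1$ so that only the $t=0$ term survives. The paper merely arranges the same computation from the other side, starting from $d_{\epol,\gamma}(s',a')$ and expanding it into the sum of the initial term and the $\gamma$-weighted integral, which is equivalent to your cancellation.
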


\begin{proof}
We have 
\begin{align*}
    d_{\epol,\gamma}(s',a')&=(1-\gamma)\sum_{t=0}^{\infty}\gamma^t d_{\epol,t}(s',a') \\
    &=(1-\gamma)\left\{\pre(s')\epol(a'|s')+ \sum_{t=1}^{\infty}\gamma^t d_{\epol,t}(s',a')\right\}\\
    &=(1-\gamma)\left\{\pre(s')\epol(a'|s')+  \sum_{t=0}^{\infty}\gamma^{t+1} d_{\epol,t+1}(s',a')\right\}\\
    &=(1-\gamma)\left\{\pre(s')\epol(a'|s')+ \gamma \sum_{t=0}^{\infty} \int P(s'|s,a)\epol(a'|s')\gamma^t d_{\epol,t}(s,a)\mathrm{d}\nu(s,a)\right\}\\
    &=(1-\gamma)\pre(s')\epol(a'|s')+ \gamma \int  P(s'|s,a)\epol(a'|s') d_{\epol,\gamma}(s,a)\mathrm{d}\nu(s,a). 
\end{align*}
This concludes $\delta(d_{\epol,\gamma},s',a')=0\,\forall (s',a')$. 
\end{proof}
	
\begin{lemma}\label{lem:ratio-estimation}
$\Lw(w_{\epol/\bpol},f)=0$ $\forall f \in L^2(\mathcal{X},\nu):=\{f: \int f(s,a)^2\mathrm{d}\mathrm{\nu}<\infty \}$. Moreover, if we further assume that (a) $d_{\bpol}(s,a) > 0~\forall (s,a)$, (b) $g(s,a)=d_{\epol,\gamma}(s,a)$ if and only if $\delta(g,s',a')=0\,\forall (s',a')$, then $w_{\epol/\bpol}$ is the only function that satisfies such a property.
\end{lemma}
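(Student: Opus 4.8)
The plan is to prove the two assertions of Lemma~\ref{lem:ratio-estimation} separately, leaning on Lemma~\ref{lem:characterization} for the structural fact about $d_{\epol,\gamma}$.

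\textbf{Step 1 (the forward direction: $\Lw(w_{\epol/\bpol},f)=0$).} First I would unfold the definition of $\Lw$ and change measure from $d_{\bpol}$ to $\nu$. Writing the expectations as integrals against $\nu$ and substituting $w_{\epol/\bpol}(s,a)d_{\bpol}(s,a) = d_{\epol,\gamma}(s,a)$, the first two terms of $\Lw(w_{\epol/\bpol},f)$ become $\int d_{\epol,\gamma}(s,a)\{\gamma f(s',\epol) - f(s,a)\}\,\ldots$; integrating out $s'$ against $P(s'|s,a)$ and $a'$ against $\epol(a'|s')$ turns the $\gamma f(s',\epol)$ term into $\gamma \int \int P(s'|s,a)\epol(a'|s') f(s',a') d_{\epol,\gamma}(s,a)\,\mathrm{d}\nu$. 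The last term, $(1-\gamma)\E_{s\sim\pre}[f(s,\epol)]$, is $\int (1-\gamma)\pre(s')\epol(a'|s') f(s',a')\,\mathrm{d}\nu(s',a')$. Collecting everything as a single integral $\int f(s',a')\,\delta(d_{\epol,\gamma},s',a')\,\mathrm{d}\nu(s',a')$, which vanishes because $\delta(d_{\epol,\gamma},\cdot,\cdot)\equiv 0$ by Lemma~\ref{lem:characterization}. The main care needed here is bookkeeping of the dummy variables and a Fubini/Tonelli justification, which is fine since $f\in L^2(\Xcal,\nu)$, $\nu$ is finite on the compact $\Xcal$, and all the densities and kernels are integrable.

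\textbf{Step 2 (uniqueness).} Suppose $w$ also satisfies $\Lw(w,f)=0$ for all $f\in L^2(\Xcal,\nu)$. Running the computation of Step~1 in reverse with $g(s,a) := w(s,a)d_{\bpol}(s,a)$ in place of $d_{\epol,\gamma}$, the condition $\Lw(w,f)=0\ \forall f$ is equivalent to $\int f(s',a')\,\delta(g,s',a')\,\mathrm{d}\nu(s',a') = 0$ for all $f\in L^2(\Xcal,\nu)$; note $g\in L^2$ (hence $L^1$) since $w$ is bounded by $C_w$ on the compact $\Xcal$ and $d_{\bpol}$ is a density, so $\delta(g,\cdot,\cdot)$ is a well-defined $L^1$ (in fact $L^2$) function and the pairing makes sense. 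Taking $f = \delta(g,\cdot,\cdot)$ (or invoking density of a suitable test class) forces $\delta(g,s',a') = 0$ for $\nu$-a.e.\ $(s',a')$. By assumption (b) this yields $g(s,a) = d_{\epol,\gamma}(s,a)$ $\nu$-a.e., and then assumption (a), $d_{\bpol}(s,a)>0$ everywhere, lets us divide to conclude $w(s,a) = d_{\epol,\gamma}(s,a)/d_{\bpol}(s,a) = w_{\epol/\bpol}(s,a)$ $\nu$-a.e.

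\textbf{Anticipated obstacle.} The only delicate point is the ``test function'' argument that $\int f\,\delta(g)\,\mathrm{d}\nu = 0$ for all $f\in L^2$ implies $\delta(g)=0$ a.e.; this is immediate once we know $\delta(g,\cdot,\cdot)\in L^2(\Xcal,\nu)$ (plug it in as $f$), so the real work is the bound $\|\delta(g)\|_{L^2}<\infty$, which follows from boundedness of $w$, finiteness of $\nu$, and the stochastic-kernel structure of $P$ and $\epol$. Everything else is a routine rearrangement of integrals, and assumptions (a) and (b) are exactly what is needed to pass from ``$\delta$ vanishes'' back to ``$w = w_{\epol/\bpol}$''.
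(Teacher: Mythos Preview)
Your proposal is correct and follows essentially the same route as the paper: both rewrite $\Lw(w,f)$ as $\int f(s',a')\,\delta(\tilde g,s',a')\,\mathrm{d}\nu(s',a')$ with $\tilde g = w\cdot d_{\bpol}$, invoke Lemma~\ref{lem:characterization} for the forward direction, and for uniqueness argue that the pairing vanishing against all $f\in L^2$ forces $\delta(\tilde g,\cdot,\cdot)=0$, then use assumptions (b) and (a) in turn. The only cosmetic difference is that the paper phrases the last step via the Riesz representation theorem whereas you plug in $f=\delta(g,\cdot,\cdot)$ directly; these are the same argument, and your added remarks about $\delta(g)\in L^2$ simply make explicit an integrability check the paper leaves implicit.
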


\begin{proof}
Here, we denote $\beta_{\epol/\bpol}(s,a)=\epol(a|s)/\bpol(a|s)$. 
Then,  we have 
	\begin{align*}
	&\Lw(w,f) \\
	&= \rE_{d_{\bpol}}[\gamma w(s,a)f(s',\epol)-w(s,a) f(s,a)]+(1-\gamma)\rE_{\pre \times \epol}[f(s,a)] \\
	&= \rE_{d_{\bpol}}[\gamma w(s,a)\beta_{\epol/\bpol}(a',s')f(s',a')]-\rE_{d_{\bpol}}[ w(s,a)f(s,a)]+(1-\gamma)\rE_{\pre \times \epol}[f(s,a)] \\
    &= \gamma \int f(s',a')P(s'|s,a)\epol(a'|s')d_{\bpol}(s,a)w(s,a)\rD \nu(s,a,s',a')+ \\ 
    &-\int w(s',a')f(s',a')d_{\bpol}(s',a') \rD \nu(s',a')+\int (1-\gamma)\pre(s')\epol(a'|s')f(s',a')\rD \nu(a',s') \\
	&= \int \delta(\tilde g,s',a')f(s',a')\rD\nu(s',a'),
	\end{align*}
	where $\tilde g(s,a)=d_{\bpol}(s,a)w(s,a)$. Note that $\rE_{d_{\bpol}}[\cdot]$ means the expectation with respect to $d_{\bpol}(s,a)P(s'|s,a)\bpol(a'|s')$. 
	
\textbf{First statement }

We prove that $\Lw(w_{\epol/\bpol},f)=0\, \forall f\in L^{2}(\Xcal,\nu)$.  This follows because 
\begin{align*}
    \Lw(w_{\epol/\bpol},f)= \int \delta(d_{\epol,\gamma},s',a')f(s',a')\rD\nu(s',a')=0. 
\end{align*}
Here, we have used \cref{lem:characterization}: $\delta(d_{\epol,\gamma},s',a')=0\,\forall(s',a')$.

\textbf{Second  statement }	

We prove the uniqueness part. Assume $\Lw(w,f)=0\,\forall f\in L^2(\Xcal,\nu)$ holds. Noting the 
\begin{align*}
   \Lw(w,f)=\langle \delta (\tilde g,s',a'),f(s',a') \rangle, 
\end{align*}
where the inner product is for Hilbert space $L^2(\Xcal,\nu)$, the Riesz representative of the functional $f\to \Lw(w,f)$ is $\delta(\tilde g,s',a')$. From the Riesz representation  theorem and the assumption $\Lw(w,f)=0\,\forall f\in L^2(\Xcal,\nu)$, the Riesz representative is uniquely determined as $0$, that is, $\delta(\tilde g,s',a')=0$.

From the assumption (b), this implies $\tilde g=d_{\epol,\gamma}$. From the assumption (a) and the definition of $\tilde g$, this implies $w=w_{\epol/\bpol}$. This concludes the proof. 
\end{proof}

\begin{proof}[\textbf{Proof of Theorem \ref{thm:ipw}}]
We prove two helper lemmas first. 

\begin{lemma}\label{lem:loss_1}
	\begin{align*}
	\Lw(w,f)=\rE_{d_{\bpol}}[\{w_{\epol/\bpol}(s,a) -w(s,a)\}\prod f(s,a)],
	\end{align*}
	where $\prod f(s,a)=f(s,a)-\gamma \rE_{s' \sim P(s,a), a'\sim \epol(s') }[f(s',a' )]$. 
\end{lemma}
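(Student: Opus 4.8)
\textbf{Proof proposal for Lemma~\ref{lem:loss_1}.}
The plan is to expand the claimed right-hand side $\rE_{d_{\bpol}}[\{w_{\epol/\bpol}(s,a)-w(s,a)\}\prod f(s,a)]$ and show term-by-term that it reduces to the definition of $\Lw(w,f)$ in Eq.\eqref{eq:mwl_loss}. Two elementary tools do all the work: the tower property (law of iterated expectations conditioning on $(s,a)$) under the data-generating distribution $d_{\bpol}$, and the change-of-measure identity $\rE_{d_{\bpol}}[w_{\epol/\bpol}(s,a)\,h(s,a)] = \rE_{(s,a)\sim d_{\epol,\gamma}}[h(s,a)]$, valid for any integrable $h$, which is immediate from $w_{\epol/\bpol}=d_{\epol,\gamma}/d_{\bpol}$ (and well-defined since $w_{\epol/\bpol}$ is bounded by $C_w$, \cref{asm:Cw}).

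The steps, in order, are as follows. First, I would handle the $w$-part: since under $d_{\bpol}$ we have $s'\sim P(s,a)$ and $f(s',\epol)=\rE_{a'\sim\epol(s')}[f(s',a')]$, the tower property gives
\[
\rE_{d_{\bpol}}[w(s,a)\,\textstyle\prod f(s,a)] = \rE_{d_{\bpol}}[w(s,a)f(s,a)] - \gamma\,\rE_{d_{\bpol}}[w(s,a)f(s',\epol)],
\]
which is exactly minus the first two terms of $\Lw(w,f)$. Second, I would handle the $w_{\epol/\bpol}$-part by the change of measure, reducing it to $\rE_{(s,a)\sim d_{\epol,\gamma}}[\prod f(s,a)]$, and then invoke the discounted-occupancy flow identity. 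Concretely, integrating the equality $\delta(d_{\epol,\gamma},\cdot)=0$ from \cref{lem:characterization} against $f$ (equivalently, telescoping the definition $d_{\epol,\gamma}=(1-\gamma)\sum_t\gamma^t d_{\epol,t}$ over time steps) yields
\[
\rE_{(s,a)\sim d_{\epol,\gamma}}[f(s,a)] - \gamma\,\rE_{(s,a)\sim d_{\epol,\gamma},\,s'\sim P(s,a),\,a'\sim\epol(s')}[f(s',a')] = (1-\gamma)\,\rE_{s\sim\pre}[f(s,\epol)],
\]
i.e.\ $\rE_{d_{\epol,\gamma}}[\prod f(s,a)] = (1-\gamma)\rE_{\pre}[f(s,\epol)]$, which is exactly the third term of $\Lw(w,f)$. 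Combining the two parts gives $\rE_{d_{\bpol}}[\{w_{\epol/\bpol}-w\}\prod f] = (1-\gamma)\rE_{\pre}[f(s,\epol)] - \rE_{d_{\bpol}}[w f(s,a)] + \gamma\rE_{d_{\bpol}}[w f(s',\epol)] = \Lw(w,f)$, as desired.

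The only mildly delicate point—hardly an obstacle here—is justifying the interchange of expectation/integration used in the flow identity (a Fubini argument), but this is immediate under the paper's standing assumptions: $\Xcal$ is compact, $f\in L^2(\Xcal,\nu)$, $w_{\epol/\bpol}\le C_w$, and $d_{\epol,\gamma},d_{\bpol}$ are probability densities w.r.t.\ $\nu$. Alternatively, one can bypass the tower-property bookkeeping entirely by reusing the representation already derived in the proof of \cref{lem:ratio-estimation}, namely $\Lw(w,f)=\int \delta(\tilde g,s',a')f(s',a')\,\rd\nu(s',a')$ with $\tilde g(s,a)=d_{\bpol}(s,a)w(s,a)$; subtracting $\delta(d_{\epol,\gamma},\cdot)=0$ and using that $\delta(g_1,\cdot)-\delta(g_2,\cdot)$ is linear in $g_1-g_2$ turns the integrand into $-\{\tilde g - d_{\epol,\gamma}\}(s,a)\,\prod f(s,a)$ after a Fubini swap, and substituting $\tilde g - d_{\epol,\gamma} = d_{\bpol}\cdot(w - w_{\epol/\bpol})$ gives the claim. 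I would present the first (self-contained) version in the body.
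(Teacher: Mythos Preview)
Your proposal is correct and follows essentially the same approach as the paper: both arguments hinge on the identity $\Lw(w_{\epol/\bpol},f)=0$ (equivalently, the discounted-occupancy flow identity of \cref{lem:characterization}) together with linearity of $\Lw$ in its first argument and the tower property to rewrite $\gamma\,\rE_{d_{\bpol}}[w(s,a)f(s',\epol)]$ in terms of $\prod f$. The paper's version is simply more compact---it writes $\Lw(w,f)=\Lw(w,f)-\Lw(w_{\epol/\bpol},f)$ in one line (citing the already-proved first statement of \cref{lem:ratio-estimation}), lets the $(1-\gamma)\rE_{\pre}[f(s,\epol)]$ terms cancel, and collapses the remainder---whereas you expand the right-hand side and handle the $w$- and $w_{\epol/\bpol}$-parts separately, re-deriving the flow identity rather than citing it; but the content is identical.
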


\begin{proof}[Proof of Lemma \ref{lem:loss_1}]
	\begin{align*}
	\Lw(w,f) &= \Lw(w,f)- \Lw(w_{\epol/\bpol},f)\\ 
	&= \rE_{d_{\bpol}}[\{\gamma \{w(s,a)-w_{\epol/\bpol}(s,a)\} \beta_{\epol/\bpol}(a',s')f(a',s')] \\
	&-\rE_{d_{\bpol}}[\{w(s,a)-w_{\epol/\bpol}(s,a)\}f(s,a)]\\
	&=\rE_{d_{\bpol}}[\{w_{\epol/\bpol}(s,a) -w(s,a)\}\prod f(s,a)].  \tag*{\qedhere}
	\end{align*}
\end{proof}

\begin{lemma}\label{lem:loss_2}
Define
	\begin{align*}
	f_{g}(s,a)=\rE_{\epol}\left[\sum_{t=0}^{\infty}\gamma^{t}g(s_t,a_t)|s_0=s,a_0=a\right].
	\end{align*}
Here, the expectation is taken with respect to the density $P(s_1|s_0,a_0)\epol(a_1|s_1)P(s_2|s_1,a_1)\cdots$. Then, $f=f_g$ is a solution to $g=\prod f$. 
\end{lemma}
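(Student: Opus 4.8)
The plan is to recognize $f_g$ as the policy-$\epol$ ``value function'' of an MDP whose per-step reward is $g$: then the operator $f \mapsto \prod f = f - \gamma \rE_{s'\sim P(s,a),\,a'\sim\epol(s')}[f(s',a')]$ is exactly $\rI - \gamma P^{\epol}$ applied to such a value function, and $\prod f_g = g$ is just the corresponding Bellman equation, proved by telescoping the geometric series $f_g = \sum_{t\ge 0}\gamma^t (P^{\epol})^t g$. Concretely, I would first fix $(s,a)$ and peel off the $t=0$ term of the series defining $f_g$:
\[
f_g(s,a) = g(s,a) + \rE_{\epol}\Big[\textstyle\sum_{t=1}^{\infty}\gamma^{t} g(s_t,a_t)\,\Big|\, s_0=s,\, a_0=a\Big].
\]
All such manipulations are legitimate because $g$ is bounded (say $|g|\le M$), so $\big|\sum_{t\ge 0}\gamma^{t} g(s_t,a_t)\big|\le M/(1-\gamma)$ and dominated convergence lets us split the sum, re-index it, and interchange summation with expectation freely.

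Next I would invoke the Markov property in its ``shift'' form: conditioning additionally on $(s_1,a_1)$, the sub-trajectory $s_1,a_1,s_2,a_2,\ldots$ has the same law as a fresh $\epol$-trajectory started from $(s_1,a_1)$. Re-indexing $t \mapsto t-1$ and pulling out one factor of $\gamma$ then gives
\[
\rE_{\epol}\Big[\textstyle\sum_{t=1}^{\infty}\gamma^{t} g(s_t,a_t)\,\Big|\, s_0=s,\, a_0=a\Big] = \gamma\,\rE_{s_1\sim P(s,a),\, a_1\sim\epol(s_1)}\big[f_g(s_1,a_1)\big],
\]
since the inner conditional expectation is precisely $f_g(s_1,a_1)$. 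Combining the two displays yields $f_g(s,a) = g(s,a) + \gamma\,\rE_{s'\sim P(s,a),\,a'\sim\epol(s')}[f_g(s',a')]$, which rearranges to $\prod f_g(s,a) = f_g(s,a) - \gamma\,\rE_{s'\sim P(s,a),\,a'\sim\epol(s')}[f_g(s',a')] = g(s,a)$, i.e. $g = \prod f_g$, as claimed.

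The ``hard part'' here is bookkeeping rather than a conceptual hurdle: one must make sure the infinite sum converges absolutely (so that the $t=0$/$t\ge 1$ split and the re-indexing are valid) and state the Markov property in exactly the form needed --- namely that, given $(s_1,a_1)$, the law of the shifted trajectory coincides with that of a trajectory initialized at $(s_1,a_1)$ under $\epol$. Under the standing boundedness assumptions (and since the $g$'s of interest are bounded, discriminator-type functions) this is immediate and no further conditions are required; in particular the identity is purely algebraic and needs neither $d_{\bpol}(s,a)>0$ nor any realizability hypothesis.
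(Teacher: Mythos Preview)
Your proposal is correct and is essentially the same telescoping argument as the paper's own proof: the paper writes $\prod f_g(s,a)=f_g(s,a)-\gamma\,\rE_{s'\sim P(s,a),\,a'\sim\epol(s')}[f_g(s',a')]$, expands both terms as the discounted sums, and cancels all but the $t=0$ term to obtain $g(s,a)$. You simply phrase the same cancellation as ``peel off $t=0$ then use the Markov shift,'' and you add the (helpful but not strictly necessary) remarks about absolute convergence and the Markov property that the paper leaves implicit.
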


\begin{proof}[Proof of Lemma \ref{lem:loss_2}]
	\begin{align*}
	& \prod f_{g}(s,a) \\
	&=f_{g}(s,a)-\gamma \rE_{s' \sim P(s,a), a'\sim \epol(s')}[f_g(s',a')] \\ 
	&= \rE_{\epol}\left[\sum_{t=0}^{\infty}\gamma^{t}g(s_t,a_t)|s_0=s,a_0=a\right]-\rE_{\epol}\left[\sum_{t=0}^{\infty}\gamma^{t+1}g(s_{t+1},a_{t+1})|s_0=s,a_0=a\right] \\
	&=\rE_{\epol}[g(s_0,a_0)|s_0=s,a_0=a]=g(s,a). \tag*{\qedhere}
	\end{align*}
\end{proof}

We continue with the main proof of Theorem~\ref{thm:ipw}. Here, we have  $\Lw(w,Q^{\epol})=R_{\epol}-\Rw[w]$ since
\begin{align*}
	\Lw(w,Q^{\epol})&=\rE_{d_{\bpol}}[\{w_{\epol/\bpol}(s,a)-w(s,a)\}\prod Q^{\epol}(s,a) ] \\ 
	&=\rE_{d_{\bpol}}[\{w_{\epol/\bpol}(s,a)-w(s,a)\}\rE[r|s,a]  ] \\
	&=\rE_{d_{\bpol}}[\{w_{\epol/\bpol}(s,a)-w(s,a)\}r ]= R_{\epol}-\Rw[w]. 
	\end{align*}
In the first line, we have used \cref{lem:loss_1}. From the first line to the second line, we have used \cref{lem:loss_2}. 

Therefore, if $Q^{\epol}\in \conv(\Fcal)$, for any $w$, 
\begin{align*}
    |R_{\epol}-\Rw[w]|= |\Lw(w,Q^{\epol})|\leq \max_{f\in \conv(\Fcal)}|\Lw(w,f)|=\max_{f\in \Fcal}|\Lw(w,f)|. 
\end{align*}
Here, we have used a fact that $\max_{f\in \conv(\Fcal)}|\Lw(w,f)|=\max_{f\in \Fcal}|\Lw(w,f)|$. This is proved as follows. First, $ \max_{f\in \conv(\Fcal)}|\Lw(w,f)|$ is equal to $|\Lw(w,\tilde f)|$ where $\tilde f=\sum \lambda_i f_i$ and $\sum \lambda_i=1$. Since $|\Lw(w,\tilde f)|\leq \sum \lambda_i |\Lw(w,f_i)|\leq  \max_{f\in \Fcal}|\Lw(w,f)|$, we have 
$$ \max_{f\in \conv(\Fcal)}|\Lw(w,f)|\leq \max_{f\in \Fcal}|\Lw(w,f)|.$$ The reverse direction is obvious. 

Finally, from the definition of $\hatw$, 
\begin{align*}
    |R_{\epol}-\Rw[\hatw]|\leq \min_{w\in \Wcal}\max_{f\in \Fcal}|\Lw(w,f)|.  \tag*{\qedhere}
\end{align*}
\end{proof}

\begin{remark} \label{rem:approx}
When $Q^{\epol} \in \conv(\Fcal)$ is only approximately satisfied, it is straightforward to incorporate the approximation errors into Theorem~\ref{thm:ipw}, where the theorem statement becomes:
\begin{align*}
	\ts
	|R_{\epol}-\Rw[\hat w]| &\leq \max_{f\in \Fcal}|\Lw(w,f)| +\max_{w^{\dagger}\in \mathcal{W}} \min_{q^{\dagger}\in \mathcal{F}}|\Lw(w^{\dagger},Q^{\epol}-q^{\dagger})|.
\end{align*}

For completeness we include the proof below:
\begin{proof}
For any $w$ and for any $q^{\dagger}\in \mathcal{F}$, 
\begin{align*}
	|R_{\epol}-\Rw[w]|= |\Lw(w,Q^{\epol})|
	&\leq |\Lw(w,q^{\dagger})|+ |\Lw(w,Q^{\epol}-q^{\dagger})| \\
	&\leq \max_{f\in \Fcal}|\Lw(w,f)|+  |\Lw(w,Q^{\epol}-q^{\dagger})|. 
\end{align*}
Therefore, for any $w$, 
\begin{align*}
	|R_{\epol}-\Rw[w]| & \leq \max_{f\in \Fcal}|\Lw(w,f)|+  \min_{q^{\dagger}\in \mathcal{F}}|\Lw(w,Q^{\epol}-q^{\dagger})|.
\end{align*}
Then, for any $w$, 
\begin{align*}
	|R_{\epol}-\Rw[w]|
	&\leq \max_{f\in \Fcal}|\Lw(w,f)|+  \max_{w^{\dagger}\in \mathcal{W}} \min_{q^{\dagger}\in \mathcal{F}}|\Lw(w^{\dagger},Q^{\epol}-q^{\dagger})|.
\end{align*}
Thus, 
\begin{align*}
	|R_{\epol}-\Rw[\hat w]|\leq  \min_{w\in \mathcal{W}}\max_{f\in \Fcal}|\Lw(w,f)|+ \max_{w^{\dagger}\in \mathcal{W}} \min_{q^{\dagger}\in \mathcal{F}}|\Lw(w^{\dagger},Q^{\epol}-q^{\dagger})| .
\end{align*}
where $\hat w= \argmin_{w\in \mathcal{W}}\max_{f\in \Fcal}|\Lw(w,f)| $.  
Notice that the additional term $\max_{w^{\dagger}\in \mathcal{W}}\min_{q^{\dagger}\in \mathcal{F}}|\Lw(w^\dagger,Q^{\epol}-q^{\dagger})|$ becomes $0$ when $Q^{\epol} \in \conv(\mathcal{F})$, in which case we recover Theorem~\ref{thm:ipw}. 
\end{proof}
\end{remark}

\begin{proof}[\textbf{Proof of Lemma \ref{lem:kernel}}]
We have 
	\begin{align}
	&\Lw(w,f)^{2}  \nonumber\\
	&=\left \{\rE_{d_{\bpol}}[\gamma w(s,a)  \rE_{a'\sim \epol(s')}[f(s',a')]-w(s,a)f(s,a)]+(1-\gamma)\rE_{\pre \times \epol}[f(s,a)]\right \}^{2} \label{eq:kernel1}\\
	&= \{\rE_{d_{\bpol}}[\gamma w(s,a)  \rE_{a'\sim \epol(s')}[\langle f,K((s',a'),\cdot) \rangle_{\ch_K}]-w(s,a)\langle f,K((s,a),\cdot) \rangle_{\ch_K}] \label{eq:kernel2} \\
	&+(1-\gamma)\rE_{\pre \times \epol}[\langle f,K((s,a),\cdot) \rangle_{\ch_K}]\}^{2} \nonumber \\
	&= \langle f, f^{*}\rangle_{\ch_K}^{2}, \label{eq:kernel3}
	\end{align}
	where $$f^{*}(\cdot)=\rE_{d_{\bpol}}[\gamma w(s,a)\rE_{a'\sim \epol(s')}[K((s',a'),\cdot)] -w(s,a)K((s,a),\cdot) ]+(1-\gamma)\rE_{\pre \times \epol}[K((s,a),\cdot)].$$
	Here, from \eqref{eq:kernel1} to \eqref{eq:kernel2}, we have used a reproducing property of RKHS; $f(s,a)=\langle f(\cdot),K((s,a),\cdot)\rangle$. From \eqref{eq:kernel2} to \eqref{eq:kernel3}, we have used a linear property of the inner product. 
	
	Therefore, 
	\begin{align*}
	\max_{f \in \mathcal{F}} \Lw(w,f)^{2}=\max_{f \in \mathcal{F}} \langle f,f^{*} \rangle^{2}_{\ch_H}=\langle f^{*},f^{*} \rangle^{2}_{\ch_H}. 
	\end{align*}
	from Cauchy–-Schwarz inequality. This is equal to 
\begin{align*}
\max_{f \in \mathcal{F}} \Lw(w,f)^{2}&= \rE_{d_{\bpol}}[\gamma^{2} w(s,a)w(\tilde{s},\tilde{a})\rE_{a'\sim \epol(s'),\tilde a'\sim \epol(\tilde s')}[K((s',a'),(\tilde{s}',\tilde{a}'))]] + \\
&+\rE_{d_{\bpol}}[w(s,a)w(\tilde s,\tilde a)K((s,a),(\tilde s,\tilde a))]\\
&+(1-\gamma)^{2}\rE_{\pre \times \epol}[K((s,a),(\tilde s,\tilde a))] \\
&-2\rE_{d_{\bpol}}[\gamma w(s,a)w(\tilde s,\tilde a)\rE_{a' \sim \epol(s')}[K((s',a'),(\tilde s,\tilde a))] ]  \\
& +2\gamma(1-\gamma) \rE_{(s,a)\sim d_{\bpol},(\tilde s,\tilde a)\sim \pre \times \epol} [\gamma w(s,a)\rE_{a' \sim \epol(s')}[K((s',a'),(\tilde s,\tilde a))] ] \\
&-2(1-\gamma)\rE_{(s,a)\sim d_{\bpol},(\tilde s,\tilde a)\sim \pre \times \epol }[w(s,a)w(\tilde s,\tilde a)K((s,a),(\tilde s,\tilde a)) ],
\end{align*}
where the first expectation is taken with respect to the density 
$d_{\bpol}(s,a,s')d_{\bpol}(\tilde s,\tilde a,\tilde s')$. 

For example, the term $(1-\gamma)^{2}\rE_{\pre \times \epol}[K((s,a),(\tilde s,\tilde a))]$ is derived by 
\begin{align*}
    & \langle (1-\gamma)\rE_{\pre \times \epol}[K((s,a),\cdot)],    (1-\gamma)\rE_{\pre \times \epol}[K((s,a),\cdot)] \rangle_{\ch_K} \\
    &= (1-\gamma)^2\left \langle \int K((s,a),\cdot)\pre(s)\pi(a|s)\nu(a,s)
    ,\int K((\tilde s,\tilde a),\cdot)\pre(\tilde s)\pi(\tilde a| \tilde s)\nu( \tilde a,\tilde s) \right \rangle_{\ch_K}\\
     &= (1-\gamma)^2\int 
      \langle  K((s,a),\cdot),K((\tilde s,\tilde a),\cdot)  \rangle_{\ch_K} \pre(s)\pi(a|s)\pre(\tilde s)\pi(\tilde a| \tilde s)\rD \nu(\tilde a,\tilde s,\tilde a',\tilde s') \\
    &= (1-\gamma)^2\int 
 K((s,a),(\tilde s,\tilde a))  \pre(s)\pi(a|s)\pre(\tilde s)\pi(\tilde a| \tilde s)\rD \nu(\tilde a,\tilde s,\tilde a',\tilde s').  
\end{align*}
Other term are derived in a similar manner. Here, we have used a kernel property 
\begin{align}
\langle K((s,a),\cdot), K((\tilde s,\tilde a),\cdot)\rangle_{\ch_K}= K((s,a),(\tilde s,\tilde a)).\tag*{\qedhere}
\end{align}
\end{proof}

Next we show the result mentioned in the main text, that the minimizer of $\max_{f \in \Fcal} \Lw(w, f)$ is unique when $\Fcal$ corresponds to an ISPD kernel. 
\begin{theorem}\label{thm:key2}
	Assume $\Wcal$ is realizable, i.e., $w_{\epol/\bpol} \in \mathcal{W}$ and conditions (a), (b) in \cref{lem:ratio-estimation}. 
	Then for $\Fcal=L^2(\mathcal{X},\nu)$,  $\hatw(s,a)=w_{\epol/\bpol}(s,a)$ is the unique minimizer of $\max_{f\in\Fcal}\Lw(w, f)$. The same result holds when $\mathcal{F}$ is a RKHS associated with a integrally strictly positive definite (ISPD) kernel $K(\cdot,\cdot)$ \citep{SriperumbudurBharath2010Hsea}.
\end{theorem}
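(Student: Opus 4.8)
The plan is to reduce both cases to the identity established inside the proof of \cref{lem:ratio-estimation}, namely
\[
\Lw(w,f) = \int \delta(\tilde g, s', a')\, f(s',a')\, \rd\nu(s',a'), \qquad \tilde g(s,a) := d_{\bpol}(s,a)\, w(s,a),
\]
together with the conclusion (under assumptions (a), (b)) that $\delta(\tilde g,\cdot)\equiv 0$ forces $w = w_{\epol/\bpol}$. Since $\max_{f\in\Fcal}\Lw(w,f)$ is a supremum of a linear functional over a set symmetric about the origin, it is always nonnegative, so it suffices to show it vanishes exactly when $w = w_{\epol/\bpol}$; then realizability ($w_{\epol/\bpol}\in\Wcal$) makes $w_{\epol/\bpol}$ the unique minimizer over $\Wcal$.

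For $\Fcal = L^2(\Xcal,\nu)$: the displayed identity says $\Lw(w,f) = \langle \delta(\tilde g,\cdot), f\rangle_{L^2(\nu)}$, so if $\delta(\tilde g,\cdot)\neq 0$ the supremum over $f\in L^2$ is $+\infty$ (take $f = c\,\delta(\tilde g,\cdot)$ and let $c\to\infty$), and if $\delta(\tilde g,\cdot)= 0$ it is $0$. Hence $\max_{f\in\Fcal}\Lw(w,f)=0 \iff \delta(\tilde g,\cdot)=0 \iff w=w_{\epol/\bpol}$, the last step being precisely the uniqueness argument of \cref{lem:ratio-estimation}; realizability then yields the claim.

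For $\Fcal = \{f : \langle f,f\rangle_{\ch_K}\le 1\}$ with $K$ an ISPD kernel: I would invoke the RKHS closed form of \cref{lem:kernel}, or re-derive it directly by pushing the $L^2(\nu)$ pairing into $\ch_K$ via the reproducing property, writing $\Lw(w,f) = \langle h^{*}, f\rangle_{\ch_K}$ with $h^{*}(\cdot) := \int \delta(\tilde g, x)\,K(x,\cdot)\,\rd\nu(x)$ (well defined in $\ch_K$ since $\delta(\tilde g,\cdot)\in L^2(\Xcal,\nu)$ on the compact $\Xcal$ and $K$ is bounded). Cauchy--Schwarz then gives
\[
\max_{f\in\Fcal}\Lw(w,f)^{2} = \langle h^{*}, h^{*}\rangle_{\ch_K} = \int\int \delta(\tilde g, x)\,\delta(\tilde g, y)\,K(x,y)\,\rd\nu(x)\,\rd\nu(y).
\]
By the defining property of an ISPD kernel, applied to the finite signed measure $\delta(\tilde g,x)\,\rd\nu(x)$, the right-hand side is strictly positive unless $\delta(\tilde g,\cdot)=0$ $\nu$-a.e., and is $0$ otherwise. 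So again $\max_{f\in\Fcal}\Lw(w,f)=0\iff \delta(\tilde g,\cdot)=0\iff w=w_{\epol/\bpol}$, and realizability finishes the proof.

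The step I expect to be the main obstacle is the ISPD argument: one must check that $\delta(\tilde g,\cdot)$ is an admissible object for the ISPD definition (integrability of $\tilde g = d_{\bpol}w$ against $\nu$, measurability and boundedness of $K$ on the compact $\Xcal$), and one must bridge the gap between ``$\delta(\tilde g,\cdot)=0$ $\nu$-a.e.'' and the ``for all $(s',a')$'' form in which assumption (b) is stated --- this is handled by the strict positivity of $d_{\bpol}$ in assumption (a) (or a continuity argument), exactly as implicitly done in the proof of \cref{lem:ratio-estimation}. The remainder is routine bookkeeping around \cref{lem:kernel} and \cref{lem:ratio-estimation}.
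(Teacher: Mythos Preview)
Your proposal is correct, but for the ISPD case you take a genuinely different route from the paper. The paper's proof invokes Mercer's theorem: since $K$ is ISPD all Mercer eigenvalues $\mu_j$ are strictly positive, so every eigenfunction $\phi_j$ of the orthonormal basis of $L^2(\Xcal,\nu)$ lies in the RKHS; hence $\Lw(w,f)=0$ for all $f$ in the RKHS unit ball forces $\Lw(w,\phi_j)=0$ for every $j$, which by linearity and completeness gives $\Lw(w,f)=0$ for all $f\in L^2(\Xcal,\nu)$, and the $L^2$ uniqueness of \cref{lem:ratio-estimation} closes the argument by contradiction. You instead compute the maximized loss explicitly as the kernel energy $\iint \delta(\tilde g,x)\,\delta(\tilde g,y)\,K(x,y)\,\rd\nu(x)\,\rd\nu(y)$ and appeal \emph{directly} to the definition of ISPD for the signed measure $\delta(\tilde g,\cdot)\,\rd\nu$.

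Your approach is arguably more natural: it uses the ISPD property exactly once, in the form in which it is stated, and it dovetails with \cref{lem:kernel} (your $h^*$ coincides with the paper's $f^*$). It also avoids the side conditions Mercer's theorem carries (continuity of $K$ on the compact $\Xcal$), trading them for the integrability check you flag---that $\delta(\tilde g,\cdot)\,\rd\nu$ is a finite signed Borel measure---which is mild here. The paper's route, on the other hand, packages the ISPD step as a reduction lemma (``$\Lw(w,f)=0$ on $\Fcal$ $\Rightarrow$ $\Lw(w,f)=0$ on $L^2$''), which keeps the uniqueness argument entirely inside \cref{lem:ratio-estimation} and sidesteps any discussion of signed measures. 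Both arguments ultimately hinge on the same fact---strict positivity of the Mercer eigenvalues is equivalent to the energy being positive definite---so the difference is one of presentation rather than strength.
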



\begin{proof}

	The first statement is obvious from \cref{lem:ratio-estimation} and the proof is omitted, and here we prove the second statement on the ISPD kernel case. If we can prove Lemma \ref{lem:ratio-estimation} when replacing $L^{2}(\mathcal{X},\nu)$ with RKHS associated with an ISPD kernel $K(\cdot,\cdot)$, the statement is concluded. More specifically, what we have to prove is 
	\begin{lemma}\label{lem:mercer}
	Assume conditions in Theorem \ref{thm:key2}. Then, $\Lw(w;f)=0,\,\forall f \in \mathcal{F}$ holds if and only if $w(s,a)=w_{\epol/\bpol}(s,a)$.
	\end{lemma}
	
	This is proved by Mercer's theorem \citep{mohri2012foundations}. From Mercer's theorem, there exist an orthnormal basis $(\phi_j)_{j=1}^{\infty}$ of $L^{2}(\mathcal{X},\nu)$ such that RKHS is represented as 
	\begin{align*}
	\mathcal{F}=\left \{f=\sum_{j=1}^{\infty}b_j \phi_j; (b_j)_{j=1}^{\infty}\in l^{2}(\bbN)\, \mathrm{with}\, \sum_{j=1}^{\infty} \frac{\beta^{2}_j}{\mu_j}<\infty\right \},
	\end{align*}
	where each $\mu_j$ is a positive value since kernel is ISPD. 
	Suppose there exists $w(s,a)\neq w_{\epol/\bpol}(s,a)$ in $w(s,a)\in \mathcal{W}$ satisfying $\Lw(w,f)=0,\,\forall f \in \mathcal{F}$. Then, by taking $b_j=1\,(j=j'),\,b_{j}=1\,(j\neq j')$, for any $j'\in \bbN$, we have $\Lw(w,\phi_{j'})=0$. This implies $\Lw(w,f)=0,\,\forall f \in L^2(\mathcal{X},\nu)=0$. This contradicts the original Lemma \ref{lem:ratio-estimation}. Then, Lemma \ref{lem:mercer} is concluded. 
\end{proof}

\subsection{Proof for Example~\ref{exm:oracle_mwl}} \label{app:oracle_mwl}
Suppose $w_0(s,a) = C ~\forall (s,a)$. Then, for $f =  Q^{\epol}$ we have
\begin{align*}
\Lw(w_0, f) &= C\rE_{d_{\bpol}}[\{\gamma V^{\epol}(s')- Q^{\epol}(s,a)\}]+(1-\gamma)\rE_{\pre \times \epol}[Q^{\epol}(s,a)] \\
&= C R_{\bpol}-R_{\epol}. 
\end{align*}
Hence $C = R_{\epol}/R_{\bpol}$ satisfies that $\Lw(w_0, f) = 0$, $\forall f\in\Fcal$. From Theorem~\ref{thm:ipw}, $|R_{\epol} - \Rw[\hatw]| \le \min_{w}\max_{f}|\Lw(w,f)| \le \max_{f}|\Lw(w_0,f)| = 0$. 

\subsection{The RKHS Result} \label{app:mwl_rkhs}
\begin{lemma}[Formal version of Lemma~\ref{lem:mwl_rkhs}]\label{lem:kernel}
	Let $\langle \cdot,\cdot \rangle _{\ch_K}$ be the inner product of $\ch_K$, which satisfies the reproducing property $f(x)=\langle f,K(\cdot,x)\rangle_{\ch_K}$. 
	When $\mathcal{F}=\{f \in (\Xcal \to \RR): \langle f,f\rangle_{\ch_K} \leq 1\}$, the term $\max_{f\in \mathcal{F}}\Lw(w,f)^{2}$ has a closed form expression:
	\begin{align*}
	&\max_{f \in \mathcal{F}} \Lw(w,f)^{2} \\
	&= \rE_{d_{\bpol}}[\gamma^{2} w(s,a)w(\tilde{s},\tilde{a})\rE_{a'\sim \epol(s'),\tilde a'\sim \epol(\tilde s')}[K((s',a'),(\tilde{s}',\tilde{a}'))]] + \\
	&+\rE_{d_{\bpol}}[w(s,a)w(\tilde s,\tilde a)K((s,a),(\tilde s,\tilde a))]+(1-\gamma)^{2}\rE_{\pre \times \epol}[K((s,a),(\tilde s,\tilde a))]+ \\
	&-2\rE_{d_{\bpol}}[\gamma w(s,a)w(\tilde s,\tilde a)\rE_{a' \sim \epol(s')}[K((s',a'),(\tilde s,\tilde a))] ]  \\
	& +2\gamma(1-\gamma) \rE_{(s,a)\sim d_{\bpol},(\tilde s,\tilde a)\sim \pre \times \epol} [\gamma w(s,a)\rE_{a' \sim \epol(s')}[K((s',a'),(\tilde s,\tilde a))] ] \\
	&-2(1-\gamma)\rE_{(s,a)\sim d_{\bpol},(\tilde s,\tilde a)\sim \pre \times \epol }[w(s,a)w(\tilde s,\tilde a)K((s,a),(\tilde s,\tilde a)) ].
	\end{align*}
	In the above expression, $(s,a) \sim \pre \times \epol$ means $s \sim \pre$, $ a\sim \epol(s)$. All $a'$ and $\tilde{a}'$ terms are marginalized out in the inner expectations, and when they appear together they are always independent. Similarly, in the first 3 lines when $(s,a,s')$ and $(s,a,\tilde{s}')$ appear together in the outer expectation, they are i.i.d.~following the distribution specified in the subscript. 
\end{lemma}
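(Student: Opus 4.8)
The statement to prove is Lemma~\ref{lem:kernel} (the formal version of Lemma~\ref{lem:mwl_rkhs}): when $\Fcal$ is the unit ball of an RKHS $\ch_K$, the inner maximization $\max_{f\in\Fcal}\Lw(w,f)^2$ admits the displayed closed form.

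\medskip
\noindent\textbf{Proof plan.}
The plan is to exploit the reproducing property of $\ch_K$ to express the linear functional $f \mapsto \Lw(w,f)$ as an inner product $\langle f, f^* \rangle_{\ch_K}$ for an explicit element $f^* \in \ch_K$, and then apply Cauchy--Schwarz to evaluate the maximum over the unit ball as $\langle f^*, f^* \rangle_{\ch_K}$. Finally I would expand $\langle f^*, f^* \rangle_{\ch_K}$ using bilinearity and the kernel identity $\langle K((s,a),\cdot), K((\tilde s,\tilde a),\cdot)\rangle_{\ch_K} = K((s,a),(\tilde s,\tilde a))$ to recover the six terms in the claimed expression.

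\medskip
\noindent\textbf{Steps, in order.}
First I would recall the definition of $\Lw(w,f)$ from Eq.~\eqref{eq:mwl_loss}, written with the action expectations made explicit: $\Lw(w,f) = \rE_{d_{\bpol}}[\gamma w(s,a)\rE_{a'\sim\epol(s')}[f(s',a')] - w(s,a)f(s,a)] + (1-\gamma)\rE_{\pre\times\epol}[f(s,a)]$. Second, I would substitute $f(x) = \langle f, K(x,\cdot)\rangle_{\ch_K}$ everywhere $f$ appears, and pull the (bounded linear) functionals---the expectations---inside the inner product, which is justified since $f^*$ as defined below is a well-defined element of $\ch_K$ (the kernel is bounded on the compact $\Xcal$, so the Bochner integrals converge). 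This yields $\Lw(w,f) = \langle f, f^*\rangle_{\ch_K}$ with
$$
f^*(\cdot) = \rE_{d_{\bpol}}[\gamma w(s,a)\rE_{a'\sim\epol(s')}[K((s',a'),\cdot)] - w(s,a)K((s,a),\cdot)] + (1-\gamma)\rE_{\pre\times\epol}[K((s,a),\cdot)].
$$
Third, over the unit ball $\Fcal = \{f : \langle f,f\rangle_{\ch_K}\le 1\}$, Cauchy--Schwarz gives $\max_{f\in\Fcal}\Lw(w,f)^2 = \max_{f\in\Fcal}\langle f, f^*\rangle_{\ch_K}^2 = \langle f^*, f^*\rangle_{\ch_K}^2$, attained at $f = f^*/\|f^*\|_{\ch_K}$. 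Wait---I should be careful: $\max \langle f,f^*\rangle^2 = \langle f^*,f^*\rangle_{\ch_K}$ (not squared), so $\max_{f}\Lw(w,f)^2 = \langle f^*,f^*\rangle_{\ch_K} = \|f^*\|^2_{\ch_K}$; I would state this cleanly. Fourth, I would expand $\|f^*\|^2_{\ch_K}$ by bilinearity into the six cross-terms (three ``diagonal'' squared terms and three cross terms with the appropriate factor of $2$), and on each term replace $\langle K(x,\cdot),K(\tilde x,\cdot)\rangle_{\ch_K}$ by $K(x,\tilde x)$. Along the way I would note that when $(s,a,s')$ and $(\tilde s,\tilde a,\tilde s')$ appear together they are drawn i.i.d.\ from the relevant product measure, and that $a',\tilde a'$ are marginalized independently; this matches the bookkeeping in the lemma statement. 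I would show one representative term explicitly---say the $(1-\gamma)^2$ term---as is done in the excerpt, and remark that the others follow identically.

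\medskip
\noindent\textbf{Main obstacle.}
The only genuine subtlety is the interchange of expectation and inner product in Step~2 (equivalently, checking that $f^*$ is a bona fide element of $\ch_K$ rather than merely a formal expression), and---relatedly---being careful that the maximum of $\langle f,f^*\rangle^2$ over the unit ball equals $\|f^*\|^2_{\ch_K}$ and not its square. Given Assumption~1 (compactness of $\Xcal$) and the standard assumption that $K$ is continuous, hence bounded, the vector-valued integral defining $f^*$ converges in $\ch_K$ and the interchange is valid; the rest is routine bilinear algebra. So I expect the proof to be short, with the expansion of $\|f^*\|^2_{\ch_K}$ being the longest but most mechanical part.
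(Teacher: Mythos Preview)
Your proposal is correct and follows essentially the same route as the paper: represent $\Lw(w,f)=\langle f,f^*\rangle_{\ch_K}$ via the reproducing property, apply Cauchy--Schwarz over the unit ball to obtain $\max_{f\in\Fcal}\Lw(w,f)^2=\|f^*\|_{\ch_K}^2$, and expand bilinearly using $\langle K(x,\cdot),K(\tilde x,\cdot)\rangle_{\ch_K}=K(x,\tilde x)$. If anything, you are slightly more careful than the paper---you correctly flag that the maximum equals $\langle f^*,f^*\rangle_{\ch_K}$ rather than its square (the paper's displayed line contains that extra square as a typo), and you explicitly note the Bochner-integral justification for interchanging expectation and inner product, which the paper leaves implicit.
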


\subsection{Details of Example~\ref{exm:linear_mwl} (LSTDQ)} \label{app:mwl_linear}
Here we provide the detailed derivation of Eq.\eqref{eq:linear}, that is, the closed-form solution of MWL when both $\Wcal$ and $\Fcal$ are set to the same linear class. We assume that the matrix being inverted in Eq.\eqref{eq:linear} is non-singular. 

We show that Eq.\eqref{eq:linear} is a solution to the objective function of MWL by showing that $\Lwn(w(\cdot; \hat{\alpha}), f) = 0$ for any $f$ in the linear class. This suffices because for any $w$, the loss $\Lwn(w(\cdot; \hat{\alpha}), f)^2$ is non-negative and at least $0$, so any $w$ that achieves $0$ for all $f\in\Fcal$ is an argmin of the loss. 

Consider $w\in\Wcal$ whose parameter is $\alpha$ and $f\in\Fcal$ whose parameter is $\beta$. Then
\begin{align*}
\Lwn(w,f)&=\rE_n[\gamma \alpha^{\top}\phi(s,a)\phi(s',\epol)^{\top} \beta  - \alpha^{\top}\phi(s,a)\phi(s,a)^{\top}\beta ]+(1-\gamma)\rE_{\pre}[\phi(s,\epol)^{\top} \beta ] \\
&= \left(\alpha^{\top} \rE_n[\gamma \phi(s,a)\phi(s',\epol)^{\top}  - \phi(s,a)\phi(s,a)^{\top}]+(1-\gamma)\rE_{\pre \times \epol}[\phi(s,a)^{\top}] \right) \beta.
\end{align*}
Since $\Lwn(w,f)$ is linear in $\beta$, to achieve $\max_f \Lwn(w,f)^2 = 0$ it suffices to satisfy
\begin{align} \label{eq:linear2}
\alpha^{\top} \rE_n[\gamma \phi(s,a)\phi(s',\epol)^{\top} - \phi(s,a)\phi(s,a)^{\top}]+(1-\gamma)\rE_{\pre}[\phi(s,\epol)^{\top}] =\bzero. 
\end{align}
Note that this is just a set of linear equations where the number of unknowns is the same as the number of equations, and the $\hat{\alpha}$ in Eq.\eqref{eq:linear} is precisely the solution to Eq.\eqref{eq:linear2} when the matrix multiplied by $\alpha^\top$ is non-singular. 

\subsection{Connection to Dual DICE \citep{ChowYinlam2019DBEo}} \label{app:dualdice}

\citet{ChowYinlam2019DBEo} proposes an extension of \citet{liu2018breaking} without the knowledge of the behavior policy, which shares the same goal with our MWL in Section~\ref{sec:mwl}. 
In fact, there is an interesting connection between our work and theirs, as our key lemma \eqref{lem:ratio-estimation} can be obtained if we take the functional gradient of their loss function. (Ideal) DualDICE with the chi-squared divergence $f(x)=0.5x^{2}$ is described as follows;
\begin{itemize}
	\item Estimate $\nu(s,a)$;
	\begin{align}\label{eq:dual}
	\min_{\nu}0.5 \rE_{d_{\bpol}}[\{\nu(s,a)-(\mathcal{B}\nu)(s,a)\}^{2}]-(1-\gamma)\rE_{\pre \times \epol}[\nu(s,a)],
	\end{align}
	where $(\mathcal{B}\nu)(s,a)=\gamma \rE_{s'\sim P(s,a), a' \sim \epol(s')}[\nu(s',a')]$. 
	\item Estimate the ratio as $\nu(s,a)-(\mathcal{B}\nu)(s,a)$.
\end{itemize}

Because this objective function includes an integral in $(\mathcal{B}\nu)(s,a)$, the Monte-Carlo approximation is required. However, even if we take an Monte-Carlo sample for the approximation, it is biased. Therefore, they further modify this objective function into a more complex minimax form. See (11) in \cite{ChowYinlam2019DBEo}. 

Here, we take a functional derivative of \eqref{eq:dual}(Gateaux derivative) with respect to $\nu$. The functional derivative at $\nu(s,a)$ is 
\begin{align*}
f(s,a)\to \rE_{d_{\bpol}}[\{\nu(s,a)-(\mathcal{B}\nu)(s,a)\}\{f(s,a)-(\mathcal{B} f)(s,a) \}]
-(1-\gamma)\rE_{\pre \times \epol}[f(s,a)].
\end{align*}
The first order condition exactly corresponds to our Lemma~\ref{lem:ratio-estimation}:
\begin{align*}
-\Lw(w,f) &=\rE_{d_{\bpol}}[w(s,a) \{ f(s,a)-\gamma \rE_{s'\sim P(s,a), a' \sim \epol(s')}[f(s',a')]\}]+(1-\gamma)\rE_{\pre \times \epol}[f(s,a)]=0\\
& \iff \rE_{d_{\bpol}}[w(s,a) \{ f(s,a)-\gamma \rE_{ a' \sim \epol(s')}[f(s',a')]\}]+(1-\gamma)\rE_{\pre \times \epol}[f(s,a)]=0. 
\end{align*}
where $w(s,a)=\nu(s,a)-(\mathcal{B}\nu)(s,a)$. Our proposed method with RKHS enables us to directly estimate $w(s,a)$ in one step, and in contrast their approach requires two additional steps: estimating $(\mathcal{B}\nu)(s,a)$ in the loss function, estimating $\nu(s,a)$ by minimizing the loss function, and taking the difference  $ \nu(s,a)- (\mathcal{B}\nu)(s,a)$.

\subsection{Connection between MSWL \citep{liu2018breaking} and Off-policy LSTD}\label{sec:mswl}
Just as our methods become LSTDQ using linear function classes (Examples~\ref{exm:linear_mwl} and \ref{exm:linear_mql}), here we show that the method of \citet{liu2018breaking} (which we call MSWL for easy reference) corresponds to off-policy LSTD. Under our notations, their method is\footnote{Here we have simplified their loss for the discounted case (i.e., their Eq.(15)) with an identity $\rE_{d_{\bpol}}[w(s) f(s)] = \gamma \rE_{d_{\bpol}}[w(s') f(s')] + (1-\gamma) \rE_{d_0}[w(s) f(s)]$ that holds when $d_{\bpol}$ is the discounted occupancy  of the behavior policy (which \citet{liu2018breaking} assume but we do not). Replacing $\rE_{d_{\bpol}}[w(s) f(s)]$ in Eq.\eqref{eq:mswl1} with the RHS of this identity will recover \citet{liu2018breaking}'s original loss. 
Also, because of this reason, their original loss only applies when $d_{\bpol}$ is the discounted occupancy of the behavior policy, whereas this simplified loss (as well as the losses of MWL and MQL) applies even if $d_{\bpol}$ is not a valid occupancy but an arbitrary exploratory distribution. \label{ft:dpib}}
\begin{align} \label{eq:mswl1}
\argmin_{w\in\Wcal^\Scal}\max_{f\in \Fcal^\Scal}\left\{\rE_{d_{\bpol}}\left[ \left (\gamma w(s)f(s')\frac{\epol(a|s)}{\bpol(a|s)}-  w(s)f(s)\right)\right]+(1-\gamma)\rE_{\pre}[f(s)]\right\}^{2}. 
\end{align}
We call this method MSWL (minimax state weight learning) for easy reference. A slightly different but closely related estimator is
\begin{align}\label{eq:mswl2}
      \argmin_{w\in\Wcal^\Scal}\max_{f\in \Fcal^\Scal}\left\{\rE_{d_{\bpol}}\left[\frac{\epol(a|s)}{\bpol(a|s)} \left (\gamma w(s)f(s')-  w(s)f(s)\right)\right]+(1-\gamma)\rE_{\pre}[f(s)]\right\}^{2}. 
\end{align}
Although the two objectives are equal in expectation, under empirical approximations the two estimators are different. In fact, Eq.\eqref{eq:mswl2} corresponds to the most common form of off-policy LSTD  \citep{BertsekasDimitriP2009Pemf} when both $\Wcal^\Scal$ and $\Fcal^\Scal$ are linear (similar to Example~\ref{exm:linear_mwl}). In the same linear setting, Eq.\eqref{eq:mswl1} corresponds to another type of off-policy LSTD discussed by \citet{dann2014policy}. In the tabular setting, we show that  cannot achieve the semiparametric lower bound in Appendix \ref{sec:efficiency}. 

\section{Proofs and Additional Results of Section~\ref{sec:mql} (MQL)} \label{app:mql}

\begin{proof}[\textbf{Proof for \cref{lem:mql}}]

\textbf{First statement}

We prove $\Lq(Q^{\epol},g)=0\,\forall g\in L^2(\Xcal,\nu)$.
The function $Q^{\epol}$ satisfies the following Bellman equation; 
\begin{align}
\rE_{r\sim \Rcal(s,a),\,s'\sim P(s,a)}[r+\gamma Q^{\epol}(s',\epol)-Q^{\epol}(s,a)]=0. 
\end{align}
Then, $\forall g\in L^2(\Xcal,\nu)$, 
\begin{align*}
 0 &=\int \left \{ \rE_{r\sim \Rcal(s,a),\,s'\sim P(s,a)}[r+\gamma Q^{\epol}(s',\epol)-Q^{\epol}(s,a)]\right \}g(s,a)d_{\bpol}(s,a)\rd\nu(s,a) \\
&= \Lq(Q^{\epol},g). 
\end{align*}

\textbf{Second statement}

We prove the uniqueness part. Recall that $Q^{\pi_e}$ is uniquely characterized as  \citep{BertsekasDimitriP2012Dpao}: $\forall (s,a)$,
    \begin{align}\label{eq:bellq}
        \rE_{r\sim \Rcal(s,a),\,s'\sim P(s,a)}[r+\gamma q(s',\epol)-q(s,a)]=0. 
    \end{align}

Assume 
\begin{align}\label{eq:assm}
        \rE_{(s,a,r,s')\sim d_{\bpol}}[\{r+\gamma q(s',\pi_e)-q(s,a)\}g(s,a)]=0,\forall g(s,a)\in L^{2}(\Xcal,\nu)
\end{align}
Note that the left hand side term is seen as 
$$\Lq(q,g)=\langle \{\rE_{r \sim \Rcal(s,a)}[r]+\rE_{s' \sim P(s,a)}[\gamma q(s',\pi_e)]-q(s,a)\}d_{\pi}(s,b), g(s,a) \rangle$$
where the inner product $\langle \cdot,\cdot \rangle$ is for Hilbert space $L^{2}(\Xcal,\nu)$ and the representator of the functional $g \to \Lq(q,g)
$ is $\{\rE[r|s,a]+\gamma q(s',\pi_e)-q(s,a)\}d_{\bpol}(s,a)$. 
From Riesz representation theorem and the assumption \eqref{eq:assm}, the representator of the linear bounded functional $g \to \Lq(q,g)$ is uniquely determined as $0$. Since we also assume $d_{\bpol}(s,a)>0\,\forall (s,a)$, we have 
$\forall (s,a)$,
    \begin{align}
        \rE_{r\sim \Rcal(s,a),\,s'\sim P(s,a)}[r+\gamma q(s',\epol)-q(s,a)]=0. 
    \end{align}
From \eqref{eq:bellq}, 
such $q$ is $Q^{\epol}$. 
\end{proof}

\begin{proof}[\textbf{Proof of Theorem \ref{thm:minimaxq}}]

We prove the first statement. For fixed any $q$, we have 
	\begin{align*}
	|R_{\epol}-\Rq[q]|&= |(1-\gamma)\rE_{\pre \times \pi_e}[q(s,a)]-R_{\epol}| \\
	&= |\rE_{d_{\bpol}}[-\gamma w_{\epol/\bpol}(s,a)q(s', \epol)+w_{\epol/\bpol}(s,a)q(s,a)]-\rE_{d_{\bpol}}[w_{\epol/\bpol}(s,a)r] |\\ 
	&=|\rE_{d_{\bpol}}[w_{\epol/\bpol}(s,a)\{r+\gamma q(s', \epol)-q(s,a)\}]|\\
	&\leq \max_{g\in \conv(\mathcal{G})}|\Lq(q,g)|=\max_{g\in \mathcal{G}}|\Lq(q,g)|. 
	\end{align*}
From the first line to the second line, we use Lemma \ref{lem:ratio-estimation} choosing $f(s,a)$ as $q(s,a)$. From second line to the third line, we use $w_{\epol/\bpol} \in \conv(\Gcal)$. 
	
Then, the second statement follows immediately based on the definition of $\hatq$. 
\end{proof}

\begin{remark}
Similar to Remark~\ref{rem:approx} for Theorem~\ref{thm:ipw}, it is also straightforward to incorporate the approximation errors of $w_{\epol/\bpol} \in \conv(\Gcal)$ into Theorem~\ref{thm:minimaxq}, and the more general bound is
\begin{align*}
	\ts 
	|R_{\epol}-\Rq[\hat q]|& \leq  \min_{q\in \mathcal{Q}}\max_{g\in \Gcal}|\Lq(q,g)|+ \max_{q^{\dagger}\in \mathcal{Q}} \min_{w^{\dagger}\in \mathcal{G}}|\Lq(q^{\dagger},w_{\epol/\bpol}-w^{\dagger})| .
\end{align*}
The proof is omitted due to similarity to the MWL case.
\end{remark}

\begin{proof}[\textbf{Proof of Lemma \ref{lem:kernel2}}]
We have 
	\begin{align*}
	\max_{g\in \mathcal{G}}\Lq(q,g)^{2} &= \max_{g\in \mathcal{G}}\rE_{d_{\bpol}}[g(s,a)(r+\gamma q(s', \epol)-q(s,a))]^{2} \\
	&= \max_{g\in \mathcal{G}}\rE_{d_{\bpol}}[\langle g,K((s,a),\cdot) \rangle_{\ch_K}(r+\gamma q(s', \epol)-q(s,a))]^{2} \\
	&= \max_{g\in \mathcal{G}} \langle g,\rE_{d_{\bpol}}[K((s,a),\cdot) (r+\gamma q(s', \epol)-q(s,a))] \rangle_{\ch_K}^{2} \\
	&= \max_{g\in \mathcal{G}} \langle g,g^{*} \rangle_{\ch_K}^{2} =\langle g^{*},g^{*}\rangle_{\ch_K}
	\end{align*}
where $$g^{*}(\cdot)= \rE_{d_{\bpol}}[K((s,a),\cdot) (r+\gamma q(s', \epol)-q(s,a))].$$
	From the first line to the second line, we use a reproducing property of RKHS; $g(s,a)=\langle g(\cdot),K((s,a),\cdot\rangle_{\ch_K}$. From the second line to the third line, we use a linear property of the inner product. From third line to the fourth line, we use a Cauchy--schwarz inequality since $\mathcal{G}=\{g;\langle g,g\rangle_{\ch_K}\leq 1\}$. 

Then, the last expression $\langle g^{*},g^{*}\rangle_{\ch_K}$ is equal to 
	\begin{align*}
	&\langle \rE_{d_{\bpol}}[K((s,a),\cdot) (r+\gamma q(s', \epol)-q(s,a))],\rE_{d_{\bpol}}[K((s,a),\cdot) (r+\gamma q(s', \epol)-q(s,a))]\rangle_{\ch_K} \\
	&=\langle \rE_{d_{\bpol}}[K((s,a),\cdot) (\rE_{r\sim \Rcal(s,a)}[r]+\rE_{s'\sim P(s,a)}[\gamma q(s', \epol)]-q(s,a))], \\
	&\rE_{d_{\bpol}}[K((\tilde s,\tilde a),\cdot)(\rE_{\tilde r\sim \Rcal(\tilde s,\tilde a)}[\tilde r]+\rE_{\tilde s'\sim P(\tilde s,\tilde a)}[\gamma v(\tilde s')]-q(\tilde s,\tilde a))] \rangle   \\ 
	&=\rE_{d_{\bpol}}[\Delta^{q}(q;s,a,r,s')\Delta^{q}(q;\tilde s,\tilde a,\tilde r,\tilde s')K((s,a),(\tilde s,\tilde a))]    
	\end{align*}
	where $\Delta^{q}(q;s,a,r,s')=r+ \gamma q(s', \epol)-q(s,a)$ and the expectation is taken with respect to the density $d_{\bpol}(s,a,r,s')d_{\bpol}(\tilde s,\tilde a,\tilde r,\tilde s').$ Here, we have used a kernel property 
\begin{align*}
\langle K((s,a),\cdot), K((\tilde s,\tilde a),\cdot)\rangle_{\ch_K}= K((s,a),(\tilde s,\tilde a)). \tag*{\qedhere}
\end{align*}
\end{proof}

\begin{theorem}
\label{thm:key3}
	Assume $Q^{\epol}$ is included in $\mathcal{Q}$ and $d_{\bpol}(s,a)>0\,\forall (s,a)$. Then, if $\mathcal{G}$ is $L^{2}(\mathcal{X},\nu)$, $\hatq=Q^{\epol}$. Also if $\mathcal{G}$ is a RKHS associated with an ISPD kernel, $\hatq=Q^{\epol}$
\end{theorem}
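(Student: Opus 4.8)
The plan is to mirror the proof of Theorem~\ref{thm:key2}, with the uniqueness statement of Lemma~\ref{lem:mql} playing the role that Lemma~\ref{lem:ratio-estimation} played there. First I would observe that, by the first statement of Lemma~\ref{lem:mql}, $\Lq(Q^{\epol},g)=0$ for every $g\in L^2(\mathcal{X},\nu)$, and in particular for every $g$ in either of the two discriminator classes under consideration (since both are subsets of $L^2(\mathcal{X},\nu)$). Because $Q^{\epol}\in\mathcal{Q}$ by assumption, $Q^{\epol}$ attains the value $0$ of the nonnegative objective $\max_{g\in\mathcal{G}}\Lq(q,g)^2$; hence $\min_{q\in\mathcal{Q}}\max_{g\in\mathcal{G}}\Lq(q,g)^2=0$, and any minimizer $\hatq$ must satisfy $\Lq(\hatq,g)=0$ for all $g\in\mathcal{G}$. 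The remaining task is to upgrade ``$\Lq(\hatq,g)=0$ for all $g\in\mathcal{G}$'' to ``$\Lq(\hatq,g)=0$ for all $g\in L^2(\mathcal{X},\nu)$'', after which the uniqueness half of Lemma~\ref{lem:mql} (which invokes $d_{\bpol}(s,a)>0\,\forall(s,a)$) forces $\hatq=Q^{\epol}$.

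For $\mathcal{G}=L^2(\mathcal{X},\nu)$ this upgrade is vacuous, so that case is immediate. For the ISPD RKHS case, $\mathcal{G}$ is the unit ball $\{g:\langle g,g\rangle_{\ch_K}\le 1\}$, and since $g\mapsto\Lq(\hatq,g)$ is linear, vanishing on the unit ball is equivalent to vanishing on all of $\ch_K$. Then, exactly as in Lemma~\ref{lem:mercer}, I would apply Mercer's theorem to write $\ch_K=\{\sum_j b_j\phi_j:\sum_j b_j^2/\mu_j<\infty\}$, where $(\phi_j)_{j\ge 1}$ is an orthonormal basis of $L^2(\mathcal{X},\nu)$ and each $\mu_j>0$ by integral strict positive-definiteness of the kernel. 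In particular each $\phi_j\in\ch_K$, so $\Lq(\hatq,\phi_j)=0$ for every $j$; by linearity and continuity of the functional $g\mapsto\Lq(\hatq,g)$ on $L^2(\mathcal{X},\nu)$, this propagates to $\Lq(\hatq,g)=0$ for all $g\in L^2(\mathcal{X},\nu)$, and Lemma~\ref{lem:mql} concludes the argument.

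The one point that needs care --- and the closest thing to an obstacle --- is justifying that $g\mapsto\Lq(\hatq,g)$ is a bounded linear functional on $L^2(\mathcal{X},\nu)$ with a genuine $L^2$ Riesz representer, so that vanishing on the basis $(\phi_j)$ genuinely forces vanishing on all of $L^2$. This reduces to checking that the representer $(s,a)\mapsto\{\rE[r\mid s,a]+\gamma\,\rE[\hatq(s',\epol)\mid s,a]-\hatq(s,a)\}\,d_{\bpol}(s,a)$ lies in $L^2(\mathcal{X},\nu)$, which follows from compactness of $\mathcal{X}$ (Assumption~1), boundedness of the rewards, and boundedness of $\hatq$ on $\mathcal{X}$ --- the same bookkeeping already used in the uniqueness proof of Lemma~\ref{lem:mql}. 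No new ideas are required beyond a routine transcription of the MWL argument (Theorem~\ref{thm:key2} / Lemma~\ref{lem:mercer}) into the MQL notation.
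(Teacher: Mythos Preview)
Your proposal is correct and follows essentially the same approach as the paper: the paper's proof consists of two sentences stating that the $L^2$ case is immediate from Lemma~\ref{lem:mql} and that the ISPD RKHS case is proved ``similarly as Theorem~\ref{thm:key2}'', which is precisely the Mercer-basis argument you spell out. Your write-up is simply a more detailed transcription of that same strategy, including the care you take with the $L^2$ Riesz representer, which the paper leaves implicit.
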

\begin{proof}[Proof of Theorem \ref{thm:key3}]
	The first statement is obvious from \cref{lem:mql}. The second statement is proved similarly as Theorem \ref{thm:key2}. 
\end{proof}

\subsection{Proof for Example~\ref{exm:oracle_mql}} \label{app:oracle_mql}
Suppose $q_0(s,a) = C$. Then, for $g = w_{\epol/\bpol}$, we have
\begin{align*}
\Lq(q,g) = &~ \rE_{d_{\bpol}}[w_{\epol/\bpol}(s,a)(r+\gamma C-C)] \\
= &~ \rE_{d_{\bpol}}[w_{\epol/\bpol}(s,a)r] - (1-\gamma) C =  R_{\epol} - (1-\gamma) C.
\end{align*}
Therefore $C = R_{\epol}/(1-\gamma)$ satisfies $\Lq(q, g) = 0$~$\forall g\in\Gcal$. From Theorem~\ref{thm:minimaxq} we further have $\Rq[\hatq] = R_{\epol}$.

\subsection{Minimax Value Learning (MVL)} \label{app:mvl}
We can extend the loss function in Eq.\eqref{eq:onpolicy_mvl} to the off-policy setting as follows: for state-value function $v: \Scal\to\RR$, define the following two losses:
\begin{align}
& \max_{g \in \mathcal{G}^\Scal}\rE_{\bpol}\left[\frac{\epol(a|s)}{\bpol(a|s)}\{r + \gamma v(s')-v(s)\}g(s)\right]^{2}, \label{eq:mvl1}\\
& \max_{g \in \mathcal{G}^\Scal}\rE_{\bpol}\left[\left(\frac{\epol(a|s)}{\bpol(a|s)}\{r + \gamma v(s')\}-v(s)\right)g(s)\right]^{2}. \label{eq:mvl2}
\end{align}
Again, similar to the situation of MSWL as we discussed in Appendix~\ref{sec:mswl}, these two losses are equal in expectation but exhibit different finite sample behaviors. When we use linear classes for both value functions and importance weights, these two estimators become two variants of off-policy LSTD \citep{dann2014policy, BertsekasDimitriP2009Pemf} and coincide with MSWL and its variant.

\section{Proofs and Additional Results of Section~\ref{sec:dr} (DR and Sample Complexity)}

\begin{proof}[\textbf{Proof of Lemma \ref{lem:db}}]
We have 
\begin{align}
	&R[w,q]-R_{\epol} \nonumber\\
	=&R[w,q]-R[w_{\epol/\bpol}(s,a),Q^{\epol}(s,a)] \label{eq:comp1}\\
	=&\rE_{d_{\bpol}}[\{w(s,a)-w_{\epol/\bpol}(s,a)\}\{r-Q^{\epol}(s,a)+\gamma V^{\epol}(s')\}]+ \label{eq:comp2}\\
	&\rE_{d_{\bpol}}[w_{\epol/\bpol}(s,a)\{Q^{\epol}(s,a)-q(s,a)+\gamma q(s', \epol)-\gamma V^{\epol}(s')\}]+(1-\gamma)\rE_{\pre}[q(s, \epol)-V^{\epol}(s)]+ \nonumber\\
	&\rE_{d_{\bpol}}[\{w(s,a)-w_{\epol/\bpol}(s,a)\}\{Q^{\epol}(s,a)-q(s,a)+\gamma q(s', \epol)-\gamma V^{\epol}(s')\}]+ \nonumber\\
	=& \rE_{d_{\bpol}}[\{w(s,a)-w_{\epol/\bpol}(s,a)\}\{Q^{\epol}(s,a)-q(s,a)+\gamma q(s', \epol)-\gamma V^{\epol}(s')\}]. \label{eq:comp3}
	\end{align}
	
From \eqref{eq:comp1} to \eqref{eq:comp2}, this is just by algebra following the definition of $R[\cdot,\cdot]$. From \eqref{eq:comp2} to \eqref{eq:comp3}, 
we use  the following lemma. \qedhere
\begin{lemma}
\begin{align*}
    0  &=\rE_{d_{\bpol}}[w_{\epol/\bpol}(s,a)\{Q^{\epol}(s,a)-q(s,a)+\gamma q(s', \epol)-\gamma V^{\epol}(s')\}]+(1-\gamma)\rE_{\pre}[q(s, \epol)-V^{\epol}(s)],\\
    0  &= \rE_{d_{\bpol}}[\{w(s,a)-w_{\epol/\bpol}(s,a)\}\{r-Q^{\epol}(s,a)+\gamma V^{\epol}(s')\}].
\end{align*}
\end{lemma}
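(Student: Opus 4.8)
The plan is to verify the two displayed identities separately; each reduces to a fact already available in the excerpt. For the \textbf{first identity} I would introduce the shorthand $h := Q^{\epol} - q$, which is square-integrable on the compact space $\Xcal$. Using $V^{\epol}(s)=Q^{\epol}(s,\epol)$, a line of algebra rewrites the integrand of the first expectation as
\[
Q^{\epol}(s,a)-q(s,a)+\gamma q(s',\epol)-\gamma V^{\epol}(s') = h(s,a)-\gamma h(s',\epol),
\]
and the integrand of the second term as $q(s,\epol)-V^{\epol}(s) = -h(s,\epol)$. Substituting these, the claim becomes
\[
\rE_{d_{\bpol}}\!\left[w_{\epol/\bpol}(s,a)\{h(s,a)-\gamma h(s',\epol)\}\right] = (1-\gamma)\rE_{\pre}[h(s,\epol)],
\]
which is precisely $\Lw(w_{\epol/\bpol},h)=0$ rearranged (recall $\Lw(w,f)=\rE_{d_{\bpol}}[\gamma w(s,a)f(s',\epol)-w(s,a)f(s,a)]+(1-\gamma)\rE_{\pre}[f(s,\epol)]$), so it follows from the first statement of \cref{lem:ratio-estimation} applied with $f=h$ (equivalently, directly from \cref{lem:characterization}).

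For the \textbf{second identity} I would condition on $(s,a)$ under $d_{\bpol}$. Since $w-w_{\epol/\bpol}$ depends only on $(s,a)$, the tower rule gives
\begin{align*}
&\rE_{d_{\bpol}}\!\left[\{w(s,a)-w_{\epol/\bpol}(s,a)\}\{r-Q^{\epol}(s,a)+\gamma V^{\epol}(s')\}\right]\\
&\qquad= \rE_{d_{\bpol}}\!\left[\{w(s,a)-w_{\epol/\bpol}(s,a)\}\,\rE\!\left[r-Q^{\epol}(s,a)+\gamma V^{\epol}(s')\mid s,a\right]\right].
\end{align*}
The Bellman equation $Q^{\epol}(s,a)=\rE[r\mid s,a]+\gamma\rE_{s'\sim P(s,a)}[V^{\epol}(s')]$ makes the inner conditional expectation vanish pointwise in $(s,a)$, so the whole expression equals $0$.

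The derivation is essentially bookkeeping and I do not anticipate a real obstacle. The only points needing care are (i) the integrability hypotheses so that \cref{lem:ratio-estimation} and the tower rule apply --- $h=Q^{\epol}-q\in L^2(\Xcal,\nu)$ by compactness of $\Xcal$ and boundedness of the value functions and of $q$, and $w_{\epol/\bpol}$ bounded by \cref{asm:Cw}; and (ii) keeping the arguments $(s,a)$ versus $(s',a')$ straight when expanding $q(s',\epol)$ and $V^{\epol}(s')$, for which the shorthand $h$ is exactly the right device.
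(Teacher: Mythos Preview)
The proposal is correct and follows essentially the same approach as the paper: for the first identity you apply \cref{lem:ratio-estimation} with $f=h=Q^{\epol}-q$ (the paper uses $f=q-Q^{\epol}$, a harmless sign flip), and for the second identity your tower-rule-plus-Bellman argument is exactly the content of \cref{lem:mql} with $g=w-w_{\epol/\bpol}$, which is what the paper cites rather than inlines.
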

\begin{proof}
The first equation comes form \cref{lem:ratio-estimation} with $f(s,a)=q(s,a)-Q^{\pi}(s,a)$. The second equation comes from \cref{lem:mql} with $g(s,a)=w(s,a)-w_{\epol/\bpol}(s,a)$.  
\end{proof}
\end{proof}

\begin{proof}[\textbf{Proof of Theorem \ref{thm:db2}}]
We begin with the second statement, which is easier to prove from Lemma \ref{lem:db}: 
\begin{align*}
 R[w',q]-R_{\epol} & =\rE_{d_{\bpol}}[\{w'(s,a)-w_{\epol/\bpol}(s,a)\}\{-\gamma V^{\epol}(s')-\gamma q(s',\epol)+\gamma q(s', \epol)+Q^{\epol}(s,a)\}] \\
 &=\Lq(q,w'-w_{\epol/\bpol})-\Lq(Q^{\epol},w'-w_{\epol/\bpol}) \\
  &=- \Lq(q,w_{\epol/\bpol}-w')-0. \tag{Lemma~\ref{lem:mql}}
\end{align*}
Thus, if $(w_{\epol/\bpol}-w')\in \conv(\Gcal)$
\begin{align*}
|R[w',q]-R_{\epol}|\leq \max_{g \in \conv(\Gcal)}|\Lq(q,g)|=\max_{g \in \Gcal}|\Lq(q,g)|.
 \end{align*}

Next, we prove the first statement. 
From Lemma \ref{lem:db}, 
\begin{align*}
R[w,q']-R_{\epol} & =\rE_{d_{\bpol}}[\{w(s,a)-w_{\epol/\bpol}(s,a)\}\{-\gamma V^{\epol}(s')-\gamma q'(s',\epol)+\gamma q(s', \epol)+Q^{\epol}(s,a)\}] \\
& =\Lw(w,q'-Q^{\epol})- \Lw(w_{\epol/\bpol},q'-Q^{\epol}) \\
&=-\Lw(w,Q^{\epol}-q') - 0. \tag{Lemma~\ref{lem:ratio-estimation}}
\end{align*}
Then, if $(Q^{\epol}-q')\in \conv(\Fcal)$, 
$$	|R[w,q']-R_{\epol} |\leq \max_{f\in\conv(\Fcal)}|\Lw(w,f)|=\max_{f\in\Fcal}|\Lw(w,f)|. $$
Finally, from the definition of $\hatw$ and $\hatq$, we also have 
\begin{align*}
    |R[\hatw,q']-R_{\epol} | \leq \min_{w\in \Wcal}\max_{f\in \Fcal}|\Lw(w,f)| ,\,
    |R[w', \hatq]-R_{\epol} | \leq \min_{q \in \Qcal}\max_{g\in\Gcal}|\Lq(q,g)|.  \tag*{\qedhere}
\end{align*}
\end{proof}

\begin{proof}[\textbf{Proof of Theorem \ref{thm:radema}}]
	We prove the first statement. The second statement is proved in the same way. 
	We have 
	\begin{align}
	&|R[ \hatwn,q']-R_{\epol}| \nonumber\\
	&\leq \max_{f \in \mathcal{F}} |\Lw(\hatwn,f)| \nonumber\\
	&=\max_{f \in \mathcal{F}} |\Lw(\hatwn,f)|-\max_{f \in \mathcal{F}} |\Lwn(\hatwn,f)|+\max_{f \in \mathcal{F}} |\Lwn(\hatwn,f)|-\max_{f \in \mathcal{F}} |\Lw(\hatw,f)|+\max_{f \in \mathcal{F}} |\Lw(\hatw,f)|\nonumber\\
	&\leq \max_{f \in \mathcal{F}} |\Lw(\hatwn,f)|-\max_{f \in \mathcal{F}} |\Lwn(\hatwn,f)+\max_{f \in \mathcal{F}} |\Lwn(\hatw,f)-\max_{f \in \mathcal{F}} |\Lw(\hatw,f)|+\max_{f \in \mathcal{F}} |\Lw(\hatw,f)|  \nonumber \\
	&  \leq 2 \max_{f \in \mathcal{F},w \in \mathcal{W}}||\Lwn(w,f)|- |\Lw(w,f)||+ \min_{w\in \mathcal{W}}\max_{f \in \mathcal{F}} |\Lw(w,f)|.  \label{eq:rade3}
	\end{align}
	
	The remaining task is to bound term $\max_{f \in \mathcal{F},w \in \mathcal{W}}||\Lwn(w,f)|- |\Lw(w,f)||$. This is bounded as follows; 
	\begin{align}\label{eq:rade2}
	\max_{f \in \mathcal{F},w \in \mathcal{W}}||\Lwn(w,f)|- |\Lw(w,f)||\lnsim
	\Rfrak'_n(\mathcal{F},\mathcal{W})+C_fC_w\sqrt{\log(1/\delta)/n}.
	\end{align}
	where $\Rfrak'_n(\mathcal{F},\mathcal{W})$ is the Rademacher complexity of the function class 
	$$\{(s,a,s') \mapsto |w(s,a)(\gamma f(s',\epol)-f(s,a))|: w\in \mathcal{W}, f\in \mathcal{F}\}.$$  Here, we just used an uniform law of large number based on the Rademacher complexity noting $|\left(w(s,a)(\gamma f(s',\epol)-f(s,a))\right)|$ is uniformly bounded by $C_f C_w$ up to some constant \citep[Theorem 8]{bartlett2003rademacher}. From the contraction property of the Rademacher complexity \citep[Theorem 12]{bartlett2003rademacher},
	\begin{align*}\label{eq:contration}
	  	\Rfrak'_n(\mathcal{F},\mathcal{W})\leq 2   	  	\Rfrak_n(\mathcal{F},\mathcal{W})
	\end{align*}
	where $\Rfrak_n(\mathcal{F},\mathcal{W})$ is the Rademacher complexity of the function class
	\begin{align}
\{(s,a,s') \mapsto w(s,a)(\gamma f(s',\epol)-f(s,a)): w\in \mathcal{W}, f\in \mathcal{F}\}. 
	\end{align}
	Finally, Combining \eqref{eq:rade3}, \eqref{eq:rade2} and \eqref{eq:contration}, the proof is concluded. 
\end{proof}

\subsection{Relaxing the i.i.d.~data assumption} \label{app:noniid}
Although the sample complexity results in Section~\ref{sec:dr} are established under i.i.d.~data, we show that under standard assumptions we can also handle dependent data and obtain almost the same results. For simplicity, we only include the result for $\hatwn$. 

In particular, we consider the setting mentioned in Section~\ref{sec:prelim}, that our data is a single long trajectory generated by policy $\bpol$:
$$
s_0, a_0, r_0, s_1, a_1, r_1, \ldots, s_{T-1}, a_{T-1}, r_{T-1}, s_{T}.
$$
We assume that the Markov chain induced by $\bpol$ is ergodic, and $s_0$ is sampled from its stationary distribution so that the chain is stationary. In this case, $d_{\bpol}$ corresponds to such a stationary distribution, which is also the marginal distribution of any $s_t$. We convert this trajectory into a set of transition tuples $\{(s_i, a_i, r_i, s_i')\}_{i=0}^{n-1}$ with $n=T$ and $s_i' = s_{i+1}$, and then apply our estimator on this data. Under the standard $\beta$-mixing condition\footnote{Refer to \cite{MeynS.P.SeanP.2009Mcas} regarding the definition.} \citep[see e.g.,][]{antos2008learning}, we can prove a similar sample complexity result:


\begin{corollary} \label{cor:beta}
	Assume $\{s_i,a_i,r_i,s'_i\}_{i=1}^{n}$ follows a stationary $\beta$--mixing distribution with $\beta$--mixing coefficient $\beta(k)$ for $k=0,1,\cdots$. For any $a_1,a_2>0$ with $2a_1a_2=n$ and $\delta>4(a_1-1)\beta(a_2)$, with probability at least $1-\delta$, we have (all other assumptions are the same as in  Theorem~\ref{thm:radema}(1))
	\begin{align*}
	& |R[\hatwn,q]-R_{\epol}| \lnsim \min_{w \in \mathcal{W}}\max_{f \in \mathcal{F}}|\Lw(w,f)|+\mathfrak{\hat{R}}_{a_1}(\mathcal{F},\mathcal{W})+C_f C_w\sqrt{\frac{\log(1/\delta')}{a_1}}
	\end{align*}
	where $\mathfrak{\hat{R}}_{a_1}(\mathcal{F}, \mathcal{W})$ is the empirical Rademacher complexity of the function class 
	$\{(s,a,s') \mapsto \{w(s,a)(\gamma f(s',\epol)-f(s,a)): w\in \mathcal{W}, f\in \mathcal{F}\}$
	based on a selected subsample of size $a_1$ from the original data (see \citet[Section 3.1]{mohri2009} for details), and $\delta'=\delta-4(a_1-1)\beta(a_2)$. 
\end{corollary}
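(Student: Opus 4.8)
The plan is to reuse the proof of Theorem~\ref{thm:radema}(1) essentially verbatim, and to replace only the single place where independence enters --- the i.i.d.\ uniform law of large numbers --- with a blocking argument for $\beta$-mixing sequences. First I would apply Theorem~\ref{thm:db2} with $q'=q$ to get $|R[\hatwn,q]-R_{\epol}| \le \max_{f\in\Fcal}|\Lw(\hatwn,f)|$, and then run the same add-and-subtract decomposition that produced Eq.~\eqref{eq:rade3}, yielding
\[
|R[\hatwn,q]-R_{\epol}| \;\le\; 2\max_{w\in\Wcal,\,f\in\Fcal}\big|\,|\Lwn(w,f)|-|\Lw(w,f)|\,\big| \;+\; \min_{w\in\Wcal}\max_{f\in\Fcal}|\Lw(w,f)|.
\]
This step is purely deterministic once the sample is fixed, so nothing about the data-generating process matters here. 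I would also note that the exactly-known term $(1-\gamma)\rE_{\pre}[f(s,\epol)]$ appearing inside both $\Lwn$ and $\Lw$ is identical and hence drops out of the difference (by the triangle inequality), so the supremum deviation really only concerns the random average $\rE_n[\gamma w(s,a)f(s',\epol)-w(s,a)f(s,a)]$, whose exact expectation equals the corresponding population quantity because $s_0$ (hence every $s_t$) is drawn from the stationary distribution $d_{\bpol}$.

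The core of the argument is to bound $\max_{w,f}\big|\,|\Lwn(w,f)|-|\Lw(w,f)|\,\big|$ under stationarity and $\beta$-mixing. Here I would invoke the independent-block technique (Yu's lemma), in the form packaged by \citet{mohri2009}: partition the $n=2a_1a_2$ transition tuples into $2a_1$ contiguous blocks of length $a_2$, introduce the ``ghost'' sequence of $2a_1$ mutually independent blocks with the same within-block law, and use the fact that any event determined by the block structure has probability under the true law within $(a_1-1)\beta(a_2)$ of its probability under the ghost law. Applying the i.i.d.-style uniform deviation bound (Theorem~8 of \citet{bartlett2003rademacher}) to the even-indexed independent blocks --- using that $(s,a,s')\mapsto w(s,a)(\gamma f(s',\epol)-f(s,a))$ is uniformly bounded by $C_fC_w$ up to a universal constant --- gives, on the ghost sequence and with probability at least $1-\delta'$, a bound of order $\mathfrak{\hat{R}}_{a_1}(\Fcal,\Wcal)+C_fC_w\sqrt{\log(1/\delta')/a_1}$, with the empirical Rademacher complexity on the selected size-$a_1$ subsample as in \citet[Section~3.1]{mohri2009}. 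Transferring the bound back from the ghost to the dependent sequence costs an extra $4(a_1-1)\beta(a_2)$ in the failure probability --- a factor $2$ from separating even and odd blocks and another $2$ from the two-sided deviation --- which is exactly why the statement requires $\delta=\delta'+4(a_1-1)\beta(a_2)$. Finally I would use the contraction property of Rademacher complexity (Theorem~12 of \citet{bartlett2003rademacher}) to pass from the absolute-valued class to $\mathfrak{\hat{R}}_{a_1}(\Fcal,\Wcal)$ at the cost of a constant, and substitute into the display above.

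The main obstacle I anticipate is bookkeeping rather than ideas: pinning down the constants in the blocking argument so that precisely $4(a_1-1)\beta(a_2)$ appears (the even/odd split, the two-sidedness, and the constraint $2a_1a_2=n$), and being careful that the empirical Rademacher complexity is stated on the selected subsample rather than on the full data. A secondary technical point is making the measurability claim precise --- the supremum deviation is a function of all $n$ tuples, and one must justify treating it block-wise --- together with checking that ergodicity and stationarity of the chain induced by $\bpol$ indeed make $d_{\bpol}$ the common marginal of each $s_t$, so that the population identity $\rE[\Lwn(w,f)]=\Lw(w,f)$ holds exactly and Theorem~\ref{thm:db2} applies with $w_{\epol/\bpol}$ defined relative to this $d_{\bpol}$. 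Beyond these points the proof is a line-by-line repetition of the i.i.d.\ case, so I would keep the write-up short and point to the proof of Theorem~\ref{thm:radema}(1) for the shared steps.
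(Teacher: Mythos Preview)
Your proposal is correct and follows exactly the approach of the paper's own proof sketch: repeat the deterministic decomposition from the proof of Theorem~\ref{thm:radema}(1) verbatim, and replace the single i.i.d.\ concentration step bounding $\sup_{f,w}\big||\Lwn(w,f)|-|\Lw(w,f)|\big|$ with the $\beta$-mixing uniform deviation bound of \citet[Theorem~2]{mohri2009}. Your write-up is in fact considerably more detailed than the paper's (which is a one-line sketch), and the additional bookkeeping points you raise about constants, the even/odd block split, and stationarity are helpful elaborations rather than deviations.
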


\begin{proof}[Proof Sketch of \cref{cor:beta}]
We can prove in the same way as for \cref{thm:radema}. The only difference is we use Theorem 2 \citep{mohri2009} to bound the term  $\sup_{f \in \mathcal{F},w \in \mathcal{W}}||\Lwn(w,f)|- |\Lw(w,f)||$. 
\end{proof}
\section{Statistical Efficiency in the Tabular Setting} \label{sec:efficiency}

\subsection{Statistical efficiency of MWL and MQL}
As we already have seen in Example~\ref{exm:linear_mql}, when $\Wcal, \Fcal, \Qcal, \Gcal$ are the same linear class, MWL, MQL, and LSTDQ give the same OPE estimator. These methods are also equivalent in the tabular setting---as tabular is a special case of linear representation (with indicator features)---which also coincides with the model-based (or certainty-equivalent) solution. Below we prove that this tabular estimator can achieve the semiparametric lower bound for infinite horizon OPE \citep{NathanUehara2019}.  \footnote{Semiparametric lower bound is the non--parametric extension of Cramer--Rao lower bound \citep{bickel98}. It is the lower bound of asymptotic MSE among regular estimators \citep{VaartA.W.vander1998As}.} 
Though there are many estimators for OPE, many of the existing OPE methods do not satisfy this property. 

Here, we have the following theorem; the proof is deferred to Appendix~\ref{app:tabular_proof}. 

\begin{theorem}[Restatement of Theorem~\ref{thm:optimal_main}]\label{thm:optimal}
	Assume the whole data set $\{(s,a,r,s')\}$ is geometrically Ergodic \footnote{Regarding the definition, refer to \cite{MeynS.P.SeanP.2009Mcas}}. Then, in the tabular setting, $\sqrt{n}(R_{\mathrm{w},n}[\hat w_n]-R_{\epol})$ and $\sqrt{n}(R_{\mathrm{q}}[\hat q_n]-R_{\epol})$ weakly converge to the normal distribution with mean $0$ and variance
	$$
	\rE_{d_{\bpol}}[w^2_{\epol/\bpol}(s,a)(r+\gamma V^{\epol}(s')-Q^{\epol}(s,a))^{2}].
	$$
This variance matches the semiparametric lower bound for OPE given by \citet[Theorem 5]{NathanUehara2019}.
\end{theorem}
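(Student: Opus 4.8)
The plan is to (i) reduce both $\Rwn[\hatwn]$ and $\Rq[\hatqn]$ to the model-based (certainty-equivalence) estimator, written through the doubly robust functional of \cref{sec:dr}; (ii) linearize that estimator around the truth, using the doubly robust structure to kill the first-order effect of estimating the nuisances; and (iii) invoke a martingale central limit theorem and compare the asymptotic variance with the lower bound of \citet{NathanUehara2019}. For Step 1, in the tabular setting the features in \cref{exm:linear_mwl,exm:linear_mql} are indicators, so the matrices inverted there are nonsingular under the standing positivity $d_{\bpol}(s,a)>0\ \forall(s,a)$ (in tabular form the relevant population matrix is $D(I-\gamma P_{\epol})$ with $D=\mathrm{diag}(d_{\bpol})$, invertible since $\gamma<1$), and the first-order conditions become $\Lwn(\hatwn,f)=0$ and $\Lqn(\hatqn,g)=0$ for all tabular $f,g$. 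The latter says $\hatqn$ solves the empirical Bellman equations exactly, i.e.\ $\rE_{n}[g(s,a)\{r+\gamma\hatqn(s',\epol)-\hatqn(s,a)\}]=0$ for every function $g$ of $(s,a)$; combined with $\Lwn(\hatwn,\cdot)=0$, a one-line computation with the empirical version $R_n[\cdot,\cdot]$ of Eq.~\eqref{eq:dr} gives
\begin{align*}
\Rwn[\hatwn]=R_n[\hatwn,\hatqn]=\Rq[\hatqn].
\end{align*}
Hence all the estimators coincide with the model-based doubly robust estimator $R_n[\hatwn,\hatqn]$, and it suffices to analyze that object.

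For Step 2, decompose
\begin{align*}
R_n[\hatwn,\hatqn]-R_{\epol}=\bigl(R_n[\hatwn,\hatqn]-R[\hatwn,\hatqn]\bigr)+\bigl(R[\hatwn,\hatqn]-R_{\epol}\bigr).
\end{align*}
The second term, by \cref{lem:db}, equals $\rE_{d_{\bpol}}[\{\hatwn-w_{\epol/\bpol}\}\{\gamma V^{\epol}(s')-\gamma\hatqn(s',\epol)+\hatqn(s,a)-Q^{\epol}(s,a)\}]$; in the finite tabular model $\hatwn$ and $\hatqn$ are smooth functions of the empirical visitation/transition/reward frequencies, hence $\sqrt n$-consistent by the delta method, so this is a product of two $O_p(n^{-1/2})$ quantities, i.e.\ $O_p(n^{-1})=\op(n^{-1/2})$. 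For the first term, since $d_0$ is known exactly, $R_n[w,q]-R[w,q]=(\rE_{n}-\rE_{d_{\bpol}})[w(s,a)\{r+\gamma q(s',\epol)-q(s,a)\}]$; at the truth the doubly robust property gives $R[w_{\epol/\bpol},Q^{\epol}]=R_{\epol}$ and the Bellman equation gives $\rE_{d_{\bpol}}[w_{\epol/\bpol}(s,a)\{r+\gamma V^{\epol}(s')-Q^{\epol}(s,a)\}]=0$, so this term at $(w_{\epol/\bpol},Q^{\epol})$ equals $\rE_{n}[\psi]$ with
\begin{align*}
\psi(s,a,r,s'):=w_{\epol/\bpol}(s,a)\{r+\gamma V^{\epol}(s')-Q^{\epol}(s,a)\},\qquad \rE[\psi]=0.
\end{align*}
The leftover $(\rE_{n}-\rE_{d_{\bpol}})[h_n]$, where $h_n$ is the plug-in integrand minus $\psi$, lies (for each $n$) in a fixed finite-dimensional tabular span with coefficients that are $O_p(n^{-1/2})$ (differences of $\sqrt n$-consistent quantities about their limits), so expanding in that basis it is a finite sum of products of two $O_p(n^{-1/2})$ terms, hence $\op(n^{-1/2})$. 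Collecting, $\sqrt n\,(R_n[\hatwn,\hatqn]-R_{\epol})=\sqrt n\,\rE_{n}[\psi]+\op(1)$.

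For Step 3, the key structural observation is that $\{\psi(s_i,a_i,r_i,s_i')\}_i$ is a martingale difference sequence for the filtration generated by the trajectory up through $(s_i,a_i)$: indeed $\rE[r_i+\gamma V^{\epol}(s_{i+1})-Q^{\epol}(s_i,a_i)\mid s_i,a_i]=0$ by the Bellman equation and $w_{\epol/\bpol}(s_i,a_i)$ is $\sigma(s_i,a_i)$-measurable. Geometric ergodicity \citep{MeynS.P.SeanP.2009Mcas} supplies stationarity, ergodicity and the moment conditions for the martingale CLT, giving $\sqrt n\,\rE_{n}[\psi]\Rightarrow\mathcal{N}(0,\rE[\psi^2])$; crucially the martingale-difference property removes any autocovariance correction, so the limiting variance is exactly $\rE[\psi^2]=\rE_{d_{\bpol}}[w_{\epol/\bpol}^2(s,a)(r+\gamma V^{\epol}(s')-Q^{\epol}(s,a))^2]$ (the i.i.d.\ case is a special case). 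Finally $\psi$ is precisely the efficient influence function for infinite-horizon OPE, so $\rE[\psi^2]$ matches the semiparametric lower bound of \citet[Theorem 5]{NathanUehara2019}, as claimed.

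The part I expect to be the main obstacle is the linearization in Step 2 — verifying that the two remainder terms are genuinely $\op(n^{-1/2})$. This rests on $\sqrt n$-consistency of the nuisance estimates $\hatwn,\hatqn$ (clean here because the tabular model is finite-dimensional and the relevant population matrices are invertible, so the delta method applies) and, more importantly, on the doubly robust / Neyman-orthogonal structure of Eq.~\eqref{eq:dr} as encoded in \cref{lem:db} together with the conditional-mean-zero Bellman residual used through \cref{lem:ratio-estimation,lem:mql}: this cancels the first-order sensitivity to the nuisances and leaves only quadratic remainders. Once that is in place, the basis-expansion bound on the oscillation term and the martingale CLT are routine.
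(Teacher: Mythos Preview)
Your argument is correct and takes a genuinely different route from the paper's. The paper proceeds by classical Z-estimator asymptotics: it views $\hat\beta$ (the tabular $Q$-parameter) as the solution of $\rE_n[\{r+\gamma q(s',\epol;\beta)-q(s,a;\beta)\}\phi(s,a)]=0$, applies the sandwich formula $D_1^{-1}D_2(D_1^{-1})^{\top}$ for its asymptotic variance, pushes this through the linear functional $D_{q1}^{\top}\hat\beta$ by the delta method, and then simplifies the resulting matrix expression by explicit tabular computation (writing $D_1^{-1}=(I-\gamma P^{\epol})^{-\top}\mathrm{diag}(d_{\bpol})^{-1}$ and recognizing $(I-\gamma P^{\epol})^{-1}D_{q1}$ as $d_{\epol,\gamma}$). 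Your route instead exploits the doubly robust structure directly: the identity $\Rwn[\hatwn]=R_n[\hatwn,\hatqn]=\Rq[\hatqn]$ via $\Lwn(\hatwn,\cdot)=\Lqn(\hatqn,\cdot)=0$, then \cref{lem:db} to make the bias term second-order, then a Donsker/basis-expansion argument for the oscillation term, landing on $\sqrt{n}\,\rE_n[\psi]+\op(1)$ with $\psi$ the efficient influence function. What your approach buys is a structural explanation of \emph{why} the variance matches the semiparametric bound (the estimator linearizes to the EIF, so no matrix algebra is needed to see this), and your explicit use of the martingale-difference property of $\psi$ cleanly handles the dependent-data case by killing autocovariance corrections, a point the paper's appeal to ``standard Z-estimator theory'' leaves implicit. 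What the paper's approach buys is that it is entirely mechanical once the estimator is written in closed form, requiring neither the DR identity nor nuisance-rate arguments. One small caveat: geometric ergodicity by itself does not give stationarity, so strictly you are either assuming the chain is started stationary or arguing that the initial transient is asymptotically negligible; the paper is equally loose on this point.
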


Two remarks are in order:
\begin{enumerate}
	\item Theorem \ref{thm:optimal} could be also extended to the continuous sample space case in a  nonparametric manner, i.e., replacing $\phi(s,a)$ with some basis functions for $L^2$--space and assuming that its dimension grows with some rate related to $n$ and the data--generating process has some smoothness condition \citep{newey94}. The proof is not obvious and we leave it to future work.
	\item In the contextual bandit setting, it is widely known that the importance sampling estimator with plug-in weight from the empirical distribution and the model-based approach can achieve the semiparametric lower bound \citep{HahnJinyong1998OtRo,hirano03}. Our findings are consistent with this fact and is novel in the MDP setting to the best of our knowledge. 
\end{enumerate}  

\begin{table}[ht]
	\centering
	\caption{Summary of the connections between several OPE methods and LSTD, and their optimality in the tabular setting.}
	\begin{tabular}{c|c|c|c|c}
		& MWL & MQL & MSWL & MVL \\\hline
		Definition & Sec~\ref{sec:mwl} & Sec~\ref{sec:mql} & Appendix~\ref{sec:mswl} & Appendix~\ref{app:mvl} \\\hline
		Linear case & \multicolumn{2}{c|}{LSTDQ} & \multicolumn{2}{c}{Off-policy LSTD} \\ \hline
		\tabincell{c}{Optimality \\in tabular} & \multicolumn{2}{c|}{Yes} & \multicolumn{2}{c}{No}
	\end{tabular}
\end{table}

\subsection{Statistical inefficiency of MSWL and MVL for OPE}\label{sec:suboptimal}

Here, we compare the statistical efficiency of MWL, MQL with MSWL, MVL in the tabular setting. First, we show that MSWL, MVL positing the linear class is the same as the off-policy LSTD \citep{BertsekasDimitriP2009Pemf,dann2014policy}. Then, we calculate the asymptotic MSE of these estimators in the tabular case and show that this is larger than the ones of MWL and MQL. 

\paragraph{Equivalence of MSWL, MVL with linear models and off-policy LSTD}

By slightly modifying \citet[Theorem 4]{liu2018breaking}, MSWL is introduced based on the following relation; 
\begin{align*}
\rE_{d_{\bpol}}\left[\frac{\epol(a|s)}{\bpol(a|s)} \left (\gamma \frac{d_{\epol,\gamma}(s)}{d_{\bpol}(s)}f(s')- \frac{d_{\epol,\gamma}(s)}{d_{\bpol}(s)}f(s)\right)\right]+(1-\gamma)\rE_{\pre}[f(s)]=0\,\forall f \in L^{2}(\Scal,\nu).
\end{align*}
Then, the estimator for $\frac{d_{\epol}(s)}{d_{\bpol}(s)}$ is given as 
\begin{align}\label{eq:mswl}
\min_{w^\Scal}\max_{f\in \Fcal^\Scal}\left\{\rE_{d_{\bpol}}\left[\frac{\epol(a|s)}{\bpol(a|s)} \left (\gamma w(s)f(s')-  w(s)f(s)\right)\right]+(1-\gamma)\rE_{\pre}[f(s)]\right\}^{2}. 
\end{align}

As in Example \ref{exm:linear_mwl}, in the linear model case, let $z(s)=\phi(s)^{\top}\alpha$ where $\phi(s)\in \RR^{d}$ is some basis function and $\alpha$ is the parameters. Then, the resulting estimator for $d_{\epol,\gamma}(s)/d_{\bpol}(s)$ is 
\begin{align*}
\hat \alpha = (1-\gamma) \rE_n\left[\frac{\epol(a|s)}{\bpol(a|s)}\left\{-\gamma \phi(s')+\phi(s)\right\}\phi^{\top}(s)\right]^{-1}\rE_{\pre}[\phi(s)]
\end{align*}
Then, the final estimator for $R_{\epol}$ is 
\begin{align*}
(D_{v_1})^{\top}{D^{-1}_{v_2}}D_{v_3}, 
\end{align*}
where
\begin{align*}
D_{v_1}  &=  (1-\gamma)\rE_{\pre}[\phi(s)],  \\
D_{v_2}  &= \rE_n\left[\frac{\epol(a|s)}{\bpol(a|s)}\phi(s)\left\{-\gamma \phi^{\top}(s')+\phi^{\top}(s)\right\}\right] ,\\
D_{v_3} &= \rE_n\left[r\frac{\epol(a|s)}{\bpol(a|s)}\phi(s)\right].\\
\end{align*}

In MVL, the estimator for $V^{\epol}(s)$ is constructed based on the relation;
\begin{align*}
\rE_{d_{\bpol}}\left [\frac{\epol(a|s)}{\bpol(a|s)}\left\{r+\gamma V^{\epol}(s')-V^{\epol}(s)\right \}g(s)\right]=0\,\forall g\in L^{2}(\Scal,\nu). 
\end{align*}
Then, the estimator for $V^{\epol}(s)$ is given by
\begin{align*}
\min_{v \in \Vcal}\max_{g \in \mathcal{G}^\Scal}\rE_{d_{\bpol}}\left[\frac{\epol(a|s)}{\bpol(a|s)}\{r + \gamma v(s')-v(s)\}g(s)\right]^{2}. 
\end{align*}

As in Example \ref{exm:linear_mql}, in the linear model case, let $v(s)=\phi(s)^{\top}\beta$ where $\phi(s)\in \RR^{d}$ is some basis function and $\beta$ is the parameters. Then,
the resulting estimator for $V^{\epol}(s)$ is 
\begin{align*}
\hat{\beta}=    \rE_n\left[\frac{\epol(a|s)}{\bpol(a|s)}\phi(s)\left\{-\gamma \phi^{\top}(s')+\phi^{\top}(s)\right\}\right]^{-1}\rE_n\left[r\frac{\epol(a|s)}{\bpol(a|s)}\phi(s)\right]. 
\end{align*}
Then, the final estimator for $R_{\epol}$ is still $ (D_{v_1})^{\top}D^{-1}_{v_2}D_{v_3}$. This is exactly the same as the estimator obtained by off-policy LSTD \citep{BertsekasDimitriP2009Pemf}. 

\paragraph{Another formulation of MSWL and MVL}

According to \citet[Theorem 4]{liu2018breaking}, we have 
\begin{align*}
\rE_{d_{\bpol}} \left[\left (\gamma \frac{d_{\epol,\gamma}(s)}{d_{\bpol}(s)}\frac{\epol(a|s)}{\bpol(a|s)}- \frac{d_{\epol,\gamma}(s')}{d_{\bpol}(s')}\right)f(s')\right]+(1-\gamma)\rE_{\pre}[f(s)]=0\,\forall f \in L^{2}(\Scal,\nu).
\end{align*}
They construct an estimator for $\frac{d_{\epol,\gamma}(s)}{d_{\bpol}(s)}$  as; 
\begin{align*}
\min_{w\in\Wcal^\Scal}\max_{f\in \Fcal^\Scal}\left\{\rE_{d_{\bpol}}\left[ \left (\gamma w(s) f(s')\frac{\epol(a|s)}{\bpol(a|s)}-w(s)f(s)\right)\right]+(1-\gamma)\rE_{\pre}[f(s)]\right\}^{2}.
\end{align*}
Note that compared with the previous case \eqref{eq:mswl}, the position of the importance weight $\epol/\bpol$ is different. In the same way, MVL is constructed base on the relation;
\begin{align*}
\rE_{d_{\bpol}}\left [\left\{\frac{\epol(a|s)}{\bpol(a|s)}(r+\gamma V^{\epol}(s'))-V^{\epol}(s) \right\} g(s)\right]=0\,\forall g\in L^{2}(\Scal,\nu).  
\end{align*}
The estimator for $V^{\epol}(s)$ is given by 
\begin{align*}
\min_{v \in \Vcal }\max_{g \in \mathcal{G}^\Scal}\rE_{\bpol}\left[\left(\frac{\epol(a|s)}{\bpol(a|s)}\{r + \gamma v(s')\}-v(s)\right)g(s)\right]^{2}.
\end{align*}

When positing linear models, in both cases, the final estimator for $R_{\epol}$ is
\begin{align*}
D_{v_1}^{\top} \{D_{v_4}\}^{-1}  D_{v_3},
\end{align*}
where 
\begin{align*}
D_{v_4}= \rE_n\left[\phi(s)\left\{-\gamma \frac{\epol(a|s)}{\bpol(a|s)}\phi^{\top}(s')+\phi^{\top}(s)\right\}\right]. 
\end{align*}
This is exactly the same as the another type of off-policy LSTD \citep{dann2014policy}.

\paragraph{Statistical inefficiency of MSWL, MVL and off-policy LSTD}

Next, we calculate the asymptotic variance of $D_{v1}D^{-1}_{v2}D_{v3}$ and $D_{v1}D^{-1}_{v4}D_{v3}$ in the tabular setting. It is shown that these methods cannot achieve the semiparametric lower bound \citep{NathanUehara2019}. These results show that these methods are statistically inefficient. Note that this implication is also brought to the general continuous sample space case since the the asymptotic MSE is generally the same even in the continuous sample space case with some smoothness conditions.

\begin{theorem}\label{thm:optimal2}
	Assume the whole data is geometrically Ergodic. In the tabular setting, $\sqrt{n}(D_{v1}D^{-1}_{v2}D_{v3}-R_{\epol})$ weakly converges to the normal distribution with mean $0$ and variance;
	\begin{align*}
	\rE_{d_{\bpol}}\left[\left\{\frac{d_{\epol,\gamma}(s)}{d_{\bpol}(s)}\right\}^{2}\mathrm{var}_{d_{\bpol}}\left[\frac{\epol(a|s)}{\bpol(a|s)}\{r+V^{\epol}(s')-V^{\epol}(s)\}|s\right] \right].
	\end{align*}
	This is larger than the semiparametric lower bound. 
\end{theorem}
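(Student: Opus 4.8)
The plan is to view $D_{v_1}^{\top}D_{v_2}^{-1}D_{v_3}$ as a smooth functional of empirical averages, linearize it around its population counterpart by the delta method, identify the resulting influence function, and conclude via a central limit theorem; the asymptotic variance collapses to a one-step quantity because the influence function turns out to be a martingale difference, and the comparison with the lower bound of \citet[Theorem 5]{NathanUehara2019} is then a one-line conditional-variance decomposition.

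\emph{Population limits.} In the tabular setting $\phi(s)=e_{s}$. Write $D_{b}=\mathrm{diag}(d_{\bpol})$, let $P^{\epol}$ be the (row-stochastic) state transition matrix under $\epol$, let $\bar r^{\epol}(s)=\rE_{a\sim\epol(s)}[\rE[r\mid s,a]]$, and let $\ini$ denote the initial-state vector. Then $D_{v_1}=(1-\gamma)\ini$ exactly, and by the ergodic law of large numbers $D_{v_2}\to\bar D_{v_2}=D_{b}(I-\gamma P^{\epol})$ and $D_{v_3}\to\bar D_{v_3}=D_{b}\bar r^{\epol}$. Assuming the relevant matrices are invertible (which for $\gamma<1$ amounts to $d_{\bpol}(s)>0$ for all $s$), the population regression vector is $\beta^{\star}:=\bar D_{v_2}^{-1}\bar D_{v_3}=(I-\gamma P^{\epol})^{-1}\bar r^{\epol}$, i.e.\ $\phi(s)^{\top}\beta^{\star}=V^{\epol}(s)$; and the population ``dual'' vector $\xi^{\star}:=(\bar D_{v_2}^{\top})^{-1}D_{v_1}$ solves $(I-\gamma (P^{\epol})^{\top})D_{b}\xi^{\star}=(1-\gamma)\ini$. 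Since $d_{\epol,\gamma}$ is exactly the solution of $(I-\gamma(P^{\epol})^{\top})x=(1-\gamma)\ini$ (the Bellman flow equation, i.e.\ the tabular case of \cref{lem:characterization}), we get $D_{b}\xi^{\star}=d_{\epol,\gamma}$, hence $\phi(s)^{\top}\xi^{\star}=d_{\epol,\gamma}(s)/d_{\bpol}(s)=:w^{\Scal}(s)$. In particular $D_{v_1}^{\top}\beta^{\star}=(1-\gamma)\rE_{\pre}[V^{\epol}(s)]=R_{\epol}$, so the estimator is consistent.

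\emph{Linearization and CLT.} Using $D_{v_j}=\bar D_{v_j}+\mathrm{O}_{p}(n^{-1/2})$ together with the inverse expansion $D_{v_2}^{-1}=\bar D_{v_2}^{-1}-\bar D_{v_2}^{-1}(D_{v_2}-\bar D_{v_2})\bar D_{v_2}^{-1}+\mathrm{O}_{p}(n^{-1})$,
\[
D_{v_1}^{\top}D_{v_2}^{-1}D_{v_3}-R_{\epol}=(\xi^{\star})^{\top}\bigl[(D_{v_3}-\bar D_{v_3})-(D_{v_2}-\bar D_{v_2})\beta^{\star}\bigr]+\op(n^{-1/2}).
\]
The bracketed quantity equals $\rE_{n}[\psi]-\rE_{d_{\bpol}}[\psi]$ with $\psi(s,a,r,s')=w^{\Scal}(s)\tfrac{\epol(a|s)}{\bpol(a|s)}\bigl(r+\gamma V^{\epol}(s')-V^{\epol}(s)\bigr)=w_{\epol/\bpol}(s,a)\bigl(r+\gamma V^{\epol}(s')-V^{\epol}(s)\bigr)$, which is bounded (by $C_{w}$ and $R_{\max}/(1-\gamma)$) and satisfies $\rE[\psi\mid s_{t}]=w^{\Scal}(s_{t})\bigl(\rE_{a\sim\epol(s_{t})}[Q^{\epol}(s_{t},a)]-V^{\epol}(s_{t})\bigr)=0$. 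Because $(a_{t},r_{t},s_{t+1})$ depends on the past only through $s_{t}$, this shows $\{\psi_{t}\}$ is a martingale-difference sequence for the trajectory filtration, so all of its lag autocovariances vanish, and the CLT for stationary ergodic martingale-difference sequences yields
\[
\sqrt{n}\bigl(D_{v_1}^{\top}D_{v_2}^{-1}D_{v_3}-R_{\epol}\bigr)\ \Rightarrow\ \mathcal N\!\bigl(0,\ \rE_{d_{\bpol}}[\psi^{2}]\bigr),
\]
and $\rE_{d_{\bpol}}[\psi^{2}]=\rE_{d_{\bpol}}\bigl[(w^{\Scal}(s))^{2}\,\rvar_{d_{\bpol}}[\tfrac{\epol(a|s)}{\bpol(a|s)}(r+\gamma V^{\epol}(s')-V^{\epol}(s))\mid s]\bigr]$ since the inner conditional mean given $s$ is $0$; this is the stated variance. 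The limit $\bar D_{v_4}$ coincides with $\bar D_{v_2}$, so the variant $D_{v_1}^{\top}D_{v_4}^{-1}D_{v_3}$ is handled verbatim.

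\emph{Comparison with the lower bound and main obstacle.} Conditioning on $(s,a)$ and using $\rE[r+\gamma V^{\epol}(s')\mid s,a]=Q^{\epol}(s,a)$, the split $r+\gamma V^{\epol}(s')-V^{\epol}(s)=\bigl(r+\gamma V^{\epol}(s')-Q^{\epol}(s,a)\bigr)+\bigl(Q^{\epol}(s,a)-V^{\epol}(s)\bigr)$ has an orthogonal cross term (the first summand is conditionally mean-zero, the second is $(s,a)$-measurable), so
\[
\rE_{d_{\bpol}}[\psi^{2}]=\rE_{d_{\bpol}}\bigl[w_{\epol/\bpol}^{2}(s,a)(r+\gamma V^{\epol}(s')-Q^{\epol}(s,a))^{2}\bigr]+\rE_{d_{\bpol}}\bigl[w_{\epol/\bpol}^{2}(s,a)(Q^{\epol}(s,a)-V^{\epol}(s))^{2}\bigr].
\]
The first term is exactly the semiparametric lower bound (cf.\ \cref{thm:optimal}), and the second is nonnegative, and strictly positive unless $Q^{\epol}(s,a)=V^{\epol}(s)$ $d_{\bpol}$-a.e.; this proves the claim. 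I expect the main obstacle to be the first step: correctly identifying $\beta^{\star}=V^{\epol}$ and $\xi^{\star}=w^{\Scal}$ from the non-symmetric LSTD matrices, and making the linearization remainder genuinely $\op(n^{-1/2})$ under ergodic rather than i.i.d.\ sampling. Once $\psi$ and its martingale-difference property are in hand, the CLT and the lower-bound comparison are routine.
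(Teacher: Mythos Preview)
Your proof is correct and follows essentially the same route as the paper: linearize the estimator around its population limit (the paper phrases this as Z-estimator sandwich asymptotics plus the delta method), use the tabular structure to identify $(\bar D_{v_2}^{\top})^{-1}D_{v_1}$ with the state-marginal ratio $d_{\epol,\gamma}(s)/d_{\bpol}(s)$, and then compare the resulting variance with the semiparametric lower bound via a conditional-variance decomposition.

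Two minor differences are worth noting. First, you package the linearization into a single influence function $\psi$ and invoke the martingale-difference CLT directly; the paper instead carries the sandwich matrices $D_1^{-1}D_2(D_1^{-1})^{\top}$ through and simplifies them entrywise in the tabular basis. Your route is a bit more streamlined and makes the dependent-data CLT step explicit. Second, for the lower-bound comparison the paper applies $\rvar[X\mid s]\ge\rE[\rvar[X\mid s,a]\mid s]$, whereas you split $r+\gamma V^{\epol}(s')-V^{\epol}(s)$ into the TD error plus the advantage and use orthogonality; these are equivalent, and your version makes the excess-variance term $\rE_{d_{\bpol}}[w_{\epol/\bpol}^{2}(s,a)(Q^{\epol}(s,a)-V^{\epol}(s))^{2}]$ explicit.

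One caution on your aside: while the population limits $\bar D_{v_2}$ and $\bar D_{v_4}$ coincide, the fluctuations $D_{v_4}-\bar D_{v_4}$ differ from $D_{v_2}-\bar D_{v_2}$ (the importance weight multiplies only the $\phi(s')$ term in $D_{v_4}$), so the influence function for the $D_{v_4}$ variant is $w^{\Scal}(s)\bigl(\tfrac{\epol(a|s)}{\bpol(a|s)}(r+\gamma V^{\epol}(s'))-V^{\epol}(s)\bigr)$ and the asymptotic variance is genuinely different, as in Theorem~\ref{thm:optimal3}. The proof technique does carry over verbatim, but the outcome does not.
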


\begin{theorem}\label{thm:optimal3}
	Assume the whole data is geometrically Ergodic. In the tabular setting,
	$\sqrt{n}(D_{v1}D^{-1}_{v4}D_{v3}-R_{\epol})$ weakly converges to the normal distribution with mean $0$ and variance;
	\begin{align*}
	\rE_{d_{\bpol}}\left[\left\{\frac{d_{\epol,\gamma}(s)}{d_{\bpol}(s)}\right\}^{2}\mathrm{var}_{d_{\bpol}}\left[\frac{\epol(a|s)}{\bpol(a|s)}\{r+V^{\epol}(s')\}|s\right] \right].
	\end{align*}
	This is larger than the semiparametric lower bound. 
\end{theorem}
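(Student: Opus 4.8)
The plan is to write $\widehat R := D_{v_1}^\top D_{v_4}^{-1}D_{v_3}$, linearize it around its population limit by the delta method, read off an influence function, apply a CLT valid for geometrically ergodic data, and finally compare the resulting variance to the semiparametric lower bound via a conditional orthogonality argument; the whole argument parallels the proof of Theorem~\ref{thm:optimal}. \emph{Step 1 (population limits in the tabular model).} Take indicator features $\phi(s)=e_s$, so $D_{v_3},D_{v_4}$ are empirical means of bounded functions of $(s,a,r,s')$ with limits
\[
m_{v_3}=\rE_{d_{\bpol}}\!\Big[r\tfrac{\epol(a|s)}{\bpol(a|s)}\phi(s)\Big]=\mathrm{diag}(d_{\bpol})\,r^{\epol},\qquad M_{v_4}=\rE_{d_{\bpol}}\!\Big[\phi(s)\{\phi(s)-\gamma\tfrac{\epol(a|s)}{\bpol(a|s)}\phi(s')\}^\top\Big]=\mathrm{diag}(d_{\bpol})(I-\gamma P^{\epol}),
\]
where $r^{\epol}(s)=\rE[r\mid s;\epol]$, $P^{\epol}$ is the state-transition kernel induced by $\epol$, and $D_{v_1}=(1-\gamma)p_0$ with $p_0$ the vector form of $\pre$. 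Since $\gamma<1$ and $d_{\bpol}>0$, $M_{v_4}$ is invertible, $M_{v_4}^{-1}m_{v_3}$ is the vector $\mathbf V$ with $\mathbf V_s=V^{\epol}(s)$, and $M_{v_4}^{-\top}D_{v_1}=\mathrm{diag}(d_{\bpol})^{-1}(1-\gamma)\sum_{t\ge0}\gamma^t((P^{\epol})^\top)^tp_0$ is the vector $\mathbf w$ with $\mathbf w_s=d_{\epol,\gamma}(s)/d_{\bpol}(s)$. In particular $D_{v_1}^\top M_{v_4}^{-1}m_{v_3}=(1-\gamma)\rE_{\pre}[V^{\epol}(s)]=R_{\epol}$, so $\widehat R$ is consistent.

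\emph{Step 2 (linearization and CLT).} Geometric ergodicity gives joint $\sqrt n$-asymptotic normality of $D_{v_3},D_{v_4}$ and consistency of $D_{v_4}^{-1}$. Using the first-order expansion $d(D_{v_1}^\top A^{-1}b)=\mathbf w^\top db-\mathbf w^\top(dA)\mathbf V$ at $(b,A)=(m_{v_3},M_{v_4})$,
\[
\sqrt n(\widehat R-R_{\epol})=\sqrt n\,\mathbf w^\top\!\big[(D_{v_3}-D_{v_4}\mathbf V)-(m_{v_3}-M_{v_4}\mathbf V)\big]+o_p(1).
\]
A direct computation gives $D_{v_3}-D_{v_4}\mathbf V=\rE_n[\psi]$ with $\psi(s,a,r,s')=\phi(s)\{\tfrac{\epol(a|s)}{\bpol(a|s)}(r+\gamma V^{\epol}(s'))-V^{\epol}(s)\}$, and $m_{v_3}-M_{v_4}\mathbf V=\rE_{d_{\bpol}}[\psi]=0$ since the conditional mean of $\tfrac{\epol(a|s)}{\bpol(a|s)}(r+\gamma V^{\epol}(s'))$ given $s$ equals $V^{\epol}(s)$ (Bellman equation). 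Hence, with $u:=\mathbf w^\top\psi=\tfrac{d_{\epol,\gamma}(s)}{d_{\bpol}(s)}\{\tfrac{\epol(a|s)}{\bpol(a|s)}(r+\gamma V^{\epol}(s'))-V^{\epol}(s)\}$,
\[
\sqrt n(\widehat R-R_{\epol})=\sqrt n\,(\rE_n-\rE_{d_{\bpol}})[u]+o_p(1).
\]
Because $\rE[u_i\mid s_i]=0$, the $u_i$ form a stationary martingale-difference sequence for the natural filtration, so all autocovariances vanish and the martingale CLT gives weak convergence of $\sqrt n(\widehat R-R_{\epol})$ to $N(0,\sigma^2)$ with $\sigma^2=\rvar_{d_{\bpol}}[u]=\rE_{d_{\bpol}}[u^2]$. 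Conditioning on $s$, and noting $V^{\epol}(s)$ is constant there, rewrites $\sigma^2$ as $\rE_{d_{\bpol}}\big[(d_{\epol,\gamma}(s)/d_{\bpol}(s))^2\,\rvar_{d_{\bpol}}[\tfrac{\epol(a|s)}{\bpol(a|s)}(r+\gamma V^{\epol}(s'))\mid s]\big]$, the stated variance.

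\emph{Step 3 (strict suboptimality).} Put $\varepsilon:=r+\gamma V^{\epol}(s')-Q^{\epol}(s,a)$ (so $\rE[\varepsilon\mid s,a]=0$) and $\beta:=\tfrac{\epol(a|s)}{\bpol(a|s)}$, and split $\beta(r+\gamma V^{\epol}(s'))-V^{\epol}(s)=\beta\varepsilon+(\beta Q^{\epol}(s,a)-V^{\epol}(s))$. Conditionally on $s$ both terms have mean zero, and they are conditionally uncorrelated because $\rE[\varepsilon\mid s,a]=0$; hence $\rvar[\beta(r+\gamma V^{\epol}(s'))\mid s]=\rE[\beta^2\varepsilon^2\mid s]+\rvar[\beta Q^{\epol}(s,a)-V^{\epol}(s)\mid s]\ge\rE[\beta^2\varepsilon^2\mid s]$. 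Multiplying by $(d_{\epol,\gamma}(s)/d_{\bpol}(s))^2$, integrating against $d_{\bpol}$, and using $\tfrac{d_{\epol,\gamma}(s)}{d_{\bpol}(s)}\beta=\tfrac{d_{\epol,\gamma}(s,a)}{d_{\bpol}(s,a)}=w_{\epol/\bpol}(s,a)$ yields $\sigma^2\ge\rE_{d_{\bpol}}[w_{\epol/\bpol}^2(s,a)(r+\gamma V^{\epol}(s')-Q^{\epol}(s,a))^2]$, the semiparametric lower bound of \citet[Theorem 5]{NathanUehara2019}; equality forces $\beta Q^{\epol}(s,a)=V^{\epol}(s)$ for $d_{\bpol}$-a.e.\ $(s,a)$, which generically fails when $\epol\ne\bpol$.

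\emph{Main obstacle.} Step~3 is essentially free. The real work is in Step~2: making the delta-method remainder genuinely $o_p(1)$ under dependent (merely geometrically ergodic) data — this needs joint $\sqrt n$-tightness/normality of the matrix- and vector-valued averages $D_{v_3},D_{v_4}$ together with consistency of $D_{v_4}^{-1}$ and invertibility of $M_{v_4}$ — and, in Step~1, carefully identifying $M_{v_4}^{-\top}D_{v_1}$ with the discounted state-occupancy ratio. Theorem~\ref{thm:optimal2} follows from the identical template with $D_{v_2}$ replacing $D_{v_4}$.
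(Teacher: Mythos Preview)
Your proposal is correct and follows essentially the same strategy as the paper: linearize the plug-in estimator around its population limit, identify the key quantities $M_{v_4}^{-1}m_{v_3}=V^{\epol}$ and $M_{v_4}^{-\top}D_{v_1}=d_{\epol,\gamma}/d_{\bpol}$ in the tabular model, read off the influence function, and compare to the lower bound. The paper packages the linearization as ``standard Z-estimator/sandwich theory'' applied to the estimating equation $\rE_n[\tfrac{\epol}{\bpol}\{r+\gamma v(s';\beta)\}\phi(s)-v(s;\beta)\phi(s)]=0$ and then applies the delta method to $D_{v_1}^\top\hat\beta$; you carry out the same expansion directly on $D_{v_1}^\top A^{-1}b$. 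For the lower-bound comparison, the paper simply invokes the law of total variance, $\rvar[\,\cdot\mid s]\ge\rE[\rvar[\,\cdot\mid s,a]\mid s]$, which is exactly your orthogonal split $\beta\varepsilon+(\beta Q^{\epol}-V^{\epol})$ written without naming the two pieces.

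One place where you are more explicit than the paper: you observe that the influence function $u$ satisfies $\rE[u\mid s]=0$ and hence forms a martingale-difference array along the trajectory, so all lagged autocovariances vanish and the long-run variance reduces to $\rE_{d_{\bpol}}[u^2]$. The paper's appeal to ``standard Z-estimator theory'' under geometric ergodicity glosses over precisely this point; your argument is the cleaner justification that the i.i.d.\ sandwich variance is the correct answer here.
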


\begin{figure}[!htb]
	\centering
	\begin{subfigure}[b]{0.5\textwidth}
		\includegraphics[width=\linewidth]{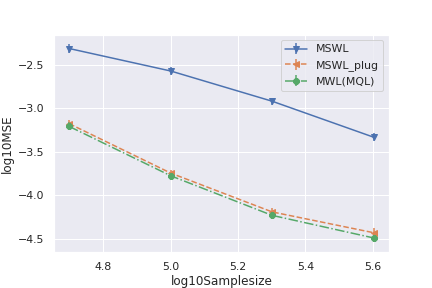}
		\caption{$\alpha=0.2$}\label{fig:2new}
	\end{subfigure}\hfill
	\begin{subfigure}[b]{0.5\textwidth}
		\includegraphics[width=\linewidth]{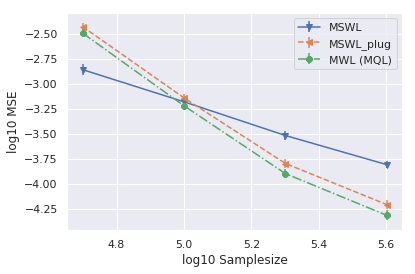}
		\caption{$\alpha=0.4$}\label{fig:4new}
	\end{subfigure}
	\caption{MSE as a function of sample size. $\alpha$ controls the difference between $\bpol$ and $\epol$; see Appendix~\ref{sec:numerical} for details. \label{fig:taxi}}
\end{figure}

\subsection{Experiments}\label{sec:numerical}
We show some empirical results that back up the theoretical discussions in this section. We conduct experiments in the Taxi environment \citep{DietterichT.G.2000HRLw}, which has 20000 states and 6 actions; see \citet[Section  5]{liu2018breaking} for more details. 
We compare three methods, all using the tabular representation: MSWL with exact $\epol$, MSWL with estimated $\epol$ (``plug-in''), and MWL (same as MQL). As we have mentioned earlier, this comparison is essentially among off-policy LSTD, plug-in off-policy LSTD, and LSTDQ. 

We choose the target policy $\epol$ to be the one obtained after running Q-learning for 1000 iterations, and choose another policy $\pi_{+}$ after 150 iterations. The behavior policy is $\bpol=\alpha \epol +(1-\alpha)\pi_{+}$. We report the results for $\alpha \in \{0.2,\,0.4\}$. The discount factor is $\gamma = 0.98$. 


We use a single trajectory and vary the truncation size $T$ as $[5,10,20, 40]\times 10^4$. For each case, by making $200$ replications, we report the Monte Carlo MSE of each estimator with their $95\%$ interval in Figure \ref{fig:taxi}. \nj{Use new legend}

It is observed that MWL is significantly better than MSWL and MWL is slightly better than plug-in MSWL. This is because MWL is statisitcally efficient and MSWL is statistically inefficient as we have shown earlier in this section. The reason why plug-in MSWL is superior to the original MSWL is that the plug-in based on MLE with a well specified model can be viewed as a form of control variates \citep{HenmiMasayuki2004Apcn,HannaJosiahP.2018ISPE}. Whether the plug-in MSWL can achieve the semiparametric lower bound remains as future work. 

\subsection{Proofs of Theorems \ref{thm:optimal}, \ref{thm:optimal2}, and \ref{thm:optimal3}} \label{app:tabular_proof}

\begin{proof}[Proof of Theorem \ref{thm:optimal}]
	Recall that the estimator is written as $D^{\top}_{q1}D^{-1}_{q2}D_{q3}$,
	where
	\begin{align*}
	D_{q1} &= (1-\gamma)\rE_{\pre\times \epol}[\phi(s,a)] \\
	D_{q2} &= \rE_{n}[-\gamma \phi(s,a)\phi^{\top}(s',\epol)+\phi(s,a)\phi(s,a)^{\top}] \\
	D_{q3} &= \rE_{n}[r\phi(s,a)]. 
	\end{align*}
	Recall that $D^{-1}_{q2}D_{q3}=\hat{\beta}$ is seen as Z--estimator with a parametric model $q(s,a;\beta)=\beta^{\top}\phi(s,a)$. More specifically, the estimator $\hat{\beta}$ is given as a solution to 
	\begin{align*}
	\rE_n[\{r+\gamma q(s',\epol;\beta)-q(s,a;\beta)\}\phi(s,a)]=0. 
	\end{align*}
	Following the standard theory of Z--estimator \citep{VaartA.W.vander1998As}, the asymptotic MSE of $\beta$ is calculated as a sandwich estimator; 
	\begin{align*}
	\sqrt{n}(\hat{\beta}-\beta_0) &\stackrel{d}{\rightarrow} \mathcal{N}(0,D^{-1}_1D_2{D^{-1}_1}^{\top})
	\end{align*}
	where $\beta^{\top}_0\phi(s,a)=Q^{\epol}(s,a)$ and 
	\begin{align}
	D_1 &= \rE_{d_{\bpol}}[\phi(s,a)\{-\gamma \phi(s',\epol)+\phi(s,a)\}^{\top}]|_{\beta_0} ,\nonumber \\
	D_2 &= \mathrm{var}_{d_{\bpol}}[\{r+\gamma q(s',\epol;\beta)-q(s,a;\beta)\}\phi(s,a)  ]|_{\beta_0}  \label{eq:optim_proof_1}\\
	&= \rE_{d_{\bpol}}[\mathrm{var}_{d_{\bpol}}[r+\gamma V^{\epol}(s')-Q^{\epol}(s,a)|s,a]\phi(s,a)\phi^{\top}(s,a) ]\label{eq:optim_proof_2} \\
	&+  \mathrm{var}_{d_{\bpol}}[\rE_{d_{\bpol}}[r+\gamma V^{\epol}(s')-Q^{\epol}(s,a)|s,a]\phi(s,a)\phi^{\top}(s,a) ]  \nonumber \\
	&= \rE_{d_{\bpol}}[\mathrm{var}_{d_{\bpol}}[r+\gamma V^{\epol}(s')-Q^{\epol}(s,a)|s,a]\phi(s,a)\phi^{\top}(s,a) ]\label{eq:optim_proof_3}. 
	\end{align}
	Here, we use a variance decomposition to simplify $D_2$ from \eqref{eq:optim_proof_1} to \eqref{eq:optim_proof_2}.  We use a relation $\rE_{d_{\bpol}}[r+\gamma V^{\epol}(s')-Q^{\epol}(s,a)|s,a]=0$ from \eqref{eq:optim_proof_2} to \eqref{eq:optim_proof_3}. Then, by delta method,
	\begin{align*}
	\sqrt{n}(D_{q1}^{\top}D^{-1}_{q2}D_{q3}-R_{\epol})\stackrel{d}{\rightarrow} \mathcal{N}(0,D_{q1}^{\top} D^{-1}_1D_2{D^{-1}_1}^{\top}D_{q1}). 
	\end{align*}
	
	From now on, we simplify the expression $D^{\top}_{q1} D^{-1}_1D_2{D^{-1}_1}^{\top}D_{q1}$. 
	First, we observe 
	\begin{align*}
	[D_{q1}]_{|S|i_1+i_2} = (1-\gamma)\pre(S_{i_1})\epol(A_{i_2}|S_{i_1}), 
	\end{align*}
	where $[D_{q1}]_{|S|i_1+i_2}$ is a element corresponding $(S_{i_1},A_{i_2})$ of $D_{q1}$. In addition, 
	\begin{align*}
	D^{-1}_{1} &=  \rE_{d_{\bpol}}[\phi(s,a)\{-\gamma \phi(s',\epol)+\phi(s,a)\}^{\top}]^{-1} \\
	&=\rE_{d_{\bpol}}[\phi(s,a)\phi^{\top}(s,a)(-\gamma P^{\epol}+I)^{\top} ]^{-1}\\
	&=\{(-\gamma P^{\epol}+I)^{-1}\}^{\top}\rE_{d_{\bpol}}[\phi(s,a)\phi^{\top}(s,a) ]^{-1}.
	\end{align*}
	where $P^{\epol}$ is a transition matrix between $(s,a)$ and $(s',a')$, and $I$ is an identity matrix.   
	
	Therefore, by defining $g(s,a)=\mathrm{var}_{d_{\bpol}}[r+\gamma V^{\epol}(s')-Q^{\epol}(s,a)|s,a]$  and $(I-\gamma P^{\epol})^{-1}D_{q1}=D_3$
	the asymptotic variance is 
	\begin{align*}
	& D^{\top}_3\rE_{d_{\bpol}}[\phi(s,a)\phi^{\top}(s,a) ]^{-1}
	\rE_{d_{\bpol}}[g(s,a) \phi(s,a)\phi(s,a)^{\top}]
	\rE_{d_{\bpol}}[\phi(s,a)\phi^{\top}(s,a) ]^{-1}D_3\\
	&=  \sum_{\tilde s \in \mathcal{S},\tilde a\in \mathcal{A}} \{d_{\bpol}(\tilde s,\tilde a)\}^{-1}g(\tilde s,\tilde a)\{D^{\top}_3 I_{\tilde s,\tilde a}\}^2
	\end{align*}
	where $I_{\tilde s,\tilde a}$ is a $|S||A|$--dimensional vector, which the element corresponding $(\tilde s,\tilde a)$ is 1 and other elements are $0$. Noting $D^{\top}_3 I_{\tilde s,\tilde a}=d_{\epol,\gamma}(\tilde s,\tilde a)$, the asymptotic variance is 
	\begin{align*}
	&\sum_{\tilde s \in \mathcal{S},\tilde a\in \mathcal{A}} \{d_{\bpol}(\tilde s,\tilde a)\}^{-1}g(\tilde s,\tilde a)d^2_{\epol,\gamma}(\tilde s,\tilde a)\\
	&=\rE_{d_{\bpol}}[w^2_{\epol/\bpol}(s,a)\mathrm{var}_{d_{\bpol}}[r+\gamma V^{\epol}(s')-Q^{\epol}(s,a)|s,a]]\\
	&=\rE_{d_{\bpol}}[w^2_{\epol/\bpol}(s,a)(r+\gamma V^{\epol}(s')-Q^{\epol}(s,a))^{2}]. 
	\end{align*}
	This concludes the proof. 
\end{proof}

\begin{proof}[Proof of Theorem \ref{thm:optimal2}]
	
	Recall that $D^{-1}_{v2}D_{v3}=\hat{\beta}$ is seen as Z--estimator with a parametric model $v(s;\beta)=\beta^{\top}\phi(s)$. More specifically, the estimator $\hat{\beta}$ is given as a solution to
	\begin{align*}
	\rE_{n}\left[\frac{\epol(a|s)}{\bpol(a|s)}\{r+v(s';\beta)-v(s;\beta)\}\phi(s) \right]=0. 
	\end{align*}
	
	Following the standard theory of Z--estimator \citep{VaartA.W.vander1998As}, the asymptotic variance of $\beta$ is calculated as a sandwich estimator;
	\begin{align*}
	\sqrt{n}(\hat \beta-\beta_0)\stackrel{d}{\rightarrow}\mathcal{N}(0,D^{-1}_1D_2 (D^{-1}_1)^{\top}),
	\end{align*}
	where $\beta^{\top}_0\phi(s)=V^{\epol}(s)$ and 
	\begin{align*}
	D_1 &= \rE_{d_{\bpol}}[\phi(s)\{-\gamma \phi(s')+\phi(s)\}^{\top}]\\ 
	D_2 &= \mathrm{var}_{d_{\bpol}} \left[\frac{\epol(a|s)}{\bpol(a|s)}\{r+v(s';\beta)-v(s;\beta)\}\phi(s) \right]|_{\beta_0}\\
	&=\rE_{d_{\bpol}}\left[\mathrm{var}_{d_{\bpol}}\left[\frac{\epol(a|s)}{\bpol(a|s)}\{r+\gamma V^{\epol}(s';\beta)-V^{\epol}(s)\}|s\right]\phi(s)\phi^{\top}(s) \right].
	\end{align*}
	Then, by delta method, 
	\begin{align*}
	\sqrt{n}(D^{\top}_{v1}D^{-1}_{v2}D_{v3}-R_{\epol})\stackrel{d}{\rightarrow}\mathcal{N}(0, D^{\top}_{v1}D^{-1}_1D_2 (D^{-1}_1)^{\top}  D_{v1}).
	\end{align*}
	
	From now on, we simplify the expression $ D^{\top}_{v1}S^{-1}_1S_2 (S^{-1}_1)^{\top}  D_{v1}$. First, we observe 
	\begin{align*}
	[D_{v1}]_i = (1-\gamma)\pre(S_i),
	\end{align*}
	where $[D_{v1}]_i$ is $i$--th element. In addition, 
	\begin{align*}
	D^{-1}_1 &= \rE_{d_{\bpol}}[\phi(s)\{-\gamma \phi(s')+\phi(s) \}^{\top}]^{-1} \\
	&= \rE_{d_{\bpol}}[\phi(s)\phi^{\top}(s)\{-\gamma P^{\epol}+I\}^{\top} ]^{-1} \\
	&= (\{-\gamma P^{\epol}+I\}^{\top})^{-1}\rE_{d_{\bpol}}[\phi(s)\phi^{\top}(s)]^{-1},
	\end{align*}
	where $P^{\epol}$ is a transition matrix from the current state to the next state. 
	
	Therefore, by defining $g(s)=\mathrm{var}_{d_{\bpol}}\left[\frac{\epol(a|s)}{\bpol(a|s)}\{r+V^{\epol}(s')-V^{\epol}(s)\}|s\right]$ and $\{-\gamma P^{\epol}+I\}^{-1} D_{v1}=D_3$, the asymptotic variance is 
	\begin{align*}
	& D^{\top}_3 \rE_{d_{\bpol}(s)}[ \phi(s)\phi^{\top}(s) ]^{-1} \rE_{d_{\bpol}(s)}[g(s) \phi(s)\phi^{\top}(s) ] \rE_{d_{\bpol}(s)}[ \phi(s)\phi^{\top}(s) ] ^{-1} D_3 \\
	&=\sum_{\tilde s\in \Scal}d^{-1}_{\bpol(\tilde s)}g(\tilde s)\{D^{\top}_3 I_{\tilde s}\}^{2},
	\end{align*}
	where $I_{\tilde s}$ is $|S|$--dimensional vector, which the element corresponding $\tilde s$ is 1 and other elements are $0$. Noting $D^{\top}_3 I_{\tilde s}=d_{\epol,\gamma}(\tilde s)$, the asymptotic variance is 
	\begin{align*}
	\sum_{\tilde s\in \Scal}d^{-1}_{\bpol(\tilde s)}g(\tilde s)d^2_{\epol}(\tilde s)=
	\rE_{d_{\bpol}}\left[ \left\{\frac{d_{\epol,\gamma}(s)}{d_{\bpol}(s)} \right\}^{2} \mathrm{var}_{d_{\bpol}}\left[\frac{\epol(a|s)}{\bpol(a|s)}\{r+\gamma V^{\epol}(s')-V^{\epol}(s)\}|s\right]\right]. 
	\end{align*}
	
	Finally, we show this is larger than the semiparametric lower bound. This is seen as 
	\begin{align*}
	& \rE_{d_{\bpol}}\left[ \left\{\frac{d_{\epol,\gamma}(s)}{d_{\bpol}(s)} \right\}^{2} \mathrm{var}_{d_{\bpol}}\left[\frac{\epol(a|s)}{\bpol(a|s)}\{r+\gamma V^{\epol}(s')-V^{\epol}(s)\}|s\right]\right] \\
	& \geq  \rE_{d_{\bpol}}\left[ \left\{\frac{d_{\epol,\gamma}(s)}{d_{\bpol}(s)} \right\}^{2} \rE_{d_{\bpol}}\left[ \mathrm{var}_{d_{\bpol}}\left[\frac{\epol(a|s)}{\bpol(a|s)}\{r+\gamma V^{\epol}(s';\beta)-V^{\epol}(s)\}|s,a\right]\right]\right]\\
	& = \rE_{d_{\bpol}}\left[w_{\epol/\bpol}^{2}(s,a)\mathrm{var}[r+\gamma V^{\epol}(s')-Q^{\epol}(s,a)|s,a ]\right]. 
	\end{align*}
	Here, from the first line to the second line, we use a general inequality $\rvar[x]=\rvar[\rE[x|y]]+\rE[\rvar[x|y]]\geq \rE[\rvar[x|y]]$. 
\end{proof}

\begin{proof}[Proof of Theorem \ref{thm:optimal3}]
	
	By refining $g(s)=\rvar\left[\frac{\epol(a|s)}{\bpol(a|s)}\{r+V^{\epol}(s)\}|s\right]$ in the proof of Theorem \ref{thm:optimal2}, 
	the asymptotic variance is 
	\begin{align*}
	\sum_{\tilde s\in \Scal}d^{-1}_{\bpol(\tilde s)}g(\tilde s)d^2_{\epol,\gamma}(\tilde s)=
	\rE_{d_{\bpol}}\left[ \left\{\frac{d_{\epol,\gamma}(s)}{d_{\bpol}(s)} \right\}^{2} \mathrm{var}_{d_{\bpol}}\left[\frac{\epol(a|s)}{\bpol(a|s)}\{r+\gamma V^{\epol}(s')\}|s\right]\right]. 
	\end{align*}
	
	Then, we show this is larger than the semiparamatric lower bound. This is seen as 
	\begin{align*}
	\sum_{\tilde s\in \Scal}d^{-1}_{\bpol(\tilde s)}g(\tilde s)d^2_{\epol,\gamma}(\tilde s) &=
	\rE_{d_{\bpol}}\left[ \left\{\frac{d_{\epol,\gamma}(s)}{d_{\bpol}(s)} \right\}^{2} \mathrm{var}_{d_{\bpol}}\left[\frac{\epol(a|s)}{\bpol(a|s)}\{r+\gamma V^{\epol}(s')\}|s\right]\right] \\ 
	&\geq  \rE_{d_{\bpol}}\left[ \left\{\frac{d_{\epol,\gamma}(s)}{d_{\bpol}(s)} \right\}^{2}\rE_{\bpol}\left[ \mathrm{var}_{d_{\bpol}}\left[\frac{\epol(a|s)}{\bpol(a|s)}\{r+\gamma V^{\epol}(s')\}|s,a\right]\right]\right] \\
	& = \rE_{d_{\bpol}}\left[w_{\epol/\bpol}^{2}(s,a)\mathrm{var}[r+\gamma V^{\epol}(s')-Q^{\epol}(s,a)|s,a ]\right].  \tag*{\qedhere}
	\end{align*}
	
\end{proof}

\section{Experiments in the Function Approximation Setting}
\label{app:exp_fa}
In this section, we empirically evaluate our new algorithms MWL and MQL, and make comparison with MSWL \citep{liu2018breaking} and DualDICE \citep{nachum2019dualdice} in the function approximation setting.

\subsection{Setup}
We consider infinite-horizon discounted setting with $\gamma=0.999$, and test all the algorithms on CartPole, a control task with continuous state space and discrete action space. Based on the implementation of OpenAI Gym \citep{brockman2016openai}, we define a new state-action-dependent reward function and add small Gaussian noise with zero mean on the transition dynamics.

To obtain the behavior and the target policies, we first use the open source code\footnote{https://github.com/openai/baselines} of DQN to get a near-optimal $Q$ function, and then apply softmax on the $Q$ value divided by an adjustable temperature $\tau$:
\begin{equation}
    \pi(a|s)\propto \exp(\frac{Q(s,a)}{\tau})
\end{equation}
We choose $\tau=1.0$ as the behavior policy and $\tau=0.25, 0.5, 1.5, 2.0$ as the target policies. The training datasets are generated by collecting trajectories of the behavior policy with fixed horizon length 1000. If the agent visits the terminal states within 1000 steps, the trajectory will be completed by repeating the last state and continuing to sample actions.

\subsection{Algorithms Implementation}
We use the loss functions and the OPE estimators derived in this paper and previous literature \citep{liu2018breaking, nachum2019dualdice}, except for MWL, in which we find another equivalent loss function can work better in practice:

\begin{equation}
    L = \E_{d_{\pi_0}}[\gamma \Big(w(s,a) f(s',\pi)-w(s',a') f(s',a')\Big)] - (1-\gamma)\E_{d_0\times \pi_0}[w(s,a)f(s,a)] + (1-\gamma)\E_{d_0\times \pi}[f(s,a)]\label{mwl:second_loss}
\end{equation}

In all algorithms, we keep the structure of the neural networks the same, which have two hidden layers with 32 units in each and ReLU as activation function. Besides, the observation is normalized to zero mean and unit variance, and the batch size is fixed to 500. In MSWL, the normalized states are the only input, while in the others, the states and the actions are concatenated and fed together into neural networks.

As for DualDICE, we conduct evaluation based on the open source implementation\footnote{https://github.com/google-research/google-research/tree/master/dual\_dice}. The learning rate of $\nu-$network and $\zeta-$network are changed to be $\eta_\nu=0.0005$ and $\eta_\zeta=0.005$, respectively, after a grid search in $\eta_\nu\times \eta_\zeta \in \{0.0001, 0.0005, 0.001, 0.0015, 0.002\}\times\{0.001, 0.005, 0.01, 0.015, 0.02\}$.

In MSWL, we use estimated policy distribution instead of the true value to compute the policy ratio. To do this, we train a 64x64 MLP with cross-entropy loss until convergence to approximate the distribution of the behavior policy. The learning rate is set to be 0.0005.

Besides, we implement MQL, MWL and MSWL with $\mathcal{F}$ corresponding to a RKHS associated with kernel $K(\cdot, \cdot)$. The new loss function of MWL \eqref{mwl:second_loss} can be written as:
\begingroup
\allowdisplaybreaks
\begin{align*}
    L =& \gamma^2 \E_{s,a,s',\tilde{s}, \tilde{a},\tilde{s}'\sim d_{\pi_0}}[w(s,a)w(\tilde{s}, \tilde{a})\E_{a', \tilde{a}'\sim \pi}[K((s',a'), (\tilde{s}', \tilde{a}'))]] \\
    &+\gamma^2 \E_{s',a',\tilde{s}', \tilde{a}'\sim d_{\pi_0}}[w(s',a')w(\tilde{s}', \tilde{a}')K((s',a'), (\tilde{s}', \tilde{a}'))]\\
    &+(1-\gamma)^2 \E_{s,a,\tilde{s}, \tilde{a}\sim d_0\times \pi_0}[w(s,a)w(\tilde{s}, \tilde{a})K((s,a), (\tilde{s}, \tilde{a}))]\\
    &+(1-\gamma)^2 \E_{s,a,\tilde{s}, \tilde{a}\sim d_0\times \pi}[K((s,a), (\tilde{s}, \tilde{a}))]\\
    &-2\gamma^2 \E_{s,a,s',\tilde{s}', \tilde{a}'\sim d_{\pi_0}}[w(s,a)w(\tilde{s}', \tilde{a}')\E_{a'\sim \pi}[K((s',a'), (\tilde{s}', \tilde{a}'))]]\\
    &-2\gamma(1-\gamma) \E_{s,a,s'\sim d_{\pi_0}, \tilde{s}, \tilde{a}\sim d_0\times \pi_0}[w(s,a)w(\tilde{s}, \tilde{a})\E_{a'\sim \pi}[K((s',a'),(\tilde{s}, \tilde{a}))]]\\
    &+2\gamma(1-\gamma) \E_{s,a,s'\sim d_{\pi_0}, \tilde{s}, \tilde{a}\sim d_0\times \pi}[w(s,a)\E_{a'\sim \pi}[K((s',a'),(\tilde{s}, \tilde{a}))]]\\
    &+2\gamma(1-\gamma) \E_{s',a'\sim d_{\pi_0}, \tilde{s}, \tilde{a}\sim d_0\times \pi_0}[w(s',a')w(\tilde{s}, \tilde{a})K((s',a'),(\tilde{s}, \tilde{a}))]\\
    &-2\gamma(1-\gamma) \E_{s',a'\sim d_{\pi_0}, \tilde{s}, \tilde{a}\sim d_0\times \pi}[w(s',a')K((s',a'),(\tilde{s}, \tilde{a}))]\\
    &-2(1-\gamma)^2 \E_{s,a\sim d_0\times\pi_0, \tilde{s}, \tilde{a} \sim d_0\times \pi}[w(s,a) K((s,a),(\tilde{s}, \tilde{a}))]. \addtocounter{equation}{1}\tag{\theequation}
\end{align*}
\endgroup
We choose  the RBF kernel, defined as
\begin{equation}
    K(\vect{x}_i, \vect{x}_j)=\exp(-\frac{\|\vect{x}_i-\vect{x}_j\|^2_2}{2\sigma^2})
\end{equation}
where $\vect{x}_i,\vect{x}_j$ corresponds to state vectors in MSWL, and corresponds to the vectors concatenated by state and action in MWL and MQL. Denote $h$ as the median of the pairwise distance between $\vect{x}_i$, we set $\sigma$ equal to $h,\frac{h}{3}$ and $\frac{h}{15}$ in MSWL, MWL and MQL, respectively. The learning rates are fixed to 0.005 in these three methods.

In MSWL and MWL, to ensure that the predicted density ratio is non-negative, we apply $\log(1+\exp(\cdot))$ as the activation function in the last layer of the neural networks. Moreover, we normalize the ratio to have unit mean value in each batch, which works better.

\subsection{Results}
We generate $N$ datasets with different random seeds. For each dataset, we run all these four algorithms from the beginning until convergence, and consider it as one trial. Every 100 training iterations, the estimation of value function is logged, and the average over the last five logged estimations will be recorded as the result in this trial. For each algorithm, we report the normalized MSE, defined by
\begin{equation} \label{eq:rel_mse}
    \frac{1}{N}\sum_{i=1}^N \frac{(\hat{R}_{\epol}^{(i)}-R_{\epol})^2}{(R_{\bpol} - R_{\epol})^2}
\end{equation}
where $\hat{R}_{\epol}^{(i)}$ is the estimated return in the $i$-th trial; $R_{\epol}$ and $R_{\bpol}$ are the true expected returns of the target and the behavior policies, respectively, estimated by 500 on-policy Monte-Carlo trajectories (truncated at $H=10000$ steps to make sure $\gamma^H$ is sufficiently small). This normalization is very informative, as a na\"ive baseline that treats $R_{\epol} \approx R_{\bpol}$ will get $0.0$ (after taking logarithm), so any method that beats this simple baseline should get a negative score.  We conduct $N=25$ trials and plot the results in Figure \ref{fig:Exp_Function_Approx}.

\begin{figure*}[h!]
	\centering
	\begin{subfigure}[t]{0.5\textwidth}
        \centering
        \includegraphics[scale=0.3]{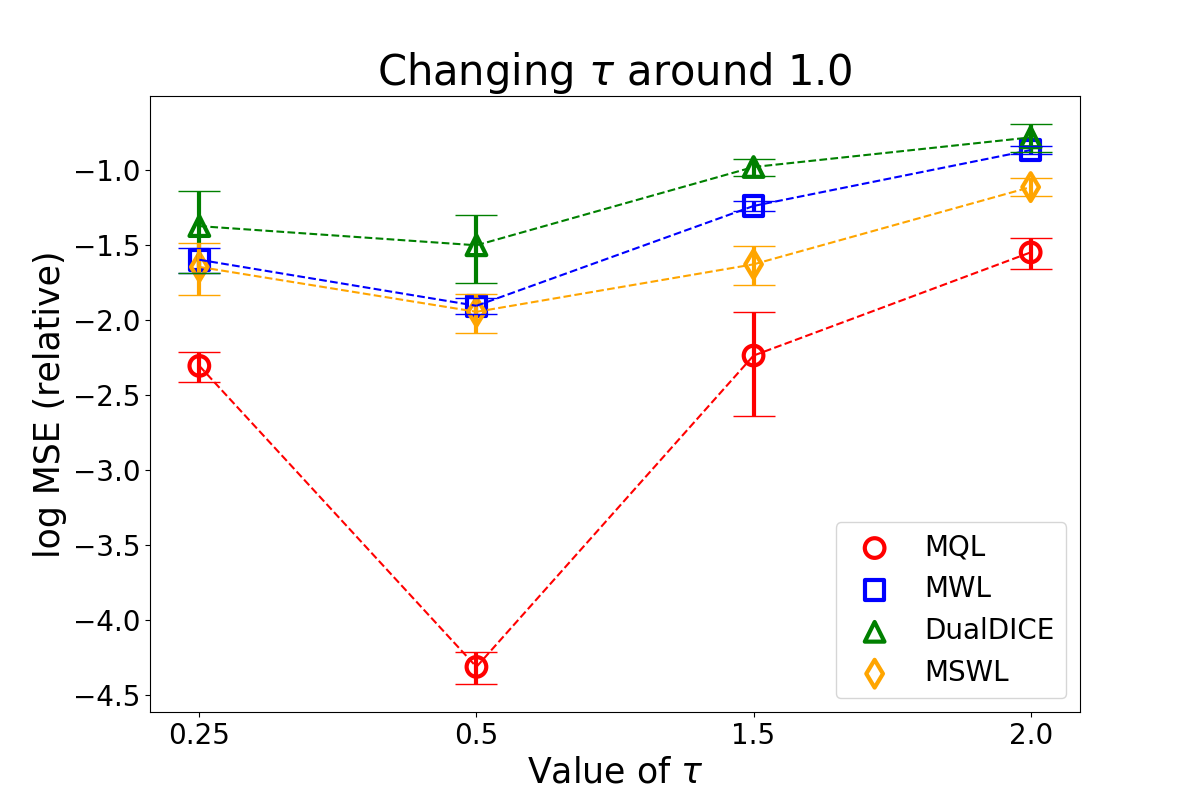}
        \caption*{}
    \end{subfigure}%
    ~ 
    \begin{subfigure}[t]{0.5\textwidth}
        \centering
        \includegraphics[scale=0.3]{Exp_Vary_DataSize.png}
        \caption*{}
    \end{subfigure}
	\caption{A Comparison of four algorithms: MQL (red circle), MWL (blue square), DualDICE (green triangle) and MSWL (yellow diamond). \textbf{Left}: We fix the number of trajectories to 200 and change the target policies. \textbf{Right}: We keep the target policy as $\tau=1.5$, and vary the number of samples.\label{fig:Exp_Function_Approx}}
\end{figure*}

\subsection{Error bars}
We denote $x_i=\frac{(\hat{R}_{\epol}^{(i)}-R_{\epol})^2}{(R_{\bpol} - R_{\epol})^2}$. As we can see, the result we plot (Eq.\eqref{eq:rel_mse}) is just the average of $N$ i.i.d.~random variables $\{x_i\}_{i=1}^n$. We plot twice the standard error of the estimation---which corresponds to $95\%$ confidence intervals---under logarithmic transformation. That is, the upper bound of the error bar is
$$
\log(\frac{1}{N}\sum_{i=1}^N x_i + \frac{2\sigma}{\sqrt{N}})
$$
and the lower bound of the error bar is $$
\log(\frac{1}{N}\sum_{i=1}^N x_i - \frac{2\sigma}{\sqrt{N}})
$$
where $\sigma$ is the sample standard deviation of $x_i$.

\section{Step-wise IS as a Special Case of MWL} \label{app:stepIS}
We show that step-wise IS \citep{precup2000eligibility} in discounted episodic problems can be viewed as a special case of MWL, and sketch the proof as follows. In addition to the setup in Section~\ref{sec:prelim}, we also assume that the MDP always goes to the absorbing state in $H$ steps from any starting state drawn from $\pre$. 
The data are trajectories generated by $\bpol$. We first convert the MDP into an equivalent \emph{history-based} MDP, i.e., a new MDP where the state is the history of the original MDP (absorbing states are still treated specially). We use $h_t$ to denote a history of length $t$, i.e., $h_t = (s_0, a_0, r_0, s_1, \ldots, s_t)$. Since the history-based MDP still fits our framework, we can apply MWL as-is to the history-based MDP. In this case, each data trajectory will be converted into $H$ tuples in the form of $(h_t, a_t, r_t, h_{t+1})$. 

We choose the following $\Fcal$ class for MWL, which is the space of \emph{all} functions over histories (of various lengths up to $H$). Assuming all histories have non-zero density under $\epol$ (this assumption can be removed), from Lemma~\ref{lem:ratio-estimation-informal} we know that the only $w$ that satisfies $\forall f\in\Fcal, ~\Lw(w,f)=0$ is
$$
\frac{d_{\epol, \gamma}(h_t, a_t)}{d_{\bpol}(h_t, a_t)} = \frac{(1-\gamma)\gamma^t}{1/H} \prod_{t'=0}^t \frac{\epol(a_{t'}|s_{t'})}{\bpol(a_{t'}|s_{t'})}. ~\footnote{Here the term $\frac{(1-\gamma)\gamma^t}{1/H}$ appears because a state at time step $t$ is discounted in the evaluation objective but its empirical frequency in the data is not. Other than that, the proof of this equation is precisely how one derives sequential IS, i.e., density ratio between histories is equal to the cumulative product of importance weights on actions.}
$$
Note that $\Rw[w]$ with such an $w$ is precisely the step-wise IS estimator in discounted episodic problems. Furthermore, the true marginalized importance weight in the original MDP $\frac{d_{\epol, \gamma}(s,a)}{d_{\bpol}(s,a)}$ is not feasible under this ``overly rich'' history-dependent discriminator class (see also related discussions in \citet{jiang2019value}).

\end{document}